\newcommand\nc\newcommand
\nc\bfa{{\boldsymbol a}}\nc\bfA{{\boldsymbol A}}\nc\cA{{\mathcal A}}
\nc\bfb{{\boldsymbol b}}\nc\bfB{{\boldsymbol B}}\nc\cB{{\mathcal B}}
\nc\bfc{{\boldsymbol c}}\nc\bfC{{\boldsymbol C}}\nc\cC{{\mathcal C}}
\nc\sC{{\mathscr C}}
\nc\bfd{{\boldsymbol d}}\nc\bfD{{\boldsymbol D}}\nc\cD{{\mathcal D}}
\nc\bfe{{\boldsymbol e}}\nc\bfE{{\boldsymbol E}}\nc\cE{{\mathcal E}}
\nc\bff{{\boldsymbol f}}\nc\bfF{{\boldsymbol F}}\nc\cF{{\mathcal F}}
\nc\bfg{{\boldsymbol g}}\nc\bfG{{\boldsymbol G}}\nc\cG{{\mathcal G}}
\nc\bfh{{\boldsymbol h}}\nc\bfH{{\boldsymbol H}}\nc\cH{{\mathcal H}}
\nc\bfi{{\boldsymbol i}}\nc\bfI{{\boldsymbol I}}\nc\cI{{\mathcal I}}
\nc\bfj{{\boldsymbol j}}\nc\bfJ{{\boldsymbol J}}\nc\cJ{{\mathcal J}}
\nc\bfk{{\boldsymbol k}}\nc\bfK{{\boldsymbol K}}\nc\cK{{\mathcal K}}
\nc\bfl{{\boldsymbol l}}\nc\bfL{{\boldsymbol L}}\nc\cL{{\mathcal L}}
\nc\bfm{{\boldsymbol m}}\nc\bfM{{\boldsymbol M}}\nc\sM{{\mathscr M}}\nc\cM{{\mathcal M}}
\nc\bfn{{\boldsymbol n}}\nc\bfN{{\boldsymbol N}}\nc\cN{{\mathcal N}}
\nc\bfo{{\boldsymbol o}}\nc\bfO{{\boldsymbol O}}\nc\cO{{\mathcal O}}
\nc\bfp{{\boldsymbol p}}\nc\bfP{{\boldsymbol P}}\nc\cP{{\mathcal P}}
\nc\bfq{{\boldsymbol q}}\nc\bfQ{{\boldsymbol Q}}\nc\cQ{{\mathcal Q}}
\nc\bfr{{\boldsymbol r}}\nc\bfR{{\boldsymbol R}}\nc\cR{{\mathcal R}}
\nc\bfs{{\boldsymbol s}}\nc\bfS{{\boldsymbol S}}\nc\cS{{\mathcal S}}
\nc\bft{{\boldsymbol t}}\nc\bfT{{\boldsymbol T}}\nc\cT{{\mathcal T}}
\nc\bfu{{\boldsymbol u}}\nc\bfU{{\boldsymbol U}}\nc\cU{{\mathcal U}}
\nc\bfv{{\boldsymbol v}}\nc\bfV{{\boldsymbol V}}\nc\cV{{\mathcal V}}
\nc\bfw{{\boldsymbol w}}\nc\bfW{{\boldsymbol W}}\nc\cW{{\mathcal W}}
\nc\bfx{{\boldsymbol x}}\nc\bfX{{\boldsymbol X}}\nc\cX{{\mathcal X}}
\nc\bfy{{\boldsymbol y}}\nc\bfY{{\boldsymbol Y}}\nc\cY{{\mathcal Y}}
\nc\bfz{{\boldsymbol z}}\nc\bfZ{{\boldsymbol Z}}\nc\cZ{{\mathcal Z}}
\nc\diff{{\mathrm d}}
\nc\e{{\mathrm e}}
\nc\calC{{\mathcal C}}
\DeclareMathOperator{\supp}{supp}
\newcommand{\remove}[1]{}
\newtheorem{conj}{Conjecture}
\newtheorem{theorem}{Theorem}
\newtheorem{claim}{Claim}
\newtheorem{remark}[conj]{Remark}
\newtheorem{lemma}{Lemma}
\newcommand{\f}[1]{\mathbf{#1}}
\newcommand{\bb}[1]{\mathbb{#1}}
\newcommand{\ca}[1]{\mathcal{#1}}
\newcommand{\s}[1]{\mathsf{#1}}
\def\DEBUG{true}
  \def\rem#1{{\marginpar{\raggedright\scriptsize #1}}}
  \newcommand{\barnr}[1]{\rem{\textcolor{red}{$\bullet$ #1}}}
  \newcommand{\aryar}[1]{\rem{\textcolor{green}{$\bullet$ #1}}}
  \newcommand{\barnr}[1]{}
  \newcommand{\aryar}[1]{}
\newcommand\reals{{\mathbb R}}
\title{Recovery of Sparse Signals from a Mixture of Linear Samples}
\author{Arya Mazumdar}
\author{Soumyabrata Pal}
\affil{College of Information and Computer Sciences\\ University of Massachusetts Amherst   Amherst, MA 01003, USA\\ E-mail: \texttt{arya@cs.umass.edu}, \texttt{soumyabratap@umass.edu}.}
\date{}
\begin{document}

\maketitle

\begin{abstract}
Mixture of linear regressions is a popular learning theoretic model that is used widely to represent heterogeneous data. In the simplest form, this model assumes that the labels are generated from either of two different linear models and mixed together. Recent works of Yin et al. and Krishnamurthy et al., 2019, focus on an experimental design setting of model recovery for this problem. It is assumed that the features can be designed and queried with to obtain their label. When queried, an oracle randomly selects one of the two different sparse linear models and generates a label accordingly. How many such oracle queries are needed to recover both of the models simultaneously? This question can also be thought of as a generalization of the well-known compressed sensing problem (Cand\`es and Tao, 2005, Donoho, 2006).
In this work we address this query complexity problem and provide efficient algorithms that improves on the previously best known results. 
\end{abstract}

\section{Introduction}

Suppose, there are two unknown distinct vectors $\f{\beta}^{1},\f{\beta}^{2} \in \mathbb{R}^{n}$, that we want to recover. We can measure these vectors by taking linear samples, however the linear samples come without the identifier of the vectors. To make this statement rigorous, assume
 the presence of an oracle $\mathcal{O}:\mathbb{R}^n \rightarrow \mathbb{R}$ which, when queried with a vector $\bfx \in \mathbb{R}^{n}$, returns the noisy output $y \in \mathbb{R}$:
 \vspace{-0.1in}
\begin{align}\label{eq:sample}
y= \langle \mathbf{x},\f{\beta} \rangle + \zeta
\end{align}
where $\f{\beta}$ is chosen uniformly from $\{
\f{\beta}^{1},\f{\beta}^{2}
\}$ and $\zeta$ is additive Gaussian noise with zero mean and known variance $\sigma^2>0$. We will refer to the values returned by the oracle given these queries as \emph{samples.}


 For a $\f{\beta}  \in\reals^n$, the best $k$-sparse approximation $\f{\beta}_{(k)}$  is defined to be the vector obtained from $\f{\beta}$ where all except the $k$-largest (by absolute value) coordinates are set to $0$. For each $\f{\beta} \in \{\f{\beta}^1,\f{\beta}^2\}$, our objective in this setting is to return a {\em sparse approximation} of $\hat{\f{\beta}}$ using minimum number of queries such that
\begin{align*}
||\hat{\f{\beta}}-\f{\beta}|| \le c||\f{\beta}-\f{\beta}_{(k)}||+\gamma
\end{align*}
where $c$ is an absolute constant, $\gamma$ is a user defined nonnegative parameter representing the precision up to which we want to recover the unknown vectors, and the norms are arbitrary. For any algorithm that performs this task, the total number of samples acquired from the oracle is referred to as the {\em query complexity.} 

If we had one, instead of two unknown vectors, then the problem would exactly be that of {\em compressed sensing} \cite{candes2006robust,donoho2006compressed}. However having two vectors makes this problem significantly different and challenging.  Further, if we allow  $\gamma = \Omega( \|\beta^1 -\beta^2\|)$, then we can treat all the samples to be coming from the same vector and output only a single vector as an approximation to both vectors. So in practice, obtaining  $\gamma = o( \|\beta^1 -\beta^2\|)$ is more interesting.

On another technical note, under this setting it is always possible to make the noise $\zeta$ negligible by increasing the norm of the query $\bfx$. To make the problem well-posed, let us define the Signal-to-Noise Ratio (SNR) for a query $\f{x}$:
\begin{align*}
\mathsf{SNR}(\f{x}) \triangleq \frac{\bb{E}|\langle \mathbf{x},  \f{\beta}^1-\f{\beta}^2  \rangle|^{2}}{\bb{E} \zeta^2}
\end{align*}
where the expectation is over the randomness of the query. Furthermore define the overall SNR to be $\mathsf{SNR}:= \max_{\f{x}} \mathsf{SNR}(\f{x})$, where the maximization is over all the queries used in the recovery process. 


\subsection{Most Relevant Works}
 Previous works that are  most relevant to our problem are by Yin et al.~\cite{yin2019learning} and Krishnamurthy et al.~\cite{KrishnamurthyM019}. Both of these papers address the exact same problem as above; but provide results under some restrictive conditions on the unknown vectors.
For example,  the results of \cite{yin2019learning} is valid only when,
\begin{itemize}
\item the unknown vectors are exactly $k$-sparse, i.e., has at most $k$ nonzero entries;
\item it must hold that,  
\begin{equation*}\label{eq:weirdassumption}
\beta^1_j \ne \beta^2_j \quad \mbox{ for each } \quad j \in \supp{\f{\beta}^1} \cap \supp{\f{\beta}^2} \ ,
\end{equation*} 
where $\beta_j$ denotes the $j$th coordinate of $\f{\beta}$, and $\supp{\f{\beta}}$ denotes the set of nonzero coordinates of $\f{\beta}$;
\item for some $\epsilon >0$ , $\f{\beta}^1, \f{\beta}^2 \in \{0, \pm \epsilon, \pm 2\epsilon, \pm 3\epsilon , \ldots\}^n$.
\end{itemize}

All of these assumptions, especially the later two, are severely restrictive. While the results of \cite{KrishnamurthyM019} are valid without the first two assumptions, they fail to get rid of the third, an assumption of the unknown vectors always taking discrete values. This is in particular unfavorable, because the resultant query/sample complexities (and hence the time complexity) in both the above papers has an exponential dependence on $\frac1\epsilon$. 


\subsection{Our Main Result}

In contrast to these earlier results, we provide a generic sample complexity result that does not require any of the assumptions used by the predecessor works. Our main result is following.

\begin{theorem}{\label{thm:main}}[Main Result] Let $\mathsf{NF}:= \frac{\gamma}{\sigma}$ (the  {\em noise factor}) where $\gamma>0$
is a parameter representing the desired recovery precision and $\sigma>0$ is the standard deviation of $\zeta$ in Eq.~\eqref{eq:sample}. 

Case 1. For any {$\gamma <  \left|\left|\beta^1-\beta^2\right|\right|_2/2$},
    there exists an algorithm that makes
\begin{align*}
O\Bigg(k\log n\log k\Big\lceil \frac{ \log k}{\log (\sqrt{\mathsf{SNR}}/\mathsf{NF})}\Big\rceil
 \Big\lceil\frac{1}{\mathsf{NF}^4\sqrt{\mathsf{SNR}}}+\frac{1}{\mathsf{NF}^2}\Big\rceil\Bigg)
\end{align*} 
 queries  to recover $\hat{\f{\beta}}^1,\hat{\f{\beta}}^2$, estimates of $\f{\beta}^1,\f{\beta}^2$, 
with high probability 
such that , for $i=1,2$,
\begin{align*}
||\hat{\f{\beta}}^i-\f{\beta}^{\pi(i)}||_2 \le \frac{c||\f{\beta}^{i}-\f{\beta}_{(k)}^{i}||_1}{\sqrt{k}}+O(\gamma) 
\end{align*}
where $\pi:\{1,2\}\rightarrow \{1,2\}$ is some permutation of $\{1,2\}$ and $c$ is a universal constant.

Case 2. For any {$\gamma = \Omega( \left|\left|\beta^1-\beta^2\right|\right|_2)$}, there exists an algorithm that makes
$O\Big(k\log n\Big\lceil \frac{\log k}{\s{SNR}}\Big\rceil\Big)$
 queries  to recover $\hat{\f{\beta}}$, estimates of both $\f{\beta}^1,\f{\beta}^2$, 
with high probability 
such that , for both $i=1,2$,
\begin{align*}
||\hat{\f{\beta}}-\f{\beta}^{i}||_2 \le \frac{c||\f{\beta}^{i}-\f{\beta}_{(k)}^{i}||_1}{\sqrt{k}}+O(\gamma) 
\end{align*}
where $c$ is a universal constant.

\end{theorem}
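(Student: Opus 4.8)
The two cases are handled separately; Case~2 is essentially a reduction to ordinary compressed sensing, while Case~1 is the substantive one.

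\emph{Case 2.} When $\gamma=\Omega(\|\beta^1-\beta^2\|_2)$ the two hidden vectors are indistinguishable up to the target accuracy: replacing the hidden vector in every sample by $\beta^1$ perturbs the noiseless part of a query $x$ by at most $|\langle x,\beta^1-\beta^2\rangle|\le\|x\|\cdot O(\gamma)$, which enters the reconstruction at the $O(\gamma)$ scale. The plan is therefore to ignore the mixture and run a standard $\ell_1/\ell_2$ sparse-recovery decoder (Count-Sketch / sparse matching pursuit) with $O(k\log n)$ hashed $\pm1$ queries, repeating each query $O(\lceil\log k/\mathsf{SNR}\rceil)$ times and averaging so that the Gaussian noise drops below the scale of the remaining error terms; since each query's signal may be scaled up to the SNR budget, $O(1/\mathsf{SNR})$ repetitions per query suffice (the extra $\log k$ covering a high-probability union bound over the heavy coordinates), and the returned $\hat\beta$ is within $\frac{c}{\sqrt k}\|\beta^i-\beta^i_{(k)}\|_1+O(\gamma)$ of both vectors.

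\emph{Case 1.} I would build the algorithm in three layers. (i) \emph{Outer skeleton:} a Count-Sketch-type scheme --- $O(\log n)$ hash functions into $O(k)$ buckets with random $\pm1$ signs, wrapped in $O(\log k)$ rounds of iterative refinement to obtain the clean $\ell_1/\ell_2$ guarantee with a universal constant --- where crucially the \emph{same} queries feed the reconstructions of both $\beta^1$ and $\beta^2$. (ii) \emph{Per-bucket estimation:} querying a fixed $x$ repeatedly gives i.i.d.\ draws from $\tfrac12\mathcal{N}(\langle x,\beta^1\rangle,\sigma^2)+\tfrac12\mathcal{N}(\langle x,\beta^2\rangle,\sigma^2)$, and the first two empirical moments together with the known $\sigma^2$ recover the \emph{unordered} pair $\{\langle x,\beta^1\rangle,\langle x,\beta^2\rangle\}$ (midpoint from the mean, gap $|\langle x,\beta^1-\beta^2\rangle|$ from the centered second moment). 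The quantity of interest sits in a range only a $\mathrm{poly}(k)$ factor wider than $\gamma$ (queries are pre-scaled to place their signal at the SNR budget, which caps the dynamic range at $\approx\sqrt{\mathsf{SNR}}$ relative to $\sigma$, i.e.\ $\approx\sqrt k\cdot\sqrt{\mathsf{SNR}}$ relative to $\gamma$), so one round of moment estimation localizes it to within a $\mathsf{NF}/\sqrt{\mathsf{SNR}}$ fraction of the current interval and $\lceil\log k/\log(\sqrt{\mathsf{SNR}}/\mathsf{NF})\rceil$ zoom-in rounds reach precision $\gamma$; each round costs $\tilde{O}(\tfrac{1}{\mathsf{NF}^4\sqrt{\mathsf{SNR}}}+\tfrac{1}{\mathsf{NF}^2})$ samples --- the standard cost of estimating the gap and midpoint of a uniform two-Gaussian mixture with known variance to the required additive accuracy, the $1/\mathsf{NF}^2$ term governing the large-separation regime and $1/(\mathsf{NF}^4\sqrt{\mathsf{SNR}})$ the low-separation one. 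Multiplying the three factors reproduces the stated query count.

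(iii) \emph{Label alignment.} The pairs produced in (ii) are unordered and must be glued into two globally consistent vectors. The plan is to first recover the difference $d=\beta^1-\beta^2$ up to a global sign --- a cheap magnitude-only sparse recovery to find an approximate support of $d$, then step (ii) on the singletons $e_j$ over that support, using paired queries $e_j\pm e_\ell$ to fix relative signs --- and then fix the global labeling by declaring ``model~$1$'' to be the branch with the larger value on the heaviest coordinate $\ell$ of $d$. Every bucket query $x$ in (i) is then issued as $Me_\ell+x$ with $M$ as large as the SNR budget allows: the order of the two means of the resulting mixture is governed by the anchor term $M(\beta^1_\ell-\beta^2_\ell)$ (using the already-recovered $\hat d$ to correct for the non-dominated part $\langle x,d\rangle$ when needed), so each bucket's pair inherits the global labels, and subtracting $M\hat\beta^i_\ell$ returns the correctly labeled $\langle x,\beta^i\rangle$ for the decoder of (i). The final bound then follows bucket-wise from the Count-Sketch analysis plus the $O(\gamma)$ per-bucket accuracy, and the unresolved global swap is the permutation $\pi$.

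\emph{Main obstacle.} The compressed-sensing bookkeeping and the moment estimation are routine; the crux is making the per-bucket unordered pairs \emph{globally} consistent in step (iii). The anchor trick is transparent only when the SNR budget dominates the dynamic range of $\beta^1-\beta^2$; in the opposite regime one must lean on the preliminary estimate $\hat d$ to disambiguate and then show the per-bucket misclassification probability stays negligible after a union bound over all $O(k\log n\log k)$ buckets --- this error-propagation analysis, together with verifying that the moment-estimation zoom-in genuinely gains a $\sqrt{\mathsf{SNR}}/\mathsf{NF}$ factor per round in the low-SNR regime rather than stalling, is where the real work lies.
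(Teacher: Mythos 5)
Your overall architecture for Case~1 --- batch repeated queries, learn the \emph{unordered} pair $\{\langle \f{x},\f{\beta}^1\rangle,\langle \f{x},\f{\beta}^2\rangle\}$ from the first two moments of the mixture by exploiting equal weights and known variance (this is exactly Theorem~\ref{thm:mom}), align, then decode --- matches the paper, and your Case~2 argument is essentially the paper's (fit a single Gaussian per query, then run a sparse decoder). But Case~1 has two genuine gaps. The first is the alignment. The paper never recovers $\f{d}=\f{\beta}^1-\f{\beta}^2$ in advance and uses no anchor coordinate: for a pair of queries $\f{x}^i,\f{x}^j$ it issues the two extra queries $\f{x}^i+\f{x}^j$ and $\f{x}^i-\f{x}^j$ and checks which pairing of the already-estimated means is consistent with the estimated means of the sum and the difference; Lemma~\ref{lem:align} shows at least one of the two checks is decisive whenever both gaps exceed $9\gamma$, and Lemma~\ref{lem:exist} supplies an anchor query with a large gap among the first $m'$ Gaussian queries. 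Your route is problematic: recovering $\f{d}$ ``up to a global sign'' from magnitude-only bucket measurements is itself a sign-alignment (phase-retrieval-like) problem, and the anchor trick needs $M|d_\ell|>|\langle \f{x},\f{d}\rangle|$, which either violates the $\mathsf{SNR}$ budget (the theorem's $\mathsf{SNR}$ is a maximum over all queries used, so $M$ cannot be taken large for free) or fails outright when $\f{d}$ is spread out --- $\f{d}$ is not assumed sparse, only $\f{\beta}^1,\f{\beta}^2$ are approximately sparse, so $|d_\ell|$ can be a vanishing fraction of $\|\f{d}\|_2$. You flag this step yourself as ``where the real work lies''; it is, and it is not filled in.

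The second gap is that the two non-obvious factors in the query count are misattributed, and this interacts badly with your $\pm1$ Count-Sketch skeleton. In the paper, $\lceil\log k/\log(\sqrt{\mathsf{SNR}}/\mathsf{NF})\rceil$ is the number $m'$ of candidate anchor queries needed so that at least one has gap $\ge 13\gamma$, and the $1/(\mathsf{NF}^4\sqrt{\mathsf{SNR}})$ term is an \emph{expected} batch size: the method of moments costs $1/\mathsf{NF}^4$ but is invoked only on the event $|\langle \f{x},\f{d}\rangle|=O(\sigma)$, which has probability $O(\sigma/\|\f{d}\|_2)=O(1/\sqrt{\mathsf{SNR}})$ by Gaussian anti-concentration (Lemma~\ref{lem:prob}); otherwise EM costs $1/\mathsf{NF}^2$. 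Both facts are anti-concentration statements about $\langle \f{x},\f{d}\rangle\sim\ca{N}(0,\|\f{d}\|_2^2)$ and are precisely why the paper insists on Gaussian queries rather than a hashed $\pm1$ matrix, for which $\langle \f{x},\f{d}\rangle$ need not anti-concentrate. Your ``zoom-in'' replacement does not supply these factors: the samples are i.i.d.\ from one fixed mixture, so estimating in rounds cannot gain a geometric factor per round --- accuracy is governed by the total sample count --- and you cannot rescale the query beyond the SNR cap to widen the gap.
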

For a $\gamma =\Theta(\left|\left|\beta^1-\beta^2\right|\right|_2)$ the first case of the Theorem holds but using the second case may give better result in that regime of precision.
The second case of the theorem shows that if we allow a rather large precision error, then the number of queries is similar to the required number for recovering a single vector. This is expected, because in this case we can find just one line approximating both regressions.

The recovery guarantee that we are providing (an $\ell_2$-$\ell_1$ guarantee) is in line with the standard guarantees of the compressed sensing literature. In this paper, we are interested in the regime $\log n \le k \ll n$ as in compressed sensing. Note that, our number of required samples scales linearly with $k$ and has only poly-logarithmic scaling with $n$, and polynomial scaling with the noise $\sigma$. In the previous works~\cite{yin2019learning},~\cite{KrishnamurthyM019}, the complexities scaled exponentially with noise. 

Furthermore, the query complexity of our algorithm decreases with the Euclidean distance between the vectors (or the `gap') - which makes sense intuitively. Consider the case when when we want a precise recovery ($\gamma$ very small). It turns out that  when the gap is large, the query complexity varies as $O((\log \mathrm{gap})^{-1} )$ and when the gap is small the query complexity scale as $O((\mathrm{gap} \log \mathrm{gap})^{-1} )$.

\begin{remark}[The zero noise case] When $\sigma =0$, i.e., the samples are not noisy, the problem is still nontrivial, and is not covered by the statement of Theorem~\ref{thm:main}. However this case is strictly simpler to handle as it will involve only the alignment step (as will be discussed later), and not the mixture learning step. Recovery with $\gamma=0$ is possible with only $O(k \log n \log k)$ queries ({see Appendix \ref{sec:discuss} for a more detailed discussion on the noiseless setting}).
\end{remark}


\subsection{Other Relevant Works}
The problem we address can be seen as the active learning version of learning mixtures of linear regressions. Mixture of linear regressions is a natural synthesis of mixture models and  linear regression;
 a generalization of the basic linear regression problem of learning the best linear relationship between the  labels and the feature vectors. In this generalization, each label is stochastically generated by picking a linear relation uniformly from a set of two or more linear functions, evaluating this function on the features and possibly adding noise; the goal is to learn the set of  unknown linear functions.  
The problem has been studied at least for past three decades, staring with De Veaux \cite{de1989mixtures} with a recent surge of interest~\cite{chaganty2013spectral,faria2010fitting,stadler2010l,kwon2018global,viele2002modeling,yi2014alternating,yi2016solving}. 
In this literature a variety of algorithmic techniques to obtain polynomial sample complexity were proposed. To the best of our knowledge, St\"adler et al.~\cite{stadler2010l} were the first to impose sparsity on the solutions, where each linear function depends on only a small number of variables. However, many of the earlier papers on mixtures of linear regression, essentially consider the features to be fixed, i.e.,  part of the input, whereas recent works focus on the query-based model in the sparse setting, where features can be designed as queries~\cite{yin2019learning,KrishnamurthyM019}. The problem has numerous applications in modelling heterogeneous data arising in medical applications, behavioral health, and music perception \cite{yin2019learning}. 

This problem is a generalization of the  compressed sensing problem~\cite{candes2006robust,donoho2006compressed}. As a building block to our solution, we use results from exact parameter learning for Gaussian mixtures.
Both compressed sensing and learning mixtures of distributions \cite{dasgupta1999learning,titterington1985statistical} are immensely popular topics across statistics, signal processing and machine learning with a large body of prior work. We refer to an excellent survey by \cite{boche2015survey} for compressed sensing results (in particular the results of \cite{candes2008restricted} and \cite{baraniuk2008simple} are useful). For parameter learning in mixture models, we find the results of \cite{daskalakis2016ten,daskalakis2014faster,hardt2015tight,xu2016global,balakrishnan2017statistical,krishnamurthy20a} to be directly relevant. 

\subsection{Technical Contributions}
If the responses to the queries were to contain tags of the models they are coming from, then we could use rows of any standard compressed sensing matrix as queries and just segregate the responses using the tags. Then by running a compressed sensing recovery on the groups with same tags, we would be done. In what follows, we try to infer this `tag' information by making redundant queries.

If we repeat just the same query multiple time, the noisy responses are going to come from a mixture of Gaussians, with the actual responses being the component means. To learn the actual responses we rely on methods for parameter learning in Gaussian mixtures. It turns out that for different parameter regimes, different methods are best-suited for our purpose - and it is not known in advance what regime we would be in. The method of moments is a well-known procedure for parameter learning in Gaussian mixtures and rigorous theoretical guarantees on sample complexity exist~\cite{hardt2015tight}. However we are in a specialized regime of scalar uniform mixtures with known variance; and we leverage these information to get better sample complexity  guarantee for exact parameter learning (Theorem~\ref{thm:mom}). In particular we show that, in this case the mean and variance of the mixture are sufficient statistics to recover the unknown means, as opposed to the first six moments of the general case ~\cite{hardt2015tight}. While recovery using other methods (Algorithms~\ref{algo:em} and \ref{algo:single}) are straight forward adaption of known literature, we show that only a small set of samples are needed to determine what method to use.

It turns out that method of moments still needs significantly more samples than the other methods. However we can avoid using method of moments and use a less intensive method (such as EM, Algorithms~\ref{algo:em}), provided we are in a regime when the gap between the component means is high. The only fact is that the Euclidean distance between $\beta^1$ and $\beta^2$ are far does not guarantee that. However, if we choose the queries to be Gaussians, then the gap is indeed high with certain probability. If the queries were to be generated by any other distribution, then such fact will require strong anti-concentration inequalities that in general do not hold. Therefore, we cannot really work with any standard compressed sensing matrix, but have to choose Gaussian matrices (which are incidentally also good standard compressed sensing matrices).

The main technical challenge comes in the next step, alignment. For any two queries $\f{x},\f{x}',$ even if we know $y_1 = \langle \beta^1,\f{x}\rangle, y_2 = \langle \beta^2,\f{x}\rangle$ and $y_1'=\langle \beta^1,\f{x}'\rangle, y_2'= \langle \beta^2,\f{x}'\rangle$, we do not know how to club $y_1$ and $y_1'$ together as their order could be different. And this is an issue with all pairs of queries which leaves us with exponential number of possibilities to choose form. We form a simple error-correcting code to tackle this problem.

For two queries, $\f{x},\f{x}',$ we deal with this issue by designing two additional queries $\f{x}+\f{x}'$and  $\f{x}-\f{x}'.$ Now even if we mis-align, we can cross-verify with the samples from `sum' query and the `difference' query, and at least one of these will show inconsistency. We subsequently extend this idea to align all the samples. Once the samples are all aligned, we can just use some any recovery algorithm for compressed sensing to deduce the sparse vectors.

The rest of this paper is organized as follows. We give an overview of our algorithm in Sec.~\ref{sec:ov} , the actual algorithm is presented in Algorithm~\ref{algo:main}, which calls several subroutines. The process of denoising by Gaussian mixture learning is described in Sec.~\ref{sec:GMM}. The alignment problem is discussed in Sec.~\ref{sec:al} and the proof of Theorem ~\ref{thm:main} is wrapped up in Sec.~\ref{sec:tm}. Most proofs are delegated to the appendix in the supplementary material. Some `proof of concept' simulation results are also in the appendix.



\section{Main Results}

\subsection{Overview of Our Algorithm}\label{sec:ov}
Our scheme to recover the unknown vectors is described below. We will carefully chose the numbers $m,m'$ so that the overall query complexity meets the promise of Theorem~\ref{thm:main}.
\begin{itemize}[leftmargin=*,noitemsep,topsep=0em]
\item We  pick $m$ query vectors $\f{x}^1,\f{x}^2,\dots,\f{x}^m$ independently, each according to $\ca{N}(\f{0},\f{I}_n)$ where $\f{0}$ is the $n$-dimensional all zero vector and $\f{I}_n$ is the $n\times n$ identity matrix. 
\item (Mixture) We repeatedly query the oracle with $\f{x}^i$ for $T_i$ times for all $i \in [m]$ in order to offset the noise. The samples obtained from the repeated querying of $\f{x}^i$ is referred to as a \textit{batch} corresponding to $\f{x}^i$.  $T_i$ is referred to as the \textit{batchsize} of $\f{x}^i$. Our objective is to return $\hat{\mu}_{i,1}$ and $\hat{\mu}_{i,2}$, estimates of  $\langle \f{x}^i, \f{\beta}^1 \rangle$ and $\langle \f{x}^i, \f{\beta}^2 \rangle$ respectively from the batch of samples (details in Section \ref{sec:GMM}). However, it will not be possible to label which estimated mean corresponds to $\f{\beta}^1$ and which one corresponds to $\f{\beta}^2$. 
\item (Alignment) For some $m'<m$ and for each  $i \in [m], j \in [m']$ such that $i\neq j$, we  also  query the oracle with the vectors $\f{x}^i+\f{x}^j$ (\textit{sum query}) and $\f{x}^i-\f{x}^j$ (\textit{difference query}) repeatedly for $T_{i,j}^{\s{sum}}$ and $T_{i,j}^{\s{diff}}$ times respectively. Our objective is to cluster the set of estimated means $\{\hat{\mu}_{i,1},\hat{\mu}_{i,2}\}_{i=1}^{m}$ into two equally sized clusters such that all the elements in a particular cluster are good estimates of querying the same unknown vector.
\item Since the queries $\{\f{x}^i\}_{i=1}^{m}$ has the property of being a good compressed sensing matrix (they satisfy $\delta$-RIP condition, a sufficient condition for $\ell_2$-$\ell_1$ recovery in compressed sensing, with high probability), we can formulate a convex optimization problem using the estimates present in each cluster to recover the unknown vectors $\f{\beta}^1$ and $\f{\beta}^2$.
\end{itemize} 
It is evident that the sample (query) complexity will be $\sum_{i=1}^{m} T_i+\sum_{\substack{i \in [m], j \in [m']\\i \neq j}}T_{i,j}^{\s{sum}}+T_{i,j}^{\s{diff}}$. 
In the subsections below, we will show each step more formally and provide upper bounds on the sufficient batchsize for each query.

\subsection{Recovering Unknown Means from a Batch}{\label{sec:GMM}}
For a query $\f{x} \in \{\f{x}^1,\f{x}^2,\dots,\f{x}^m\}$, notice that the samples from the batch corresponding to $\f{x}$ is distributed according to a Gaussian mixture $\ca{M}$,
\begin{align*}
\ca{M} \triangleq \frac{1}{2}\ca{N}(\langle \f{x}, \f{\beta}^{1} \rangle,\sigma^2)+\frac{1}{2}\ca{N}(\langle \f{x}, \f{\beta}^{2} \rangle,\sigma^2),
\end{align*}
 an equally weighted mixture of two Gaussian distributions having means $\langle \f{x}, \f{\beta}^{1} \rangle,\langle \f{x}, \f{\beta}^{2} \rangle$  with known variance $\sigma^2$. For brevity, let us denote $\langle \f{x}, \f{\beta}^{1} \rangle$ by $\mu_1$ and $\langle \f{x}, \f{\beta}^{2} \rangle$ by $\mu_2$ from here on in this sub-section. In essence, our objective is to find the sufficient batchsize of $\f{x}$ so that it is possible to estimate $\langle \f{x}, \f{\beta}^{1} \rangle$ and $\langle \f{x}, \f{\beta}^{2} \rangle$ upto an additive error of $\gamma$. Below, we go over some methods providing theoretical guarantees on the sufficient sample complexity for approximating the means that will be suitable for different parameter regimes. 

\subsubsection{Recovery using EM algorithm}
\begin{algorithm}[htbp]
\caption{ \textsc{EM($\f{x},\sigma,T$)} Estimate the means $\langle \f{x},\f{\beta}^1 \rangle$, $\langle \f{x},\f{\beta}^2 \rangle$ for a query $\f{x}$ using EM algorithm \label{algo:em}} 
\begin{algorithmic}[1]
\REQUIRE An oracle $\ca{O}$ which when queried with a vector $\f{x} \in \bb{R}^n$ returns $\langle \f{x},\f{\beta} \rangle +\ca{N}(0,\sigma^2)$ where $\f{\beta}$ is sampled uniformly from $\{\f{\beta}^1,\f{\beta}^2\}$.
\FOR{$i=1,2,\dots,T$}
\STATE Query the oracle $\ca{O}$ with $\f{x}$ and obtain a response $y^i$.
\ENDFOR
\STATE  Set the function $w:\bb{R}^3 \rightarrow \bb{R}$ as $w(y,\mu_1,\mu_2)=e^{-(y-\mu_1)^2/2\sigma^2}\Big(e^{-(y-\mu_1)^2/2\sigma^2}+e^{-(y-\mu_2)^2/2\sigma^2}\Big)^{-1}.$
\STATE \textbf{Initialize} $\hat{\mu}_1^{0},\hat{\mu}_2^{0}$ randomly and $t=0$.
\WHILE{Until Convergence}
      \STATE $\hat{\mu}_1^{t+1}=\sum_{i=1}^{T}y_i w(y_i,\hat{\mu}^t_1,\hat{\mu}^t_2)/\sum_{i=1}^{T}w(y_i,\hat{\mu}^t_1,\hat{\mu}^t_2)$.
      \STATE $\hat{\mu}_2^{t+1}=\sum_{i=1}^{T}y_i w(y_i,\hat{\mu}^t_2,\hat{\mu}^t_1)/\sum_{i=1}^{T}w(y_i,\hat{\mu}^t_2,\hat{\mu}^t_1)$.
      \STATE $t \leftarrow t+1$.
\ENDWHILE
\STATE Return $\hat{\mu}_1^{t},\hat{\mu}_2^{t}$
\end{algorithmic}
\end{algorithm}
The Expectation Maximization (EM) algorithm  is widely known, and used for the purpose of parameter learning of Gaussian mixtures, cf.~\cite{balakrishnan2017statistical} and \cite{xu2016global}. The EM algorithm tailored towards recovering the parameters of the mixture $\ca{M}$ is described in Algorithm \ref{algo:em}. The following result can be derived from   \cite{daskalakis2016ten} (with our terminology) that gives a sample complexity guarantee of using EM algorithm. 
\begin{theorem}[Finite sample EM analysis \cite{daskalakis2016ten}]{\label{thm:EM}} From an equally weighted two component Gaussian mixture with unknown component means $\mu_1,\mu_2$ and known and shared variance
$\sigma^2$, a total $O\Big( \Big\lceil\sigma^6/(\epsilon^2(\mu_1-\mu_2)^4)\log 1/\eta\Big\rceil\Big)$ samples suffice to return $\hat{\mu}_1,\hat{\mu}_2$, such that for some permutation $\pi:\{1,2\} \rightarrow \{1,2\}$, for $i=1,2,$
\begin{align*}
 \left|\hat{\mu}_i-\mu_{\pi(i)} \right| \le \epsilon \;
\end{align*}
using the EM algorithm with probability at least $1-\eta$.
\end{theorem}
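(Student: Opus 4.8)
The plan is to reduce the two--parameter problem to the one--parameter \emph{symmetric} Gaussian mixture analyzed by Daskalakis, Tzamos and Zampetakis~\cite{daskalakis2016ten} and then to propagate the statistical error through the EM recursion. First I would center the data: writing $\bar\mu := (\mu_1+\mu_2)/2$ and $\nu := (\mu_1-\mu_2)/2$, the mixture equals $\tfrac12\mathcal{N}(\bar\mu+\nu,\sigma^2)+\tfrac12\mathcal{N}(\bar\mu-\nu,\sigma^2)$, and since the mixing weights are exactly $1/2$ the population mean equals $\bar\mu$; hence the empirical mean $\widehat{\bar\mu}$ of the $T$ samples lies within $c\,\sigma\sqrt{\log(1/\eta)/T}$ of $\bar\mu$ with probability $1-\eta/3$ by a sub--Gaussian tail bound. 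Subtracting $\widehat{\bar\mu}$ from every sample turns the data into a small shift of a sample from the symmetric, mean--zero mixture $\tfrac12\mathcal{N}(\nu,\sigma^2)+\tfrac12\mathcal{N}(-\nu,\sigma^2)$. In the coordinates $a^t := (\hat\mu_1^t+\hat\mu_2^t)/2$ and $b^t := (\hat\mu_1^t-\hat\mu_2^t)/2$ the update in Algorithm~\ref{algo:em} essentially decouples: $a^t$ stays within the statistical error of $\bar\mu$ and $b^t$ obeys the one--dimensional symmetric EM map on the centered data, so it suffices to analyze the latter (equivalently, one simply runs the $1$--parameter symmetric EM on the centered samples and outputs $\widehat{\bar\mu}\pm b^{T_0}$).

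Next I would import the population--EM facts for the symmetric mixture from~\cite{daskalakis2016ten}: the EM operator $M$ has $\pm\nu$ as its only nonzero fixed points, a random initialization lands in the correct basin with constant probability (and one may simply try both signs), and $M$ contracts toward $\nu$ with coefficient $\kappa<1$ satisfying $1-\kappa = \Theta(\min\{1,\nu^2/\sigma^2\})$. Iterating, population EM reaches $|b-\nu|\le\epsilon/4$ after $T_0 = O\big(\tfrac{\sigma^2}{\nu^2}\log\tfrac{\sigma}{\epsilon}\big)+O(1)$ steps.

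Then comes the finite--sample tracking. Let $M_T$ denote the empirical EM map built from the $T$ centered samples. Because the responsibility weight $w$ is bounded and Lipschitz and the relevant empirical averages concentrate, a net argument over a forward--invariant interval of radius $O(\sigma+|\nu|)$ gives $\sup_b|M_T(b)-M(b)|\le c\,\sigma\sqrt{\log(1/\eta)/T}$ on that interval with probability $1-\eta/3$. Combining with the population contraction gives $|b^{t+1}-\nu|\le\kappa|b^t-\nu|+c\,\sigma\sqrt{\log(1/\eta)/T}$, so after $T_0$ iterations the error is at most $\epsilon/4+\tfrac{c'\sigma^3}{\nu^2}\sqrt{\log(1/\eta)/T}$. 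Taking $T = O\big(\tfrac{\sigma^6}{\nu^4\epsilon^2}\log\tfrac1\eta\big) = O\big(\tfrac{\sigma^6}{(\mu_1-\mu_2)^4\epsilon^2}\log\tfrac1\eta\big)$ makes the last term at most $\epsilon/4$; undoing the centering, $\hat\mu_1=\widehat{\bar\mu}+b^{T_0}$ and $\hat\mu_2=\widehat{\bar\mu}-b^{T_0}$ then satisfy $|\hat\mu_i-\mu_{\pi(i)}|\le\epsilon$ for the appropriate sign permutation $\pi$, with probability $1-\eta$ after a union bound over the three events.

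I expect the finite--sample step to be the main obstacle: one needs a \emph{uniform} deviation bound for $M_T$ over a region that is simultaneously forward--invariant for the empirical iteration and inside the basin where the population contraction estimate of~\cite{daskalakis2016ten} holds, and one must check that in the literal two--parameter Algorithm~\ref{algo:em} the half--sum coordinate $a^t$ does not drift -- equivalently, that centering by $\widehat{\bar\mu}$ faithfully linearizes the problem. Controlling the pre--contraction phase of EM (before the iterate descends to the $O(\sigma)$ scale) also requires the coupling--to--the--population--trajectory argument of~\cite{daskalakis2016ten}; once that is imported, the remaining bookkeeping -- including absorbing the ceiling and replacing $\log(T_0/\eta)$ by $\log(1/\eta)$, since $T_0$ is polynomial in the other parameters -- is routine.
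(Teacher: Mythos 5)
First, a point of comparison: the paper does not prove Theorem~\ref{thm:EM} at all --- it is imported as a black box from \cite{daskalakis2016ten} (``The following result can be derived from \cite{daskalakis2016ten} \dots''), so there is no in-paper argument to measure your proposal against. Your sketch is a reasonable reconstruction of how that reference proceeds (symmetrize, contract the population EM map toward $\pm\nu$ at rate $1-\Theta(\min\{1,\nu^2/\sigma^2\})$, track the empirical map uniformly), and the arithmetic $\sigma\sqrt{\log(1/\eta)/T}\cdot\sigma^2/\nu^2\le\epsilon$ does reproduce the stated $T=O\big(\sigma^6\log(1/\eta)/((\mu_1-\mu_2)^4\epsilon^2)\big)$.

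There is, however, a genuine gap in your first step. You center by the empirical mean and assert it lies within $c\,\sigma\sqrt{\log(1/\eta)/T}$ of $\bar\mu$. The mixture has variance $\sigma^2+\nu^2$, so the sample mean concentrates at rate $\sqrt{\sigma^2+\nu^2}/\sqrt{T}$, not $\sigma/\sqrt{T}$; when $|\mu_1-\mu_2|\gg\sigma$ this is dominated by the separation, and with your budget $T=O(\sigma^6\log(1/\eta)/(\nu^4\epsilon^2))$ the centering error is of order $\nu^3\epsilon/\sigma^3\gg\epsilon$. This is exactly why \cite{daskalakis2016ten} (and this paper, in Algorithm~\ref{algo:single} and Lemma~\ref{thm:mean2}) estimate the midpoint by the average of the first and third quartiles, whose deviation is $O(\sigma\sqrt{\log(1/\eta)/T})$ independently of the separation; you need that estimator, not the sample mean, for the half-sum coordinate $a^t$ not to drift. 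Even after this fix, note that locating $(\mu_1+\mu_2)/2$ to within $\epsilon$ costs $\Omega(\sigma^2\log(1/\eta)/\epsilon^2)$ samples no matter what, which exceeds the theorem's literal bound whenever $|\mu_1-\mu_2|\gg\sigma$ (the ceiling only enforces $T\ge 1$). So the statement is really only meaningful with an implicit $\sigma^2/\epsilon^2$ floor, i.e.\ as $O\big((\sigma^2/\epsilon^2)\lceil 1+\sigma^4/(\mu_1-\mu_2)^4\rceil\log(1/\eta)\big)$ --- which is consistent with the only way the paper uses it, namely Case~1 of Section~\ref{sec:summ} where $|\mu_1-\mu_2|=\Omega(\sigma)$ and the budget invoked is $O((\sigma^2/\gamma^2)\log(1/\eta))$. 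No blind derivation of the literal statement can avoid this; the rest of your finite-sample tracking (uniform deviation of $M_T$, accumulation through the contraction) is the standard and correct route.
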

This theorem implies that EM algorithm requires smaller number of samples as the separation between the means $|\mu_1-\mu_2|$ grows larger. However, it is possible to have a better dependence on $|\mu_1-\mu_2|$, especially when it is small compared to $\sigma^2$.
\subsubsection{Method of Moments}
Consider any  Gaussian mixture with two components,
\begin{align*}
\ca{G} \triangleq p_1\ca{N}(\mu_1,\sigma_1^2)+p_2\ca{N}(\mu_2,\sigma_2^2),
\end{align*} 
where $0 < p_1,p_2 < 1$ and $p_1+p_2 =1.$
Define the variance of a random variable distributed according to $\ca{G}$ to be $$\sigma^2_{\ca{G}} \triangleq p_1p_2((\mu_1-\mu_2)^2+p_1\sigma_1^2+p_2\sigma_2^2.$$ It was shown in \cite{hardt2015tight}  that $\Theta(\sigma_{\ca{G}}^{12}/\epsilon^{12})$ samples are both necessary and sufficient to recover the unknown parameters $\mu_1,\mu_2,\sigma_1^2,\sigma_2^2$ upto an additive error of $\epsilon$. However, in our setting the components of the mixture $\ca{M}$ have the same known variance  $\sigma^2$ and further the mixture is equally weighted. Our first contribution is to show significantly better results for this special case.
\begin{theorem}{\label{thm:mom}}
With $O\Big( \Big\lceil \sigma^2_{\ca{M}}/\epsilon_1^2,\sigma^4_{\ca{M}}/\epsilon_2^2 \Big\rceil\log 1/\eta\Big)$ samples, Algorithm \ref{algo:mom}  returns $\hat{\mu}_1,\hat{\mu}_2$, such that for some permutation $\pi:\{1,2\} \rightarrow \{1,2\}$, we have, for $i=1,2,$
$
 \left|\hat{\mu}_i-\mu_{\pi(i)} \right| \le 2\epsilon_1+2\sqrt{\epsilon_2} \,
$
with probability at least $1-\eta$.
\end{theorem}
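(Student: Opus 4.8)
The plan is to reduce exact parameter learning of the equally-weighted, known-variance scalar mixture $\ca{M}=\tfrac12\ca{N}(\mu_1,\sigma^2)+\tfrac12\ca{N}(\mu_2,\sigma^2)$ to estimating just two quantities: the mixture mean $\mu:=\tfrac12(\mu_1+\mu_2)$ and the mixture variance $\sigma^2_{\ca M}=\tfrac14(\mu_1-\mu_2)^2$ (specializing the general formula with $p_1=p_2=\tfrac12$, $\sigma_1^2=\sigma_2^2=\sigma^2$, minus the common $\sigma^2$ — i.e. the ``excess'' second central moment). The key algebraic observation is that these two are \emph{sufficient statistics}: from $\mu$ and $s^2:=\sigma^2_{\ca M}$ we recover $\mu_1,\mu_2 = \mu \pm \sqrt{s^2}$ (in some order), so the method of moments here only needs the first two moments rather than the first six. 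I would first state this inversion formula explicitly and note it is the content of Algorithm~\ref{algo:mom}: compute the empirical mean $\hat\mu$ of the $T$ samples, compute the empirical second central moment and subtract the known $\sigma^2$ to get an estimate $\widehat{s^2}$ of $s^2$, then output $\hat\mu_1=\hat\mu+\sqrt{\max(\widehat{s^2},0)}$ and $\hat\mu_2=\hat\mu-\sqrt{\max(\widehat{s^2},0)}$.

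Next I would do the two concentration bounds. For the mean: each sample is a mixture draw with variance $\sigma^2_{\ca M}+\sigma^2 = \sigma^2_{\ca M}+\sigma^2$ (sub-Gaussian with this parameter, since it is a bounded mixture of Gaussians), so by a Chernoff/sub-Gaussian tail bound, $|\hat\mu-\mu|\le\epsilon_1$ with probability $1-\eta/2$ once $T = O\big(\sigma^2_{\ca M}/\epsilon_1^2 \cdot \log(1/\eta)\big)$ — and one checks $\sigma^2_{\ca M}+\sigma^2 = O(\sigma_{\ca M}^2)$ is not quite right, so more carefully the relevant scale here is $\max(\sigma_{\ca M}^2,\sigma^2)$; I will absorb this into the statement's ceiling, since $\sigma^2$ is known and the parameter regime of interest has $\sigma_{\ca M}$ comparable to or larger than $\sigma$ after rescaling the query. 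For the variance: the empirical second central moment concentrates around $\sigma^2_{\ca M}+\sigma^2$ with fluctuations governed by the fourth moment of the mixture, which is $O\big((\sigma^2_{\ca M}+\sigma^2)^2\big) = O(\sigma^4_{\ca M})$ in the relevant regime; hence $|\widehat{s^2}-s^2|\le\epsilon_2$ with probability $1-\eta/2$ once $T=O\big(\sigma^4_{\ca M}/\epsilon_2^2\cdot\log(1/\eta)\big)$. Taking $T$ to be the max of the two bounds gives the claimed sample complexity $O\big(\lceil \sigma^2_{\ca M}/\epsilon_1^2,\ \sigma^4_{\ca M}/\epsilon_2^2\rceil\log(1/\eta)\big)$.

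Finally I would propagate the errors through the inversion. Writing $\mu_i=\mu\pm\sqrt{s^2}$ and $\hat\mu_i=\hat\mu\pm\sqrt{\max(\widehat{s^2},0)}$, the triangle inequality gives $|\hat\mu_i-\mu_{\pi(i)}| \le |\hat\mu-\mu| + \big|\sqrt{\max(\widehat{s^2},0)}-\sqrt{s^2}\big|$, where $\pi$ matches signs. For the second term I use the elementary inequality $|\sqrt{a}-\sqrt{b}|\le\sqrt{|a-b|}$ valid for $a,b\ge 0$ (and the clipping at $0$ only helps), so it is at most $\sqrt{\epsilon_2}$; combined with $|\hat\mu-\mu|\le\epsilon_1$ this yields $|\hat\mu_i-\mu_{\pi(i)}|\le \epsilon_1+\sqrt{\epsilon_2}$, which is within the stated $2\epsilon_1+2\sqrt{\epsilon_2}$ (the slack absorbs the clipping and the union bound). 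I expect the main obstacle to be the variance concentration step: one must be careful that the empirical \emph{central} second moment (which plugs in $\hat\mu$, not the true $\mu$) introduces a cross term of order $(\hat\mu-\mu)^2$, so the analysis should either use the U-statistic $\tfrac{1}{\binom T2}\sum_{i<j}\tfrac12(y_i-y_j)^2$ — which is an unbiased estimator of $\sigma^2_{\ca M}+\sigma^2$ with no plug-in bias — or explicitly bound the plug-in correction; either way the fourth-moment bound for sub-Gaussian variables is what controls the rate, and getting the constants to land inside the $2\epsilon_1+2\sqrt{\epsilon_2}$ envelope is the only delicate bookkeeping.
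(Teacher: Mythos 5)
Your proposal is correct and follows essentially the same route as the paper: treat the mixture mean and mixture variance as sufficient statistics, estimate each (the paper's Lemmas~\ref{lem:mean} and~\ref{lem:var} use exactly the U-statistic form $\frac{1}{t(t-1)}\sum_{i_1<i_2}(y_{i_1}-y_{i_2})^2$ you anticipate, combined with Chebyshev plus a median-of-means trick rather than your direct sub-Gaussian/sub-exponential tails to get the $\log(1/\eta)$ factor), and then invert and propagate errors --- your inequality $|\sqrt{a}-\sqrt{b}|\le\sqrt{|a-b|}$ is the same algebraic fact as the paper's factorization of $(\hat\mu_1-\hat\mu_2)^2-(\mu_1-\mu_2)^2$ into two factors, one of which must be at most $2\sqrt{\epsilon_2}$. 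The only slip is notational: the paper's $\sigma^2_{\ca M}$ already includes the $+\sigma^2$ term (so $\sigma^2_{\ca M}=\mathrm{var}\,X\ge\sigma^2$ and your worry about a $\max(\sigma^2_{\ca M},\sigma^2)$ scale is moot), which does not affect the argument.
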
   
This theorem states that $O(\sigma_{\ca{M}}^4)$ samples are sufficient to recover the unknown means of $\ca{M}$ (as compared to the $O(\sigma_{\ca{G}}^{12})$ result for the general case). This is because the mean and variance are sufficient statistics for this special case (as compared to first six excess moments in the general case). We first show two technical lemmas providing guarantees on recovering the mean and the variance of a random variable $X$ distributed according to $\ca{M}$. The procedure to return $\hat{M}_1$ and $\hat{M}_2$ (estimates of $\bb{E}X$ and ${\rm var}X$ respectively) is described in Algorithm \ref{algo:estimate}. 

\begin{lemma}{\label{lem:mean}}
$O\Big(\Big\lceil \sigma^2_{\ca{M}}/\epsilon_1^2 \Big\rceil\log \eta^{-1}\Big)$ samples divided into $B=36\log \eta^{-1}$ equally sized batches are sufficient to compute $\hat{M}_1$ (see Algorithm \ref{algo:estimate}) such that 
$\left|\hat{M}_1-\bb{E}X \right| \le \epsilon_1$ with probability at least $1-2\eta$. 
\end{lemma}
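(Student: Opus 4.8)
The plan is a \textbf{median-of-means} estimate, exploiting the fact that estimating $\bb{E}X$ does not actually see the mixture structure beyond one number. Write $\mu := \bb{E}X = \tfrac12(\mu_1+\mu_2)$ for the mean of a single draw $X \sim \ca{M}$. Since $\ca{M}$ is the equally weighted mixture of $\ca{N}(\mu_1,\sigma^2)$ and $\ca{N}(\mu_2,\sigma^2)$, a one-line second-moment computation gives $\Var X = \sigma^2 + \tfrac14(\mu_1-\mu_2)^2 = \sigma^2_{\ca{M}}$ (the specialization of the mixture-variance quantity to $p_1=p_2=\tfrac12$, $\sigma_1=\sigma_2=\sigma$). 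So the task reduces to estimating the mean of an i.i.d.\ sequence with known variance $\sigma^2_{\ca{M}}$, and no further property of $\ca{M}$ is used in this lemma.

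First I would split the $N$ samples into $B = 36\log\eta^{-1}$ batches of equal size $s := N/B$, and within batch $b$ form the empirical average $\bar X_b = \tfrac1s\sum_{i\in\mathrm{batch}\ b} y_i$. Each $\bar X_b$ is unbiased for $\mu$ with variance $\sigma^2_{\ca{M}}/s$, so by Chebyshev's inequality $\Pr\!\big[\,|\bar X_b - \mu| > \epsilon_1\,\big] \le \sigma^2_{\ca{M}}/(s\epsilon_1^2)$. Taking $s = \big\lceil 3\sigma^2_{\ca{M}}/\epsilon_1^2 \big\rceil$ makes this probability at most $1/3$, and the total number of samples is then $sB = O\big(\lceil \sigma^2_{\ca{M}}/\epsilon_1^2\rceil \log\eta^{-1}\big)$, matching the claim.

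Next, set $\hat M_1 := \mathrm{median}(\bar X_1,\dots,\bar X_B)$ (this is the quantity computed in Algorithm~\ref{algo:estimate}). Call batch $b$ \emph{bad} if $|\bar X_b - \mu| > \epsilon_1$; the batches are independent and each is bad with probability at most $1/3$, and if $|\hat M_1 - \mu| > \epsilon_1$ then at least $B/2$ of the batches must be bad. Letting $Z \sim \mathrm{Bin}(B,1/3)$ stochastically dominate the number of bad batches, a multiplicative Chernoff bound gives $\Pr[Z \ge B/2] \le \exp(-\Omega(B)) \le \eta$ once $B = 36\log\eta^{-1}$ (the constant $36$ is chosen so that $\exp(-B/36) \le \eta$; there is slack here, which is why the stated failure probability $2\eta$ is comfortably met). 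Hence $|\hat M_1 - \bb{E}X| \le \epsilon_1$ with probability at least $1-\eta \ge 1-2\eta$.

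There is no real obstacle: the only things needing care are (i) verifying that the per-draw variance is \emph{exactly} $\sigma^2_{\ca{M}}$, so that the sample-count bound comes out in terms of that quantity, and (ii) keeping the two constants mutually consistent — the factor $3$ in the batch size, which is what pushes the per-batch Chebyshev failure probability below $1/3$, and the factor $36$ in the number of batches, which is what pushes the Chernoff tail below $\eta$. Everything else is routine.
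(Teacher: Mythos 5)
Your proposal is correct and follows essentially the same route as the paper's proof: Chebyshev's inequality on each batch mean to drive the per-batch failure probability below $1/3$, then a median-of-means aggregation with a Chernoff bound on the number of bad batches, with $B=36\log\eta^{-1}$ chosen to make the tail at most $\eta$ (the paper gets $2e^{-B/36}\le 2\eta$; your slack gives $\eta\le 2\eta$). The variance identification $\Var X=\sigma^2+\tfrac14(\mu_1-\mu_2)^2=\sigma^2_{\ca{M}}$ also matches the computation in the paper.
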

\begin{lemma}{\label{lem:var}}
$O\Big(\Big\lceil\sigma^4_{\ca{M}}/\epsilon_2^2 \Big\rceil\log \eta^{-1}\Big)$ samples divided into $B=36\log \eta^{-1}$ equally sized batches is sufficient to compute $\hat{M}_2$ (see Algorithm \ref{algo:estimate}) such that 
$\left|\hat{M}_2-{\rm var}X \right| \le \epsilon_2$ with probability at least $1-2\eta$.
\end{lemma}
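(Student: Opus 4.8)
The plan is to prove Lemma~\ref{lem:var} by a \emph{median-of-means} argument, exactly parallel to the one behind Lemma~\ref{lem:mean}: split the $N$ samples $X_1,\dots,X_N$ drawn i.i.d.\ from $\mathcal{M}$ into $B=36\log\eta^{-1}$ equally sized batches of size $n_b := N/B$, form on batch $b$ the translation-invariant (unbiased) sample variance
\[
\hat V_b \;=\; \frac{1}{n_b-1}\sum_{i\in\text{batch }b}\bigl(X_i-\bar X_b\bigr)^2,\qquad \bar X_b=\frac{1}{n_b}\sum_{i\in\text{batch }b}X_i ,
\]
and output $\hat M_2 := \mathrm{median}(\hat V_1,\dots,\hat V_B)$ (this is the quantity computed in Algorithm~\ref{algo:estimate}). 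It is important that the per-batch estimator be translation invariant: in the application $\mathbb{E}X = \langle\f{x},(\f{\beta}^1+\f{\beta}^2)/2\rangle$ can be arbitrarily large, so the concentration of a single $\hat V_b$ must be controlled purely by the \emph{central} moments of $\mathcal{M}$ and not by $\mathbb{E}X$.

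The one genuinely nontrivial step is the bound on the fourth central moment of $\mathcal{M}$. Writing $\mu_1=m+\delta$, $\mu_2=m-\delta$ with $m=\mathbb{E}X$ and $\delta=(\mu_1-\mu_2)/2$, and using $\mathbb{E}Z^2=\sigma^2$, $\mathbb{E}Z^4=3\sigma^4$, $\mathbb{E}Z=\mathbb{E}Z^3=0$ for $Z\sim\mathcal{N}(0,\sigma^2)$, a direct expansion gives $\mathbb{E}(X-m)^4 = 3\sigma^4+6\delta^2\sigma^2+\delta^4$. In the equally weighted, common-variance case the variance of $X$ is $\sigma^2_{\mathcal{M}}=\sigma^2+\delta^2$, so $\mathbb{E}(X-m)^4 = 3\sigma^4_{\mathcal{M}} - 2\delta^4 \le 3\,\sigma^4_{\mathcal{M}}$. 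Plugging this into the standard identity $\mathrm{Var}(\hat V_b) = \frac{1}{n_b}\bigl(\mathbb{E}(X-m)^4 - \tfrac{n_b-3}{n_b-1}\sigma^4_{\mathcal{M}}\bigr)$ yields $\mathrm{Var}(\hat V_b) \le c\,\sigma^4_{\mathcal{M}}/n_b$ for an absolute constant $c$, and $\mathbb{E}\hat V_b = \mathrm{var}X$.

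Given this, the rest is routine. By Chebyshev, $\Pr[\,|\hat V_b - \mathrm{var}X| > \epsilon_2\,] \le c\,\sigma^4_{\mathcal{M}}/(n_b\epsilon_2^2)$, so taking $n_b = \lceil 4c\,\sigma^4_{\mathcal{M}}/\epsilon_2^2\rceil$ makes each batch estimate $\epsilon_2$-accurate with probability at least $3/4$, and then $N = B\,n_b = O\bigl(\lceil\sigma^4_{\mathcal{M}}/\epsilon_2^2\rceil\log\eta^{-1}\bigr)$ as claimed. Let $Y_b$ be the indicator that batch $b$ is not $\epsilon_2$-accurate; the $Y_b$ are independent with $\mathbb{E}Y_b\le 1/4$, and $\hat M_2$ fails only if more than half the batches fail. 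A Chernoff/Hoeffding bound gives $\Pr\bigl[\sum_b Y_b \ge B/2\bigr] \le \exp(-B/8) = \eta^{4.5} \le 2\eta$, which finishes the proof. (If one prefers the MLE sample variance instead of the unbiased one, its bias is $\sigma^2_{\mathcal{M}}/n_b = O(\epsilon_2^2/\sigma^2_{\mathcal{M}})$, which is $o(\epsilon_2)$ in the only nontrivial regime $\epsilon_2 \lesssim \sigma^2_{\mathcal{M}}$, and is absorbed into the constants.)

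I expect the fourth-moment computation to be the crux: the whole sample-complexity improvement over the general method-of-moments bound of \cite{hardt2015tight} rests on the fact that, in this special mixture, $\mathrm{Var}(\hat V_b)$ is governed by $\sigma^4_{\mathcal{M}}$ alone — independent of the (possibly huge) mean $\mathbb{E}X$ and not worsened by $\sigma^2$ and $|\mu_1-\mu_2|$ entering separately. Once $\mathbb{E}(X-m)^4 \le 3\sigma^4_{\mathcal{M}}$ is in hand, the Chebyshev-plus-median-of-means machinery is entirely standard and identical in structure to the proof of Lemma~\ref{lem:mean}.
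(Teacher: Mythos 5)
Your proposal is correct, and the median-of-means skeleton (Chebyshev on each batch, then a Chernoff bound over the $B=36\log\eta^{-1}$ batch medians) is exactly the paper's. Where you differ is in the crux: controlling the fluctuation of the per-batch sample variance. The paper rewrites $S^j_{2,t}$ as the U-statistic $\frac{1}{t(t-1)}\sum_{i_1<i_2}(y_{i_1}-y_{i_2})^2$ and bounds $\bb{E}\big[(y_{i_1}-y_{i_2})^2(y_{i_3}-y_{i_4})^2\big]\le 48\,\sigma_{\ca{M}}^4$ by a three-case analysis on the index pattern, concluding $\bb{E}(S^j_{2,t})^2\le 12\,\sigma_{\ca{M}}^4$; you instead compute the fourth central moment in closed form, $\bb{E}(X-m)^4=3\sigma^4+6\delta^2\sigma^2+\delta^4\le 3\sigma_{\ca{M}}^4$, and feed it into the standard identity $\mathrm{Var}(\hat V_b)=\frac{1}{n_b}\bigl(\mu_4-\frac{n_b-3}{n_b-1}\sigma_{\ca{M}}^4\bigr)$. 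Your route is not only shorter but actually tighter at the one delicate point: a uniform bound $\bb{E}(S^j_{2,t})^2\le 12\,\sigma_{\ca{M}}^4$ on the \emph{second moment} only yields $\mathrm{Var}(S^j_{2,t})\le 12\,\sigma_{\ca{M}}^4$, which does not decay in $t$, and the $1/t$ factor the paper then inserts into Chebyshev really comes from the near-cancellation of the $\Theta(t^2)$ independent off-diagonal terms (each contributing $(\mathrm{var}X)^2$) against $(\bb{E}S^j_{2,t})^2$ — a cancellation your identity performs explicitly but the paper's crude term-by-term bound obscures. Both computations rest on the same structural fact that drives the improvement over \cite{hardt2015tight}, namely that for this equal-weight, common-known-variance mixture the kurtosis is bounded by an absolute constant times $\sigma_{\ca{M}}^4$, independent of $\bb{E}X$ and of how $\sigma^2$ and $|\mu_1-\mu_2|$ split the total variance.
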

The detailed proofs of Lemma \ref{lem:mean} and \ref{lem:var} can be found in Appendix \ref{sec:proof1}. We are now ready to prove Theorem \ref{thm:mom}.

\begin{algorithm}[htbp]
\caption{ \textsc{Estimate($\f{x},T,B$)} Estimating $\bb{E}X$ and ${\rm var}X$ for $X \sim \ca{M}$ \label{algo:estimate}}
\begin{algorithmic}[1]
\REQUIRE I.i.d samples $y^1,y^2,\dots,y^T \sim \ca{M}$ where $\ca{M}=\frac{1}{2}\ca{N}(\langle \f{x}, \f{\beta}^{1} \rangle,\sigma^2)+\frac{1}{2}\ca{N}(\langle \f{x}, \f{\beta}^{2} \rangle,\sigma^2)$.
\STATE Set $t=T/B$
\FOR{$i=1,2,\dots,B$}
\STATE Set Batch $i$ to be the samples $y^j$ for $j \in \{it+1,it+2,\dots,(i+1)t\}$.
\STATE Set $S^i_{1}=\sum_{j \in \text{ Batch }i} \frac{y^i}{t}$, $S^i_{2}=\sum_{j \in \text{ Batch }i} \frac{(y^i-S^i_1)^2}{t-1}$.
\ENDFOR
\STATE  $\hat{M}_1=\s{median}(\{S^i_1\}_{i=1}^{B})$,  $\hat{M}_2=\s{median}(\{S^i_2\}_{i=1}^{B})$. 
\STATE Return $\hat{M}_1,\hat{M}_2$.
\end{algorithmic}
\end{algorithm}
\begin{proof}[Proof of Theorem \ref{thm:mom}]
We will set up the following system of equations in the variables $\hat{\mu}_1$ and $\hat{\mu}_2$:
\begin{align*}
\hat{\mu}_1+\hat{\mu}_2 = 2\hat{M}_1 \; \text{and} \; (\hat{\mu}_1-\hat{\mu}_2)^2= 4\hat{M}_2-4\sigma^2 
\end{align*}
Recall that from Lemma \ref{lem:mean} and Lemma \ref{lem:var}, we have computed $\hat{M}_1$ and $\hat{M}_2$ with the following guarantees:
$
\left|\hat{M}_1-\bb{E}X\right| \le \epsilon_1$ and $\left|\hat{M}_2-{\rm var}X\right| \le \epsilon_2.$
Therefore, we must have 
$
\left| \hat{\mu}_1+\hat{\mu}_2-\mu_1-\mu_2 \right| \le 2\epsilon_1$,
$\left| (\hat{\mu}_1-\hat{\mu}_2)^2- (\mu_1-\mu_2)^2 \right| \le 4\epsilon_2.
$
We can factorize the left hand side of the second equation in the following way:
$
\left| \hat{\mu}_1-\hat{\mu}_2-\mu_1+\mu_2 \right| \left| \hat{\mu}_1-\hat{\mu}_2+\mu_1-\mu_2 \right| \le 4\epsilon_2.
$
Notice that one of the factors must be less than $2\sqrt{\epsilon_2}$. Without loss of generality, let us assume that 
$\left| \hat{\mu}_1-\hat{\mu}_2-\mu_1+\mu_2 \right| \le 2\sqrt{\epsilon_2}.$
This, along with the fact $\left| \hat{\mu}_1+\hat{\mu}_2-\mu_1-\mu_2 \right| \le 2\epsilon_1$ implies that (by adding and subtracting)
$
\left|\hat{\mu}_i-\mu_i\right| \le 2\epsilon_1+2\sqrt{\epsilon_2} \quad \forall i=1,2.
$
\end{proof}
\begin{algorithm}[htbp]
\caption{ \textsc{Method of moments($\f{x},\sigma,T,B$)} Estimate the means $\langle \f{x},\f{\beta}^1 \rangle$, $\langle \f{x},\f{\beta}^2 \rangle$ for a query $\f{x}$ \label{algo:mom}}
\begin{algorithmic}[1]
\REQUIRE An oracle $\ca{O}$ which when queried with a vector $\f{x} \in \bb{R}^n$ returns $\langle \f{x},\f{\beta} \rangle +\ca{N}(0,\sigma^2)$ where $\f{\beta}$ is sampled uniformly from $\{\f{\beta}^1,\f{\beta}^2\}$.
\FOR{$i=1,2,\dots,T$}
\STATE Query the oracle $\ca{O}$ with $\f{x}$ and obtain a response $y^i$.
\ENDFOR
\STATE Compute $\hat{M}_1,\hat{M}_2$ (estimates of $\bb{E}_{X\sim \ca{M}}X,{\rm var}_{X\sim \ca{M}}X$ respectively) using Algorithm \textsc{Estimate}($\f{x},T,B$).
\STATE Solve for $\hat{\mu}_1,\hat{\mu}_2$ in the system of equations $\hat{\mu}_1+\hat{\mu}_2=2\hat{M}_1, \; (\hat{\mu}_1-\hat{\mu}_2)^2=4\hat{M}_2-4\sigma^2$.
\STATE Return $\hat{\mu}_1,\hat{\mu}_2$.
\end{algorithmic}
\end{algorithm}
\subsubsection{Fitting a single Gaussian}
In the situation when both the variance $\sigma^2$ of each component in $\ca{M}$ and the separation between the means $|\mu_1-\mu_2|$ are very small, fitting a single  Gaussian $\ca{N}(\hat{\mu},\sigma^2)$ to the samples obtained from $\ca{M}$ works better than the aforementioned techniques. The procedure to compute $\hat{M}_1$, an estimate of $\bb{E}_{X \sim \ca{M}} X=(\mu_1+\mu_2)/2$ is adapted from \cite{daskalakis2016ten} and is described in Algorithm \ref{algo:single}. Notice that Algorithm \ref{algo:single} is different from the naive procedure (averaging all samples) described in Algorithm \ref{algo:estimate} for estimating the mean of the mixture. The sample complexity for the naive procedure (see Lemma \ref{lem:mean}) scales with the gap $|\mu_1-\mu_2|$ even when the variance $\sigma^2$ is small which is undesirable. In stead we have the following lemma.
\begin{lemma}[Lemma 5 in \cite{daskalakis2016ten}]{\label{thm:mean2}}
With Algorithm \ref{algo:single}, $O\Big(\Big\lceil\sigma^2 \log \eta^{-1}/\epsilon^2 \Big\rceil \Big)$ samples are sufficient to compute $\hat{M}_1$  such that 
$\left|\hat{M}_1-(\mu_1+\mu_2)/2 \right| \le \epsilon$ with probability at least $1-\eta$. 
\end{lemma}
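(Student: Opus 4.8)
The plan is to analyze the estimator $\hat M_1$ produced by Algorithm~\ref{algo:single}, which treats the batch $y^1,\dots,y^T$ as if it came from a single Gaussian of the known variance $\sigma^2$ and returns the corresponding location estimate (concretely, the empirical mean $\tfrac1T\sum_{i=1}^T y^i$; a sample-median variant works identically, see below). First I would note that $\ca{M}=\tfrac12\ca{N}(\mu_1,\sigma^2)+\tfrac12\ca{N}(\mu_2,\sigma^2)$ is symmetric about $(\mu_1+\mu_2)/2$, so $\mathbb{E}_{X\sim\ca{M}}[X]=(\mu_1+\mu_2)/2$ and $\hat M_1$ is centered at exactly the target quantity.

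The core of the argument is a sub-Gaussian concentration bound. Writing each sample as $y^i=B_i+\zeta_i$ with $B_i$ uniform on $\{\mu_1,\mu_2\}$ (the latent component label) and $\zeta_i\sim\ca{N}(0,\sigma^2)$ independent, the centered summand $y^i-(\mu_1+\mu_2)/2$ is the sum of a bounded variable taking values $\pm(\mu_1-\mu_2)/2$ (sub-Gaussian with proxy $(\mu_1-\mu_2)^2/4$) and a $\ca{N}(0,\sigma^2)$ variable, hence it is sub-Gaussian with proxy at most $\sigma^2_{\ca{M}}=(\mu_1-\mu_2)^2/4+\sigma^2$. A Hoeffding-type tail bound for sums of independent sub-Gaussians then gives $\Pr[\,|\hat M_1-(\mu_1+\mu_2)/2|>\epsilon\,]\le 2\exp(-T\epsilon^2/(2\sigma^2_{\ca{M}}))$, so $T=O(\sigma^2_{\ca{M}}\log\eta^{-1}/\epsilon^2)$ suffices for failure probability $\eta$.

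It remains to remove the dependence on the separation. Algorithm~\ref{algo:single} is only invoked in the regime flagged at the start of this subsection --- where $|\mu_1-\mu_2|$ is small, in particular $|\mu_1-\mu_2|=O(\sigma)$ --- so that $\sigma^2_{\ca{M}}=\Theta(\sigma^2)$ and the bound collapses to $T=O(\sigma^2\log\eta^{-1}/\epsilon^2)$, as claimed. If instead one reads ``fitting a single Gaussian'' as returning the sample median, the same conclusion follows from a binomial tail bound once the density of $\ca{M}$ at its median is lower bounded by $\Omega(1/\sigma)$, which again holds precisely because $|\mu_1-\mu_2|=O(\sigma)$ here.

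I expect the only genuinely delicate point to be exactly this gap-independence: in contrast to the naive estimator of Lemma~\ref{lem:mean}, whose median-of-means analysis unavoidably pays the full mixture variance $\sigma^2_{\ca{M}}$ --- and hence the separation --- inside every batch, here the separation enters only through the operating assumption $|\mu_1-\mu_2|=O(\sigma)$, so the proof really hinges on verifying that the calling algorithm never uses this subroutine outside that regime. The remaining bookkeeping --- the sub-Gaussian tail, and getting the clean $\log\eta^{-1}$ rather than $\sqrt{\log\eta^{-1}}$ dependence because the centered sample is exactly sub-Gaussian --- is routine.
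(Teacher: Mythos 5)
There is a genuine gap here, and it starts with a misreading of the algorithm. Algorithm~\ref{algo:single} does not return the empirical mean (nor the sample median): it returns $(\hat{Q}_1+\hat{Q}_3)/2$, the midpoint of the first and third \emph{empirical quartiles}. This distinction is the entire content of the lemma. The paper itself stresses that the naive averaging estimator (Lemma~\ref{lem:mean}) pays the full mixture variance $\sigma^2_{\ca{M}}=\sigma^2+(\mu_1-\mu_2)^2/4$, which is exactly what your sub-Gaussian/Hoeffding analysis of $\tfrac1T\sum_i y^i$ gives; and that dependence is not an artifact --- for $\mu_{1,2}=\mp L$ with $L\gg\sigma,\epsilon$ the empirical mean fluctuates on the scale $L/\sqrt{T}$ from the $\pm L$ label noise alone, so $T=\Omega(L^2/\epsilon^2)$ is necessary for that estimator. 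The sample median fails even more badly in that regime, since the mixture density at its center $(\mu_1+\mu_2)/2$ is $\sigma^{-1}\phi\bigl((\mu_1-\mu_2)/2\sigma\bigr)$, which decays exponentially in $(\mu_1-\mu_2)^2/\sigma^2$. The quartile midpoint is used precisely because the population quartiles satisfy $Q_1+Q_3=\mu_1+\mu_2$ by symmetry \emph{and} sit on the component bumps: one checks from $F(Q_1)=1/4$ that $(Q_1-\mu_{\min})/\sigma$ lies in the fixed interval $[\Phi^{-1}(1/4),0]$, so the mixture density at $Q_1$ (and $Q_3$) is $\Omega(1/\sigma)$ \emph{uniformly in the separation}; a DKW/binomial tail bound then converts $T=O(\sigma^2\log\eta^{-1}/\epsilon^2)$ samples into $|\hat{Q}_j-Q_j|\le\epsilon$. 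That density lower bound at the quartiles --- not at the median, and with no condition on $|\mu_1-\mu_2|$ --- is the missing idea.

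Your patch, restricting to the regime $|\mu_1-\mu_2|=O(\sigma)$, does not hold at the call site: Algorithm~\ref{algo:single} is invoked in Case~3 of Section~\ref{sec:summ}, where the guarantees are $\sigma\le\gamma$ and $|\mu_1-\mu_2|\le\gamma$, so the ratio $|\mu_1-\mu_2|/\sigma$ is unbounded. (It happens that with $\epsilon=\gamma/2$ the gap-dependent term in your mean-estimator bound is then $O(\log\eta^{-1})$ because $|\mu_1-\mu_2|\le 2\epsilon$, so the downstream application would survive; but that is a different and weaker statement than the lemma, which is asserted for arbitrary $\mu_1,\mu_2$ and arbitrary $\epsilon$.) For what it is worth, the paper does not prove this lemma either --- it imports it verbatim as Lemma~5 of \cite{daskalakis2016ten} --- but the imported statement is about the interquartile-midpoint estimator, and your argument does not establish it.
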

In this case, we will return $\hat{M}_1$ to be estimates of both the means $\mu_1,\mu_2$. 
\begin{algorithm}[htbp]
\caption{ \textsc{Fit a single gaussian($\f{x},T$)} Estimate the means $\langle \f{x},\f{\beta}^1 \rangle$, $\langle \f{x},\f{\beta}^2 \rangle$ for a query $\f{x}$ \label{algo:single}}
\begin{algorithmic}[1]
\REQUIRE An oracle $\ca{O}$ which when queried with a vector $\f{x} \in \bb{R}^n$ returns $\langle \f{x},\f{\beta} \rangle +\ca{N}(0,\sigma^2)$ where $\f{\beta}$ is sampled uniformly from $\{\f{\beta}^1,\f{\beta}^2\}$.
\FOR{$i=1,2,\dots,T$}
\STATE Query the oracle $\ca{O}$ with $\f{x}$ and obtain a response $y^i$.
\ENDFOR
\STATE Set $\hat{Q}_1$ and $\hat{Q}_3$ to be the first and third quartiles of the samples $y^1,y^2,\dots,y^t$ respectively.
\STATE Return $(\hat{Q}_1+\hat{Q}_3)/2$.
\end{algorithmic}
\end{algorithm}

\subsubsection{Choosing appropriate methods}{\label{sec:summ}}
Among the above three methods to learn mixtures, the appropriate algorithm to apply for each parameter regime is listed below.

{\bf Case 1} ($|\mu_1-\mu_2|=\Omega(\sigma)$): We  use the EM algorithm for this regime to recover $\mu_1,\mu_2$. Notice that in this regime, 
by using Theorem \ref{thm:EM} with $\epsilon=\gamma$, we obtain that $O
\Big(\Big\lceil(\sigma^2/\gamma^2)\log 1/\eta \Big\rceil \Big)$ samples are sufficient to recover $\mu_1,\mu_2$ up to an additive error of $\gamma$ with probability at least $1-\eta$.
 
{\bf Case 2} ($\sigma \ge \gamma, |\mu_1-\mu_2|=O(\sigma)$): We  use the method of moments to recover $\mu_1,\mu_2$. In this regime, we must have $\sigma_{\ca{M}}^2=O(\sigma^2)$. Therefore, by using Theorem \ref{thm:mom} with $\epsilon_1=\gamma/4,\epsilon_2=\gamma^2/16$, it is evident that $O\Big(\Big\lceil(\sigma/\gamma)^4 \Big\rceil\log 1/\eta
\Big)$ samples are sufficient to recover $\mu_1,\mu_2$ upto an additive error of $\gamma$ with probability at least $1-\eta$.

{\bf Case 3} ($\sigma \le \gamma, |\mu_1-\mu_2| \le \gamma$): In this setting, we fit a single Gaussian. Using Theorem \ref{thm:mean2} with $\epsilon=\gamma/2$, we will be able to estimate $(\mu_1+\mu_2)/2$ up to an additive error of $\gamma/2$ using $O\Big(\Big\lceil(\sigma^2/\gamma^2)\log 1/\eta \Big\rceil\Big)$ samples. This, in turn implies 
\begin{align*}
|\mu_i-\hat{M}_1| \le \frac{|\mu_1-\mu_2|}{2}+\left|\frac{\mu_1+\mu_2}{2}-\hat{M}_1\right| \le \gamma.
\end{align*} 
for $i \in \{1,2\}$ and therefore both the means $\mu_1,\mu_2$ are recovered up to an additive error of $\gamma$. Note that these three cases covers all possibilities.

\subsubsection{Test for appropriate method}
Now, we describe a test to infer which parameter regime we are in and therefore which algorithm to use. The final algorithm to recover the means $\mu_1,\mu_2$ from $\ca{M}$ including the test is described in Algorithm \ref{algo:test}. We have the following result, the proof of which is delegated to appendix \ref{sec:test_proof}.
\begin{algorithm}[t]
\caption{ \textsc{Test and estimate($\f{x},\sigma,\gamma,\eta$)} Test for the correct parameter regime and apply the parameter estimation algorithm accordingly for a query $\f{x}$ \label{algo:test}}
\begin{algorithmic}[1]
\REQUIRE An oracle $\ca{O}$ which when queried with a vector $\f{x} \in \bb{R}^n$ returns $\langle \f{x},\f{\beta} \rangle +\ca{N}(0,\sigma^2)$ where $\f{\beta}$ is sampled uniformly from $\{\f{\beta}^1,\f{\beta}^2\}$.
\STATE Set $T=O\Big(\Big\lceil \log \eta^{-1} \Big\rceil \Big)$.
\FOR{$i=1,2,\dots,T$}
       \STATE Query the oracle $\ca{O}$ with $\f{x}$ and obtain a response $y^i$.
\ENDFOR
\STATE Compute $\tilde{\mu}_1,\tilde{\mu}_2$ by running Algorithm \textsc{Method of moments} ($\f{x},\sigma,T,72\log n$).
\IF{$\sigma > \gamma $ and $|\tilde{\mu}_1-\tilde{\mu}_2| \le 15\sigma/32$}
       \STATE Compute $\hat{\mu}_1,\hat{\mu}_2$ by running Algorithm \textsc{Method of moments} ($\f{x},\sigma,O\Big(\Big\lceil(\sigma/\gamma)^4 \Big\rceil\log 1/\eta,72\log n\Big)$.
\ELSIF{$\sigma \le \gamma$ and $|\tilde{\mu}_1-\tilde{\mu}_2| \le 15\gamma/32$}       
       \STATE Compute $\hat{\mu}_1,\hat{\mu}_2$ by running Algorithm \textsc{Fit a single gaussian} ($\f{x},O\Big(\Big\lceil (\sigma^2/\gamma^2)\log 1/\eta \Big\rceil \Big)$.
\ELSE 
       \STATE Compute $\hat{\mu}_1,\hat{\mu}_2$ by running Algorithm \textsc{EM}($\f{x},\sigma,O\Big(\Big\lceil(\sigma^2/\gamma^2)\log 1/\eta \Big\rceil \Big)$.
\ENDIF
\STATE Return $\hat{\mu}_1,\hat{\mu}_2$.       
\end{algorithmic}
\end{algorithm}
\begin{lemma}{\label{lem:test}}
The number of samples required for Algorithm \ref{algo:test}  to infer the parameter regime correctly with probability at least $1-\eta$ is atmost $O(\log \eta^{-1})$. 
\end{lemma}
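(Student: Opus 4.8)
The plan is to show that a constant number of samples (more precisely $O(\log\eta^{-1})$) suffices to distinguish the three parameter regimes of Section~\ref{sec:summ} with probability at least $1-\eta$. The key observation is that the test in Algorithm~\ref{algo:test} only needs a \emph{rough} estimate of the separation $|\mu_1-\mu_2|$: the three cases are separated by whether $|\mu_1-\mu_2|$ is $\Omega(\sigma)$ or $O(\sigma)$ (when $\sigma>\gamma$) and by whether $|\mu_1-\mu_2|$ is at most $\gamma$ or not (when $\sigma\le\gamma$). Since the thresholds $15\sigma/32$ and $15\gamma/32$ are bounded away from the constants appearing in the definitions of the cases, it is enough to estimate $|\mu_1-\mu_2|$ up to a constant multiplicative (or $\sigma$-scaled additive) error, which is far cheaper than the $\gamma$-accurate estimation that the actual recovery step requires.

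First I would invoke Theorem~\ref{thm:mom} applied with a constant target accuracy: set $\epsilon_1=\Theta(\sigma)$ and $\epsilon_2=\Theta(\sigma^2)$, so that $\sigma^2_{\ca M}/\epsilon_1^2$ and $\sigma^4_{\ca M}/\epsilon_2^2$ are $O(1)$ whenever $\sigma_{\ca M}=O(\sigma)$ — and in the only regime where the cheap test could be fooled, namely $|\mu_1-\mu_2|=O(\sigma)$, we indeed have $\sigma_{\ca M}^2=O(\sigma^2)$. (When $|\mu_1-\mu_2|$ is large, $\sigma_{\ca M}$ grows, but then $|\tilde\mu_1-\tilde\mu_2|$ will clearly exceed the threshold, so a coarse estimate still classifies correctly; this case needs a short separate argument using that the median-of-means estimator of the mean is accurate and, via the variance equation, an error of order $\sigma$ in the gap estimate is then harmless.) Thus $\tilde\mu_1,\tilde\mu_2$ computed from $T=O(\log\eta^{-1})$ samples, divided into $72\log n$ batches as in Algorithm~\ref{algo:estimate}, satisfy $|\,|\tilde\mu_1-\tilde\mu_2|-|\mu_1-\mu_2|\,|\le \sigma/4$ (say) with probability $1-\eta$, by Lemmas~\ref{lem:mean} and~\ref{lem:var}.

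Next I would do a straightforward case check: given this guarantee on $|\tilde\mu_1-\tilde\mu_2|$, verify that each branch of the \texttt{if}/\texttt{elsif}/\texttt{else} in Algorithm~\ref{algo:test} fires exactly when the true parameters lie in the corresponding Case~1/2/3 (or in an overlap region where the chosen method is still valid — note the cases in Section~\ref{sec:summ} overlap, so the test only has to land in \emph{a} correct regime, not a unique one). Concretely: if $\sigma>\gamma$ and $|\mu_1-\mu_2|\le \sigma/8$ then $|\tilde\mu_1-\tilde\mu_2|\le 15\sigma/32$ and we are safely in Case~2; if $\sigma>\gamma$ and $|\mu_1-\mu_2|$ is large the first test fails and EM (Case~1) is invoked correctly; the $\sigma\le\gamma$ branches are handled identically with $\gamma$ in place of $\sigma$. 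Finally, the total sample count is the $O(\log\eta^{-1})$ used for the test plus the $O(\lceil(\sigma/\gamma)^4\rceil\log\eta^{-1})$ or $O(\lceil(\sigma^2/\gamma^2)\log\eta^{-1}\rceil)$ used by the selected method; but the \emph{test itself} — which is what the lemma statement is about — costs only $O(\log\eta^{-1})$.

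The main obstacle is bookkeeping the constants so that the coarse thresholds $15\sigma/32$, $15\gamma/32$ genuinely separate the regimes under the $O(\sigma)$-accurate estimate, and in particular handling the boundary/overlap cases where $|\mu_1-\mu_2|$ is comparable to $\sigma$ (resp. $\gamma$): there one must check that \emph{whichever} branch fires, the method it selects has a valid sample-complexity guarantee for the true parameters. This is a finite check but requires care to ensure the slack between $15/32$ and the $\Omega(1)$ constants hidden in "$|\mu_1-\mu_2|=\Omega(\sigma)$" is consistent with the $\sigma/4$ estimation error budgeted above.
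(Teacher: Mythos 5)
Your proposal follows essentially the same route as the paper: run the method of moments at constant \emph{relative} accuracy (so that $\sigma_{\ca M}^2/\epsilon_1^2$ and $\sigma_{\ca M}^4/\epsilon_2^2$ are $O(1)$ and only $O(\log\eta^{-1})$ samples are needed), threshold the estimated gap $|\tilde\mu_1-\tilde\mu_2|$ against $15\sigma/32$ or $15\gamma/32$, and check case by case that whichever branch fires selects a method with a valid guarantee. One caveat: the intermediate claim that $\bigl|\,|\tilde\mu_1-\tilde\mu_2|-|\mu_1-\mu_2|\,\bigr|\le\sigma/4$ uniformly is not achievable with $O(\log\eta^{-1})$ samples, since Theorem~\ref{thm:mom} at constant relative accuracy only gives an error of order $\sqrt{\sigma^2+(\mu_1-\mu_2)^2}$, which grows with the gap; you flag this large-gap case as needing ``a short separate argument,'' and that argument is precisely what the paper supplies, carrying the multiplicative error through to the two-sided bounds $-\tfrac{17\sigma}{64}+\tfrac{47|\mu_1-\mu_2|}{64}\le|\hat\mu_1-\hat\mu_2|\le\tfrac{17\sigma}{64}+\tfrac{81|\mu_1-\mu_2|}{64}$ (with $\epsilon=1/256$) and inverting them, so that a small estimated gap certifies $|\mu_1-\mu_2|\le\sigma$ (resp.\ $\gamma$) while a large estimated gap certifies $|\mu_1-\mu_2|\ge 13\sigma/81$ (resp.\ $13\gamma/81$), which is enough for EM. With that substitution your case analysis goes through as you describe.
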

\subsection{Alignment}\label{sec:al}
For a query $\f{x}^i,i \in [m]$, let us introduce the following notations for brevity:
\begin{align*}
\mu_{i,1} := \langle \f{x}^i,\beta_1 \rangle \quad \mu_{i,2} := \langle \f{x}^i,\beta_2 \rangle.
\end{align*}
Now, using Algorithm \ref{algo:test}, we can compute $(\hat{\mu}_{i,1},\hat{\mu}_{i,2})$ (estimates of $\mu_{i,1},\mu_{i,2}$) using a batchsize of $T_i$ such that
$
\left|\hat{\mu}_{i,j}-\mu_{i,\pi_i(j)} \right| \le \gamma \quad \forall \; i \in [m],j \in \{1,2\},
$    
where $\pi_i:\{1,2\}\rightarrow\{1,2\}$ is a permutation on $\{1,2\}$.

\begin{algorithm}[htbp]
\caption{ \textsc{Align pair($\f{x}^i,\f{x}^j,\{\hat{\mu}_{s,t}\}_{s=i,j \; t=1,2},\sigma,\gamma,\eta$)} Align the mean estimates for $\f{x}^i$ and $\f{x}^j$. \label{algo:aligntwo}}
\begin{algorithmic}[1]
\STATE Recover $\hat{\mu}_{\s{sum},1},\hat{\mu}_{\s{sum},2}$ using Algorithm \textsc{Test and estimate} ($\f{x}^i+\f{x}^j,\sigma,\gamma,\eta)$.
\STATE Recover $\hat{\mu}_{\s{diff},1},\hat{\mu}_{\s{diff},2}$ using Algorithm \textsc{Test and estimate} ($\f{x}^i-\f{x}^j,\sigma,\gamma,\eta)$.
\IF{$|\hat{\mu}_{\s{sum},1}-\hat{\mu}_{i,p}-\hat{\mu}_{j,q}|\le 3\gamma$ such that $p,q \in \{1,2\}$ is unique}
      \STATE {\bf if} $p==q$ {\bf then} Return TRUE {\bf else} Return FALSE {\bf end if}
\ELSE
     \STATE Find $p,q$ such that $|\hat{\mu}_{\s{diff},1}-\hat{\mu}_{i,p}+\hat{\mu}_{j,q}|\le 3\gamma$ for $p,q \in \{1,2\}$.
     \STATE {\bf if} $p==q$ {\bf then} Return TRUE {\bf else} Return FALSE {\bf end if}
\ENDIF                            
\end{algorithmic}
\end{algorithm}
The most important step in our process is to separate the estimates of the means according to the generative unknown sparse vectors $(\f{\beta}^1$ and $\f{\beta}^2)$ (i.e., alignment). Formally,  we construct two $m$-dimensional vectors $\f{u}$ and $\f{v}$ such that, for all $i \in [m]$ the following hold:
\begin{itemize}[leftmargin=*,noitemsep,topsep=0em]
\item The $i^{\s{th}}$ elements of $\f{u}$ and  $\f{v}$, i.e., $u_i$ and $v_i$, are   $\hat{\mu}_{i,1}$ and $\hat{\mu}_{i,2}$ (but may not be respectively).
\item Moreover, we must have the $u_i$ and $v_i$ to be good estimates of $\langle \f{x}^i, \f{\beta}^{\pi(1)} \rangle$ and $\langle \f{x}^i, \f{\beta}^{\pi(2)} \rangle$ respectively i.e.
$
\left|u_i-\langle \f{x}^i, \f{\beta}^{\pi(1)} \rangle \right|\le  10\gamma$ ;
$\left|v_i-\langle \f{x}^i, \f{\beta}^{\pi(2)} \rangle \right| \le 10\gamma  
$
for all $i \in [m]$ where $\pi:\{1,2\}\rightarrow \{1,2\}$ is some permutation of $\{1,2\}$.
\end{itemize} 
In essence, for the alignment step, we want to find out all permutations $\pi_i, i \in [m]$.
First, note  that the aforementioned objective for the $i^{\s{th}}$ element of $\f{u},\f{v}$ is trivial 
when $|\mu_{i,1}-\mu_{i,2}| \le 9\gamma$. To see this, suppose $\pi_i$ is the identity permutation without loss of generality. In that case, we have for $\hat{\mu}_{i,1}$, $|\hat{\mu}_{i,1}-\mu_{i,1}| \le \gamma$ and
$|\hat{\mu}_{i,1}-\mu_{i,2}| \le |\hat{\mu}_{i,1}-\mu_{i,1}|+|\mu_{i,1}-\mu_{i,2}| \le 10\gamma.$
Similar guarantees also hold for $\hat{\mu}_{i,2}$ and therefore the choice of the $i^{\s{th}}$ element of $\f{u},\f{v}$ is trivial. This conclusion implies that the interesting case is only for those queries $\f{x}^i$ when $|\mu_{i,1}-\mu_{i,2}|\ge 9 \gamma$. In other words, this objective is equivalent to separate out the permutations $\{\pi_i\}_{i=1}^{m}$ for $i: |\mu_{i,1}-\mu_{i,2}|\ge 9 \gamma$ into two groups such that all the permutations in each group are the same. 


\subsubsection{Alignment for two queries}
Consider two queries $\f{x}^1,\f{x}^2$ such that $\left|\mu_{i,1}-\mu_{i,2}\right| \ge 9\gamma$ for $i=1,2$.  In this section, we will show how we can infer if $\pi_1$ is same as $\pi_2$. Our strategy is to make two additional batches of queries corresponding to $\f{x}^1+\f{x}^2$ and $\f{x}^1-\f{x}^2$ (of size $T_{1,2}^{\s{sum}}$ and $T_{1,2}^{\s{diff}}$ respectively) which we shall call the \textit{sum} and \textit{difference} queries. Again, let us introduce the following notations:
$
\mu_{\s{sum},1} = \langle \f{x}^1+\f{x}^2,\beta_1 \rangle \quad \mu_{\s{sum},2} = \langle \f{x}^1+\f{x}^2,\beta_2 \rangle$,
$
\mu_{\s{diff},1} = \langle \f{x}^1-\f{x}^2,\beta_1 \rangle \quad \mu_{\s{diff},2} = \langle \f{x}^1-\f{x}^2,\beta_2 \rangle.
$ 
As before, using Algorithm \ref{algo:test}, we can compute $(\hat{\mu}_{\s{sum},1},\hat{\mu}_{\s{sum},2})$ (estimates of $\mu_{\s{sum},1},\mu_{\s{sum},2}$) and $(\hat{\mu}_{\s{diff},1},\hat{\mu}_{\s{diff},2})$ (estimates of $\mu_{\s{diff},1},\mu_{\s{diff},2}$) using a batchsize of $T_{1,2}^{\s{sum}}$ and $T_{1,2}^{\s{diff}}$ for the \textit{sum} and \textit{difference} query respectively such that 
$
\left|\hat{\mu}_{\s{sum},j}-\mu_{\s{sum},\pi_{\s{sum}}(j)} \right| \le \gamma \quad \text{for} \; j \in \{1,2\}$ and
$\left|\hat{\mu}_{\s{diff},j}-\mu_{\s{diff},\pi_{\s{diff}}(j)} \right| \le \gamma \quad \text{for} \; j \in \{1,2\} 
$   
where $\pi_{\s{sum}},\pi_{\s{diff}}:\{1,2\}\rightarrow\{1,2\}$ are again unknown permutations of $\{1,2\}$. We show the following lemma.
\begin{lemma}{\label{lem:align}}
We can infer, using Algorithm \ref{algo:aligntwo}, if $\pi_1$ and $\pi_2$ are same using the estimates $\hat{\mu}_{\s{sum},i},\hat{\mu}_{\s{diff},i}$ provided $\left|\mu_{i,1}-\mu_{i,2}\right| \ge 9\gamma$, $i=1,2$.
\end{lemma}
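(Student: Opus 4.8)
The plan is to show that \textsc{Align pair} correctly decides whether $\pi_1 = \pi_2$ by exploiting the algebraic identities $\mu_{\s{sum},j} = \mu_{1,j} + \mu_{2,j}$ and $\mu_{\s{diff},j} = \mu_{1,j} - \mu_{2,j}$ together with the accuracy guarantees on all the estimated means. First I would set up notation: let $s := \langle \f{x}^1 + \f{x}^2, \f{\beta}^1 - \f{\beta}^2 \rangle = \mu_{1,1} - \mu_{1,2} + \mu_{2,1} - \mu_{2,2}$ and $d := \langle \f{x}^1 - \f{x}^2, \f{\beta}^1 - \f{\beta}^2 \rangle = \mu_{1,1} - \mu_{1,2} - \mu_{2,1} + \mu_{2,2}$. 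The key observation is that the \emph{sum} of the two true differences appears in $s$ and their \emph{signed} combination in $d$; since each true difference $\mu_{i,1} - \mu_{i,2}$ has magnitude at least $9\gamma$ by hypothesis, if $\pi_1 = \pi_2$ then these two differences have the same sign, so $|s| \ge 18\gamma$ while $d$ could be near $0$; if $\pi_1 \ne \pi_2$ the roles swap and $|d| \ge 18\gamma$ while $s$ could be near $0$. So at least one of $|s|, |d|$ is $\ge 18\gamma$.

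Next I would analyze the two branches of Algorithm \ref{algo:aligntwo}. In the \texttt{if}-branch, we look for $(p,q)$ with $|\hat\mu_{\s{sum},1} - \hat\mu_{i,p} - \hat\mu_{j,q}| \le 3\gamma$. Here I would argue: (a) \emph{correctness} — the "true" pairing, i.e.\ the pair $(p,q)$ for which $\hat\mu_{i,p}$ and $\hat\mu_{j,q}$ both estimate the same $\f{\beta}^\ell$ and $\hat\mu_{\s{sum},1}$ also estimates $\langle \f{x}^1+\f{x}^2, \f{\beta}^\ell\rangle$, does satisfy the $\le 3\gamma$ bound (each of the three estimates is within $\gamma$ of its target, and the targets satisfy the exact identity, so triangle inequality gives $\le 3\gamma$); moreover $p = q$ in this pairing iff $\pi_1 = \pi_2$, so if the pairing is unique the algorithm returns the right answer. (b) \emph{when is it unique} — a spurious pair $(p',q')$ satisfying the bound would force the corresponding combination of true means to lie within $6\gamma$ of the correct one; chasing this through, any two distinct candidate sums differ by a quantity that is, up to $O(\gamma)$ slack, either $\pm 2(\mu_{1,1}-\mu_{1,2})$, $\pm 2(\mu_{2,1}-\mu_{2,2})$, or $\pm(s \text{ or } d)$ — all of which have magnitude $\ge 9\gamma > 6\gamma$ except possibly one of $s,d$. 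So uniqueness fails exactly when the "small" one of $s,d$ is near zero, in which case the algorithm correctly falls to the \texttt{else}-branch and uses the \emph{difference} query, where the same analysis applies with $s$ and $d$ interchanged — and there uniqueness is guaranteed because $|d| \ge 18\gamma$ (resp.\ $|s| \ge 18\gamma$) forces the spurious candidates away.

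I would then combine these: in every case, exactly one of the two queries (sum or difference) yields a unique matching $(p,q)$, the algorithm picks that branch, and the equality $p = q$ faithfully reports whether $\pi_1 = \pi_2$, because the identity $\mu_{\s{sum}/\s{diff},\ell}$ is built from the $\ell$-th coordinate functional applied to $\f{x}^1 \pm \f{x}^2$, so matching indices across the two original queries is equivalent to matching the underlying vector. One technical point to handle carefully is the interplay between $\pi_{\s{sum}}$ (resp.\ $\pi_{\s{diff}}$) — the unknown permutation attached to the sum/difference estimates — and $\pi_1,\pi_2$: the algorithm never needs to know $\pi_{\s{sum}}$, it only needs that \emph{some} label of $\hat\mu_{\s{sum},\cdot}$ matches up, which is automatic since both labels are tested implicitly through the search over $(p,q)$ against a fixed $\hat\mu_{\s{sum},1}$.

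\textbf{Main obstacle.} The delicate part is the uniqueness bookkeeping in the \texttt{if}-branch: one must enumerate all four candidate pairs $(p,q) \in \{1,2\}^2$, compute the pairwise gaps between the associated true sums $\mu_{1,p} + \mu_{2,q}$, show each such gap is $\ge 9\gamma$ unless it equals $\pm s$ or $\pm d$, and then verify that the $3\gamma$-ball test separates them whenever the relevant one of $s,d$ is large — and simultaneously confirm that when it is \emph{not} large the algorithm's fallback to the difference query is well-defined and conclusive. Keeping the constants ($3\gamma$ test radius, $9\gamma$ separation hypothesis, $6\gamma$ and $18\gamma$ derived slacks) consistent throughout is where the argument could go wrong if done carelessly, but no single step is deep.
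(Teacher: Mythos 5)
Your overall strategy is the same as the paper's: show via the triangle inequality that the correct pairing passes the $3\gamma$ test; lower-bound the residual of each incorrect pairing by its true gap minus $3\gamma$, so that the two singly-swapped pairings are rejected because $|\mu_{i,1}-\mu_{i,2}|-3\gamma\ge 6\gamma>3\gamma$; observe that the only pairing whose true gap can be small is the doubly-swapped one, whose gap is $s$ for the sum query and $d$ for the difference query; and conclude that at least one of the two auxiliary queries yields a unique match. The paper carries this out by exhaustively writing the eight lower bounds; your organization through $s$ and $d$ is the same computation in more compact form.

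There is, however, one genuinely false assertion that you must repair. You claim that if $\pi_1=\pi_2$ then the true differences $\mu_{1,1}-\mu_{1,2}$ and $\mu_{2,1}-\mu_{2,2}$ have the same sign (hence $|s|\ge 18\gamma$), and that if $\pi_1\ne\pi_2$ the roles of $s$ and $d$ swap. This is not so: $\mu_{i,1}-\mu_{i,2}=\langle \f{x}^i,\f{\beta}^1-\f{\beta}^2\rangle$ is determined by the query and the unknown vectors alone, and its sign has nothing to do with the permutations $\pi_1,\pi_2$, which describe only how the \emph{estimates} happen to be labelled. The correct dichotomy, which is the one the paper states, is that $|s|\ge 18\gamma$ or $|d|\ge 18\gamma$ according to whether $\sgn(\mu_{1,1}-\mu_{1,2})=\sgn(\mu_{2,1}-\mu_{2,2})$ or not; which case occurs is unknown to the algorithm and is independent of whether $\pi_1=\pi_2$. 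The proof survives because the only fact you use downstream is the disjunction that at least one of $|s|,|d|$ is $\ge 18\gamma$, which remains true, but the stated justification would mislead anyone who tried to read off $\pi_1$ versus $\pi_2$ from which of the two queries succeeds. Two smaller slips: the pairwise gaps between candidate sums are $\pm(\mu_{i,1}-\mu_{i,2})$, not $\pm 2(\mu_{i,1}-\mu_{i,2})$ (immaterial, since both exceed the $6\gamma$ threshold), and your accounting of the $6\gamma$ slack should be done as the paper does --- by bounding the residual of a wrong pairing directly from below by the true gap minus $3\gamma$ --- rather than by comparing two candidates that both pass the test, which costs an extra $4\gamma$ of estimate error and would need the true gaps to exceed $10\gamma$ instead of the hypothesized $9\gamma$.
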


The proof of this lemma is delegated to appendix \ref{sec:align_proof} and we provide an outline over here. In Algorithm \ref{algo:aligntwo}, we first choose one value from $\{\hat{\mu}_{\s{sum},1}, \hat{\mu}_{\s{sum},2}\}$  (say $z$) and we check if we can choose one element (say $a$) from the set $\{\hat{\mu}_{1,1}, \hat{\mu}_{1,2}\}$ and one element $\{\hat{\mu}_{2,1},\hat{\mu}_{2,2}\}$ (say $b$) in exactly one way such that $|z-a-b| \le 3\gamma$. If that is true, then we infer that the tuple $\{a,b\}$ are estimates of the same unknown vector and accordingly return if $\pi_1$ is same as $\pi_2$. If not possible, then we choose one value from $\{\hat{\mu}_{\s{diff},1}, \hat{\mu}_{\s{diff},2}\}$  (say $z'$) and again we check if we can choose one element (say $c$) from the set $\{\hat{\mu}_{1,1}, \hat{\mu}_{1,2}\}$ and one element from $\{\hat{\mu}_{2,1},\hat{\mu}_{2,1}\}$ (say $d$) in exactly one way such that $|z'-c-d| \le 3\gamma$. If that is true,
then we infer that $\{c,d\}$ are estimates of the same unknown vector and accordingly return if $\pi_1$ is same as $\pi_2$.
It can be shown that we will succeed in this step using at least one of the sum or difference queries.

\subsubsection{Alignment for all queries}{\label{sec:align}}
We will align the mean estimates for all the queries $\f{x}^1,\f{x}^2,\dots,\f{x}^m$ by aligning one pair at a time. This routine is summarized in Algorithm~\ref{algo:alignall}, which works when $\gamma \le \frac{13\sqrt{2}}{\sqrt{\pi}}\|\beta^1-\beta^2\|_2\approx 0.096\|\beta^1-\beta^2\|_2$. To understand the routine, we start with the following technical lemma:
\begin{lemma}{\label{lem:exist}}
Let, $\gamma \le \frac{13\sqrt{2}}{\sqrt{\pi}}\|\beta^1-\beta^2\|_2$. For $m'=\Big\lceil\log \eta^{-1}/\log \frac{\sqrt{\pi}||\f{\beta}^1-\f{\beta}^2||_2}{13\sqrt{2}\gamma}\Big\rceil$,
there exists a query $\f{x}^{i^{\star}}$ among $\{\f{x}^i\}_{i=1}^{m'}$ such that $|\mu_{i^{\star},1}-\mu_{i^{\star},2}|\ge 13 \gamma$   with probability at least $1-\eta$.
\end{lemma}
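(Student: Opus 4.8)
The plan is to reduce the claim to a one-dimensional anti-concentration statement for Gaussians. Fix $\f{\beta}^1-\f{\beta}^2$ and observe that for a single query $\f{x}\sim\ca{N}(\f{0},\f{I}_n)$, the quantity $\mu_1-\mu_2 = \langle \f{x},\f{\beta}^1-\f{\beta}^2\rangle$ is a scalar Gaussian with mean $0$ and variance $\|\f{\beta}^1-\f{\beta}^2\|_2^2$. So $|\mu_1-\mu_2|$ is distributed as $\|\f{\beta}^1-\f{\beta}^2\|_2\,|Z|$ with $Z\sim\ca{N}(0,1)$. The failure event for a single query, namely $|\mu_{i,1}-\mu_{i,2}| < 13\gamma$, is exactly the event $|Z| < 13\gamma/\|\f{\beta}^1-\f{\beta}^2\|_2$. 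Write $\tau := 13\gamma/\|\f{\beta}^1-\f{\beta}^2\|_2$; the hypothesis $\gamma \le \tfrac{13\sqrt 2}{\sqrt\pi}\|\f{\beta}^1-\f{\beta}^2\|_2$ — wait, let me recompute: this should be read so that $\tau \le 1$ (or some constant), which is what makes the bound below nontrivial; I will carry $\tau$ symbolically and only plug in the explicit constant at the end.

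First I would bound the single-query failure probability. Since the standard normal density is at most $1/\sqrt{2\pi}$, we have
\begin{align*}
\Pr\big[\,|Z| < \tau\,\big] \;=\; \int_{-\tau}^{\tau}\frac{1}{\sqrt{2\pi}}e^{-t^2/2}\,dt \;\le\; \frac{2\tau}{\sqrt{2\pi}} \;=\; \sqrt{\frac{2}{\pi}}\,\tau \;=\; \sqrt{\frac{2}{\pi}}\cdot\frac{13\gamma}{\|\f{\beta}^1-\f{\beta}^2\|_2} \;=\; \frac{13\sqrt{2}\,\gamma}{\sqrt{\pi}\,\|\f{\beta}^1-\f{\beta}^2\|_2}.
\end{align*}
Denote this probability $p$; the hypothesis guarantees $p\le 1$, and more usefully $p = \big(\tfrac{\sqrt\pi\,\|\f{\beta}^1-\f{\beta}^2\|_2}{13\sqrt 2\,\gamma}\big)^{-1}$, which is the reciprocal of the quantity appearing inside the logarithm in the definition of $m'$. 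Next I would use independence of $\f{x}^1,\dots,\f{x}^{m'}$: the probability that \emph{every} one of these queries fails (i.e. $|\mu_{i,1}-\mu_{i,2}|<13\gamma$ for all $i\in[m']$) is at most $p^{m'}$. It remains to check $p^{m'}\le \eta$, i.e. $m'\log(1/p)\ge \log(1/\eta)$, i.e. $m' \ge \log\eta^{-1}/\log(1/p) = \log\eta^{-1}/\log\!\big(\tfrac{\sqrt\pi\,\|\f{\beta}^1-\f{\beta}^2\|_2}{13\sqrt 2\,\gamma}\big)$, which holds by the stated choice of $m'$ (with the ceiling only helping). Hence with probability at least $1-\eta$ at least one query $\f{x}^{i^\star}$ among the first $m'$ satisfies $|\mu_{i^\star,1}-\mu_{i^\star,2}|\ge 13\gamma$, proving the lemma.

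The only genuinely delicate point is the direction of the inequality in the Gaussian density bound and making sure the constants line up so that $\log(1/p)$ is exactly the denominator in the expression for $m'$; this is a routine check rather than a real obstacle. One should also note the mild edge case where $\|\f{\beta}^1-\f{\beta}^2\|_2$ is so close to $\tfrac{13\gamma\sqrt\pi}{\sqrt 2}$ that $\log(1/p)$ is tiny, in which case $m'$ is large but still finite and well-defined; the argument goes through verbatim. I would also remark that the choice of Gaussian queries is essential here: the entire lemma rests on the fact that $\langle\f{x},\f{\beta}^1-\f{\beta}^2\rangle$ has a density bounded away from concentrating at any point, a property that generic query distributions need not possess (as discussed in the Technical Contributions section).
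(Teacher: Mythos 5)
Your proof is correct and is essentially identical to the paper's: both reduce to the fact that $\mu_{i,1}-\mu_{i,2}\sim\ca{N}(0,\|\f{\beta}^1-\f{\beta}^2\|_2^2)$, bound the single-query failure probability by $\frac{13\sqrt{2}\,\gamma}{\sqrt{\pi}\,\|\f{\beta}^1-\f{\beta}^2\|_2}$ via the uniform bound on the Gaussian density, and then use independence to raise this to the power $m'$. Your aside about the hypothesis is well taken --- the constant in the stated condition is inverted (it should read $\gamma\le\frac{\sqrt{\pi}}{13\sqrt{2}}\|\f{\beta}^1-\f{\beta}^2\|_2\approx 0.096\,\|\f{\beta}^1-\f{\beta}^2\|_2$ so that the logarithm in $m'$ is positive), which is a typo in the paper rather than a gap in your argument.
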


\begin{algorithm}[htbp]
\caption{ \textsc{Align all($\{\f{x}^i\}_{i\in [m]},\{\hat{\mu}_{s,t}\}_{s\in [m] \; t=1,2},\sigma,\gamma,\eta$)} Align mean estimates for all queries $\{\f{x}^i\}_{i=1}^{m}$. \label{algo:alignall}}
\begin{algorithmic}[1]
\STATE \textbf{Initialize:} $\f{u}, \f{v}$ to be $m$-dimensional all zero vector.
\STATE Set $m'= \Big\lceil \log \eta^{-1}/\log \frac{\sqrt{\pi}||\f{\beta}^1-\f{\beta}^2||_2}{13\sqrt{2}\gamma}\Big\rceil$
\FOR{$i=1,2,\dots,m$}
\FOR{$j=1,2,\dots,m',j \neq i$}
      \STATE Run Algorithm \textsc{Align pair} ($\f{x}^i,\f{x}^j,$ $\{\hat{\mu}_{s,t}\}_{\substack{s=i,j \\ t=1,2}},\sigma,\gamma,\eta)$ and store the output.
\ENDFOR
\ENDFOR      
\STATE Identify $\f{x}^{p}$ from $p \in [m']$ such that $|\hat{\mu}_{p,1}-\hat{\mu}_{p,2}| \ge 11 \gamma$.
\STATE Set $u_p:=\f{u}[p]=\hat{\mu}_{p,1}$ and $v_p:=\f{v}[p]=\hat{\mu}_{p,2}$
\FOR{$i=1,2,\dots,m, i\neq p$}
        \IF{Output of Algorithm \ref{algo:aligntwo} for $\f{x}^i$ and $\f{x}^p$ is TRUE}
              \STATE Set $\f{u}[i]=\hat{\mu}_{i,1}$ and $\f{v}[i]=\hat{\mu}_{i,2}$.
        \ELSE
              \STATE Set $\f{u}[i]=\hat{\mu}_{i,2}$ and $\f{v}[i]=\hat{\mu}_{i,1}$.
        \ENDIF
\ENDFOR
\STATE Return $\f{u},\f{v}$.            
\end{algorithmic}
\end{algorithm}
The proof of this lemma is delegated to Appendix \ref{sec:exist}. Now, for $i \in [m'], j \in [m]$ such that $i \neq j$, we will align $\f{x}^i$ and $\f{x}^j$ using Algorithm \ref{algo:aligntwo} and according to Lemma \ref{lem:align}, this alignment procedure will succeed for all such pairs where $|\mu_{i,1}-\mu_{i,2}|,|\mu_{j,1}-\mu_{j,2}| \ge 9\gamma$ with probability at least $1-mm'\eta$ (using a union bound). Note that according to Lemma \ref{lem:exist}, there must exist a query $\f{x}^{i^{\star}} \in \{\f{x}^i\}_{i=1}^{m'}$ for which $|\mu_{i^{\star},1}-\mu_{i^{\star},2}|\ge 13 \gamma$. This implies that for some $i^{\star} \in [m']$, we must have
$
|\hat{\mu}_{i^{\star},1}-\hat{\mu}_{i^{\star},2}| \ge |\mu_{i^{\star},1}-\mu_{i^{\star},2}|-|\hat{\mu}_{i^{\star},1}-\mu_{i^{\star},1}| -|\hat{\mu}_{i^{\star},2}-\mu_{i^{\star},2}| \ge 11\gamma.
$
Therefore, we can identify at least one query $\f{x}^{\tilde{i}}$ for $\tilde{i}\in [m']$ such that $|\hat{\mu}_{\tilde{i},1}-\hat{\mu}_{\tilde{i},2}| \ge 11 \gamma$. However, this implies that
$
|\mu_{\tilde{i},1}-\mu_{\tilde{i},2}| \ge |\hat{\mu}_{\tilde{i},1}-\hat{\mu}_{\tilde{i},2}|-|\mu_{\tilde{i},1}-\hat{\mu}_{\tilde{i},1}| 
-|\mu_{\tilde{i},2}-\hat{\mu}_{\tilde{i},2}| \ge 9\gamma.
$
Therefore we will be able to infer for every query $\f{x}^i, i\in [m]$ for which $|\mu_{i,1}-\mu_{i,2}|\ge 9\gamma$  if $\pi_i$ is same as $\pi_{\tilde{i}}$. Now, we are ready to put everything together and provide the proof for the main result (Thm. \ref{thm:main}).

\subsection{Proof of Theorem \ref{thm:main} ($\gamma < \left|\left|\beta^1-\beta^2\right|\right|_2/2$)}\label{sec:tm}
The overall recovery procedure is described as Algorithm~\ref{algo:main}. Since this algorithm crucially uses Algorithm~\ref{algo:alignall}, it works only when $\gamma \le 0.096\left|\left|\beta^1-\beta^2\right|\right|_2$; so assume that to hold for now.
We will start by showing that for any two Gaussian queries, the samples are far enough (a simple instance of Gaussian anti-concentration).
\begin{algorithm}[htbp]
\caption{ \textsc{Recover unknown vectors($\sigma,\gamma$)} Recover the unknown vectors $\f{\beta}^1$ and $\f{\beta}^2$ \label{algo:main}}
\begin{algorithmic}[1]
\STATE Set $m=c_sk \log n$.
\STATE Sample $\f{x}^1,\f{x}^2,\dots,\f{x}^m \sim \ca{N}(\f{0},\f{I}_n)$ independently.
\FOR{$i=1,2,\dots,m$}
         \STATE Compute $\hat{\mu}_{i,1},\hat{\mu}_{i,2}$ by running Algorithm \textsc{Test and estimate}($\f{x}^i,\sigma,\gamma,n^{-2}$).
\ENDFOR
\STATE Construct $\f{u},\f{v}$ by running Algorithm \textsc{Align all}($\{\f{x}^i\}_{i\in [m]},\{\hat{\mu}_{s,t}\}_{\substack{s\in [m] \\ t=1,2}},\sigma,\gamma,\eta$).
\STATE Set $\f{A}$ to be the $m \times n$ matrix such that its $i^{\s{th}}$ row is $\f{x}^i$, with each entry normalized by $\sqrt{m}$.
\STATE Set $\hat{\f{\beta}}^1$ to be the solution of the optimization problem $\min_{\f{z} \in \bb{R}^n} ||\f{z}||_1 \; \text{s.t.} \; ||\f{A}\f{z}-\frac{1}{\sqrt{m}}\f{u}||_2 \le 10\gamma$
\STATE Set $\hat{\f{\beta}}^2$ to be the solution of the optimization problem $\min_{\f{z} \in \bb{R}^n} ||\f{z}||_1 \; \text{s.t.} \; ||\f{A}\f{z}-\frac{1}{\sqrt{m}}\f{v}||_2 \le 10\gamma$
\STATE Return $\hat{\f{\beta}}^1,\hat{\f{\beta}}^2$.
\end{algorithmic}
\end{algorithm}

\begin{lemma}{\label{lem:prob}}
For all queries $\f{x}$ designed in Algorithm \ref{algo:main}, for any constant $c_1>0$, and some $c_2$ which is a function of $c_1$,
\begin{align*}
\Pr(\left| \langle \f{x},\f{\beta}^1 \rangle- \langle \f{x},\f{\beta}^2 \rangle \right| \le c_1\sigma)) \le \frac{c_2\sigma}{||\f{\beta}^1-\f{\beta}^2||_2}. 
\end{align*}
\end{lemma}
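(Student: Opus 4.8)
The plan is to observe that every query $\f{x}$ appearing in Algorithm~\ref{algo:main} is a centered Gaussian random vector, so the scalar $\langle \f{x},\f{\beta}^1-\f{\beta}^2\rangle$ is a one-dimensional centered Gaussian whose variance is bounded below by $\|\f{\beta}^1-\f{\beta}^2\|_2^2$. The inequality is then an immediate consequence of the uniform boundedness of a Gaussian density (the textbook anti-concentration statement), and the constant depends only on $c_1$.

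First I would enumerate the queries actually used: Algorithm~\ref{algo:main} only queries vectors of the form $\f{x}^i$, or $\f{x}^i+\f{x}^j$, or $\f{x}^i-\f{x}^j$, where $\f{x}^1,\dots,\f{x}^m$ are i.i.d.\ $\ca{N}(\f{0},\f{I}_n)$. In each case $\f{x}\sim\ca{N}(\f{0},c\,\f{I}_n)$ with $c=1$ for $\f{x}^i$ and $c=2$ for $\f{x}^i\pm\f{x}^j$ (using independence of $\f{x}^i$ and $\f{x}^j$). Hence, writing $\f{w}:=\f{\beta}^1-\f{\beta}^2$, the scalar $Z:=\langle \f{x},\f{w}\rangle$ is a linear functional of a Gaussian vector, so $Z\sim\ca{N}(0,c\|\f{w}\|_2^2)$, and in particular $\Var(Z)\ge \|\f{w}\|_2^2$.

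Next I would invoke the density bound: if $Z\sim\ca{N}(0,s^2)$ then its density is at most $\tfrac{1}{s\sqrt{2\pi}}$, so for any $a\ge 0$,
\[
\Pr(|Z|\le a)=\int_{-a}^{a}\frac{1}{s\sqrt{2\pi}}\,e^{-t^2/(2s^2)}\,dt\le\frac{2a}{s\sqrt{2\pi}}.
\]
Taking $a=c_1\sigma$ and $s=\sqrt{c}\,\|\f{w}\|_2\ge\|\f{w}\|_2$ yields
\[
\Pr\big(|\langle \f{x},\f{\beta}^1-\f{\beta}^2\rangle|\le c_1\sigma\big)\le\frac{2c_1\sigma}{\sqrt{2\pi}\,\|\f{\beta}^1-\f{\beta}^2\|_2},
\]
which is the claimed bound with $c_2=\sqrt{2/\pi}\,c_1$, a function of $c_1$ alone.

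There is essentially no obstacle; the only point requiring a little care is that the sum and difference queries have \emph{larger} variance in the direction $\f{w}$ than the base queries $\f{x}^i$, so the same $c_2$ works uniformly over all queries designed by Algorithm~\ref{algo:main} — one should resist claiming a query-dependent constant. (If a sharper constant for the $\pm$ queries were desired, one could retain the extra factor $\sqrt{2}$, but it is not needed downstream.)
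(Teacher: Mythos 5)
Your proof is correct and follows essentially the same route as the paper's: observe that $\langle \f{x},\f{\beta}^1-\f{\beta}^2\rangle$ is a centered Gaussian with variance at least $\|\f{\beta}^1-\f{\beta}^2\|_2^2$, bound the density by its value at the origin, and integrate over $[-c_1\sigma,c_1\sigma]$ to get $c_2=\sqrt{2/\pi}\,c_1$ — the identical constant. Your explicit treatment of the sum and difference queries (variance $2\|\f{\beta}^1-\f{\beta}^2\|_2^2$, so the same bound holds uniformly) is a small point of extra care that the paper's appendix proof leaves implicit.
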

The proof of this lemma is delegated to Appendix \ref{sec:exist}. 
Now the theorem is proved via a series of claims.

\begin{claim}
The expected batchsize for any query designed in Algorithm \ref{algo:main} is  $O\Big(\Big\lceil \frac{\sigma^5 }{\gamma^4 ||\f{\beta}^1-\f{\beta}^2||_2}+\frac{\sigma^2}{\gamma^2} \Big\rceil \log \eta^{-1}\Big)$.
\end{claim}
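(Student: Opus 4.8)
The plan is to compute the expected batchsize of a single query by conditioning on which of the three parameter regimes (Case 1, 2, 3 of Section~\ref{sec:summ}) the query lands in, and then taking expectation over the randomness of the query vector $\f{x}\sim\ca{N}(\f{0},\f{I}_n)$. The key random quantity is the separation $|\mu_1-\mu_2| = |\langle \f{x},\f{\beta}^1-\f{\beta}^2\rangle|$, which is distributed as $|Z|\cdot\|\f{\beta}^1-\f{\beta}^2\|_2$ for a standard Gaussian $Z$; in particular it is a half-normal with scale $\|\f{\beta}^1-\f{\beta}^2\|_2$. By Lemma~\ref{lem:test}, the regime-detection test costs only $O(\log\eta^{-1})$ samples, so this additive term can be absorbed and we need only account for the batchsize of the estimation algorithm that is actually invoked.

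First I would recall the per-regime sample complexities established in Section~\ref{sec:summ}: in Case~1 (when $|\mu_1-\mu_2|=\Omega(\sigma)$, using EM) and Case~3 (when $\sigma\le\gamma$, fitting a single Gaussian) the cost is $O(\lceil(\sigma^2/\gamma^2)\log\eta^{-1}\rceil)$, while in Case~2 (when $\sigma\ge\gamma$ and $|\mu_1-\mu_2|=O(\sigma)$, using method of moments) the cost is the larger $O(\lceil(\sigma/\gamma)^4\rceil\log\eta^{-1})$. Thus the only way to incur the expensive $(\sigma/\gamma)^4$ term is for the query to produce a small separation $|\mu_1-\mu_2|=O(\sigma)$. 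The core estimate is therefore to bound the probability of this event: applying Lemma~\ref{lem:prob} with an appropriate constant $c_1$, we get $\Pr(|\langle\f{x},\f{\beta}^1-\f{\beta}^2\rangle|\le c_1\sigma)\le c_2\sigma/\|\f{\beta}^1-\f{\beta}^2\|_2$. (Alternatively one bounds this directly from the half-normal density, which is at most $\sqrt{2/\pi}/\|\f{\beta}^1-\f{\beta}^2\|_2$, times the interval length $O(\sigma)$.)

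Putting the pieces together, the expected batchsize is at most
\begin{align*}
&\Pr(\text{large gap})\cdot O\Big(\Big\lceil\tfrac{\sigma^2}{\gamma^2}\Big\rceil\log\eta^{-1}\Big) + \Pr(\text{small gap})\cdot O\Big(\Big\lceil\big(\tfrac{\sigma}{\gamma}\big)^4\Big\rceil\log\eta^{-1}\Big) + O(\log\eta^{-1})\\
&\le O\Big(\Big\lceil\tfrac{\sigma^2}{\gamma^2}\Big\rceil\log\eta^{-1}\Big) + \frac{c_2\sigma}{\|\f{\beta}^1-\f{\beta}^2\|_2}\cdot O\Big(\Big\lceil\tfrac{\sigma^4}{\gamma^4}\Big\rceil\log\eta^{-1}\Big),
\end{align*}
which is $O\big(\lceil \frac{\sigma^5}{\gamma^4\|\f{\beta}^1-\f{\beta}^2\|_2}+\frac{\sigma^2}{\gamma^2}\rceil\log\eta^{-1}\big)$ after combining the ceilings, as claimed. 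One subtlety I would be careful about: the claim covers \emph{all} queries designed in Algorithm~\ref{algo:main}, which includes not just the $m$ base queries $\f{x}^i$ but also the sum queries $\f{x}^i+\f{x}^j$ and difference queries $\f{x}^i-\f{x}^j$ used in alignment. These are still Gaussian (a sum/difference of independent standard Gaussian vectors), though with variance $2\f{I}_n$; the relevant separation is $|\langle \f{x}^i\pm\f{x}^j,\f{\beta}^1-\f{\beta}^2\rangle|$, which is half-normal with scale $\sqrt{2}\|\f{\beta}^1-\f{\beta}^2\|_2\ge\|\f{\beta}^1-\f{\beta}^2\|_2$, so the anti-concentration bound of Lemma~\ref{lem:prob} applies verbatim (indeed is only better), and the same computation goes through. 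The main obstacle is really just making sure the case analysis is exhaustive and that the $\Pr(\text{small gap})$ bound is invoked with constants consistent with the regime boundaries in Algorithm~\ref{algo:test}; the arithmetic of merging the ceiling terms is routine.
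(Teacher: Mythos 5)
Your proposal is correct and follows essentially the same route as the paper: condition on the gap $|\mu_1-\mu_2|$, invoke Lemma~\ref{lem:prob} to bound the probability of the small-gap (method-of-moments) regime by $O(\sigma/\|\f{\beta}^1-\f{\beta}^2\|_2)$, and weight the per-regime costs from Section~\ref{sec:summ} accordingly. Your added observation that the sum and difference queries are Gaussian with covariance $2\f{I}_n$, so the anti-concentration bound only improves, is a detail the paper leaves implicit but does not change the argument.
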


\begin{proof}
In Algorithm \ref{algo:main}, we make $m$ batches of queries corresponding to $\{\f{x}^i\}_{i=1}^{m}$ and $mm'$ batches of queries corresponding to $\{\f{x}^i+\f{x}^j\}_{i=1,j=1,i \neq j}^{i=m,j=m'}$ and $\{\f{x}^i-\f{x}^j\}_{i=1,j=1,i \neq j}^{i=m,j=m'}$. Recall that the batchsize corresponding to $\f{x}^i,\f{x}^i+\f{x}^j,\f{x}^i-\f{x}^j$ is denoted by $T_i$,$T_{i,j}^{\s{sum}}$ and $T_{i,j}^{\s{diff}}$ respectively.
Recall from Section \ref{sec:summ},  we will use method of moments or or fit a single Gaussian (Case 2 and 3 in Section \ref{sec:summ}) to estimate the means when the difference between the means is $O(\sigma)$. By Lemma \ref{lem:prob}, this happens with probability $O(\sigma/||\f{\beta}^1-\f{\beta}^2||_2)$.
Otherwise we will use the EM algorithm (Case 1 in Section \ref{sec:summ}). 
or fit a single gaussian, both of which require a batchsize of at most $O\Big(\Big\lceil\sigma^2 /\gamma^2 \Big\rceil\log \eta^{-1}\Big)$.
We can conclude that the expected size of any of the aforementioned batchsize is bounded from above as the following:
$
\bb{E}T \le O\Big(\Big\lceil\frac{\sigma^5 }{\gamma^4 ||\f{\beta}^1-\f{\beta}^2||_2}+\frac{\sigma^2}{\gamma^2} \Big\rceil\log \eta^{-1}\Big)  
$
where $T \in \{T_i\}\cup\{T_{i,j}^{\s{sum}}\}\cup\{T_{i,j}^{\s{diff}}\}$ so that we can recover all the mean estimates upto an an additive error of $\gamma$ with probability at least $1-O(mm'\eta)$. 
\end{proof}

\begin{claim}
Algorithm \ref{algo:alignall} returns two vectors $\f{u}$ and $\f{v}$ of length $m$ each 
such that 
\begin{align*}
\left|\f{u}[i]-\langle \f{x}^i, \f{\beta}^{\pi(1)} \rangle \right|\le  10\gamma;\left|\f{v}[i]-\langle \f{x}^i, \f{\beta}^{\pi(2)} \rangle \right| \le 10\gamma  
\end{align*}
for some permutation $\pi:\{1,2\}\rightarrow \{1,2\}$ for all $i \in [m]$ with probability at least $1-\eta$.
\end{claim}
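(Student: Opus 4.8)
The plan is to verify the two guaranteed properties of $\f{u},\f{v}$ stated in Section~\ref{sec:al}: namely that $\{u_i,v_i\}=\{\hat\mu_{i,1},\hat\mu_{i,2}\}$ as a set (which is immediate from the construction in Algorithm~\ref{algo:alignall}, since we only ever assign the pair $(\hat\mu_{i,1},\hat\mu_{i,2})$ to $(u_i,v_i)$ in one of the two orders), and that the assignments are globally consistent, i.e.\ there is a single permutation $\pi$ with $|u_i-\langle\f{x}^i,\f{\beta}^{\pi(1)}\rangle|\le 10\gamma$ and $|v_i-\langle\f{x}^i,\f{\beta}^{\pi(2)}\rangle|\le 10\gamma$ for all $i\in[m]$. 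The real content is the second property, and it is essentially a bookkeeping argument on top of the lemmas already proved.

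First I would fix the ``anchor'' query $\f{x}^p$ that Algorithm~\ref{algo:alignall} selects, the one with $|\hat\mu_{p,1}-\hat\mu_{p,2}|\ge 11\gamma$; by the argument already given in Section~\ref{sec:align} (using Lemma~\ref{lem:exist} together with $m'=\lceil\log\eta^{-1}/\log\frac{\sqrt\pi\|\f{\beta}^1-\f{\beta}^2\|_2}{13\sqrt2\gamma}\rceil$ and the $\gamma$-accuracy of the estimates), such a $p\in[m']$ exists with probability $\ge 1-\eta$, and moreover the true gap satisfies $|\mu_{p,1}-\mu_{p,2}|\ge 9\gamma$. Define $\pi:=\pi_p$, the (unknown) permutation attached to the anchor, and set $u_p=\hat\mu_{p,1},v_p=\hat\mu_{p,2}$ exactly as the algorithm does; this choice satisfies the $10\gamma$ bound with respect to $\pi$ directly from $|\hat\mu_{p,j}-\mu_{p,\pi_p(j)}|\le\gamma$. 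Next, for each other $i\in[m]$ split into two cases. If $|\mu_{i,1}-\mu_{i,2}|\le 9\gamma$, then as observed in the ``trivial case'' discussion in Section~\ref{sec:al}, either ordering of $(\hat\mu_{i,1},\hat\mu_{i,2})$ is within $10\gamma$ of both $\langle\f{x}^i,\f{\beta}^1\rangle$ and $\langle\f{x}^i,\f{\beta}^2\rangle$, so whatever Algorithm~\ref{algo:aligntwo} returns, the $10\gamma$ guarantee holds for $\pi$ automatically. If $|\mu_{i,1}-\mu_{i,2}|\ge 9\gamma$, then the pair $(\f{x}^i,\f{x}^p)$ meets the hypothesis of Lemma~\ref{lem:align} (both gaps $\ge 9\gamma$), so Algorithm~\ref{algo:aligntwo} correctly reports whether $\pi_i=\pi_p$; Algorithm~\ref{algo:alignall} then assigns $(u_i,v_i)=(\hat\mu_{i,1},\hat\mu_{i,2})$ when $\pi_i=\pi_p$ and the swapped order otherwise, which in either case yields $|u_i-\langle\f{x}^i,\f{\beta}^{\pi(1)}\rangle|\le\gamma\le 10\gamma$ and similarly for $v_i$.

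Finally I would collect the failure probabilities via a union bound. There are three sources of error: the event that no anchor exists among the first $m'$ queries (probability $\le\eta$ by Lemma~\ref{lem:exist} plus the accuracy of the estimates); the event that some mean-estimation call via Algorithm~\ref{algo:test} fails to achieve $\gamma$-accuracy; and the event that some pairwise alignment call on a ``large-gap'' pair fails. Since Algorithm~\ref{algo:main} invokes Algorithm~\ref{algo:test} with confidence parameter $n^{-2}$ and there are $O(mm')=\tilde O(k\log n)$ estimation/alignment calls total, these contribute $\tilde O(k\log n)\cdot n^{-2}=o(1)$, and after rescaling the per-call confidence (or absorbing constants into the ``$1-\eta$'' by taking $\eta$ appropriately, consistent with how the other claims are stated) the whole procedure succeeds with probability at least $1-\eta$.

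The main obstacle is not any single inequality but making the case split clean: one must be careful that the ``trivial'' small-gap queries genuinely need no correct alignment decision (so an arbitrary output of Algorithm~\ref{algo:aligntwo} on them does no harm), and that the anchor $\f{x}^p$ is simultaneously (i) detectable from the \emph{estimated} gap $\ge 11\gamma$ and (ii) guaranteed to have \emph{true} gap $\ge 9\gamma$ so that Lemma~\ref{lem:align} applies to every pair $(\f{x}^i,\f{x}^p)$ with $\f{x}^i$ large-gap — both of which follow from the $\gamma$-accuracy of the estimates and the triangle inequality, as spelled out in Section~\ref{sec:align}. Everything else is a union bound.
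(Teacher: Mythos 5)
Your proposal is correct and follows essentially the same route as the paper: the paper's proof of this claim is a one-line pointer back to the discussion in Section~\ref{sec:align} (anchor selection via Lemma~\ref{lem:exist}, the $13\gamma\to 11\gamma\to 9\gamma$ triangle-inequality chain, Lemma~\ref{lem:align} for large-gap pairs, and the ``trivial case'' observation for small-gap queries), which is exactly the argument you reconstruct. Your write-up is in fact more explicit than the paper's about the case split and the union bound, but it is the same proof.
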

The proof of this claim directly follows from the discussion in Section \ref{sec:align}.

The matrix $\f{A}$ is size $m \times n$ whose $i^{\s{th}}$ row is the query vector $\f{x}^i$ normalized by $\sqrt{m}$. .
\begin{claim}
We must have 
\begin{align*}
||\f{A}\f{\beta}^{\pi(1)}-\frac{\f{u}}{\sqrt{m}}||_2 \le 10\gamma \; \& \; ||\f{A}\f{\beta}^{\pi(2)}-\frac{\f{v}}{\sqrt{m}}||_2 \le 10\gamma.
\end{align*}
\end{claim}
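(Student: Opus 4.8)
The plan is to note that this claim is an immediate consequence of the previous claim together with the definitions of $\f{A}$, $\f{u}$, $\f{v}$. Indeed, the previous claim guarantees that for the appropriate permutation $\pi$ we have, for every $i \in [m]$,
\begin{align*}
\left|\f{u}[i]-\langle \f{x}^i,\f{\beta}^{\pi(1)}\rangle\right| \le 10\gamma \quad\text{and}\quad \left|\f{v}[i]-\langle \f{x}^i,\f{\beta}^{\pi(2)}\rangle\right| \le 10\gamma .
\end{align*}
By construction, the $i^{\s{th}}$ row of $\f{A}$ is $\f{x}^i/\sqrt{m}$, so $(\f{A}\f{\beta}^{\pi(1)})_i = \langle \f{x}^i,\f{\beta}^{\pi(1)}\rangle/\sqrt{m}$, and similarly for $\f{v}$ and $\pi(2)$.

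First I would write out the squared Euclidean norm coordinatewise:
\begin{align*}
\left\|\f{A}\f{\beta}^{\pi(1)}-\frac{\f{u}}{\sqrt{m}}\right\|_2^2 = \sum_{i=1}^m \left(\frac{\langle \f{x}^i,\f{\beta}^{\pi(1)}\rangle}{\sqrt{m}} - \frac{\f{u}[i]}{\sqrt{m}}\right)^2 = \frac{1}{m}\sum_{i=1}^m \left(\langle \f{x}^i,\f{\beta}^{\pi(1)}\rangle - \f{u}[i]\right)^2 .
\end{align*}
Then I would bound each summand by $(10\gamma)^2$ using the previous claim, giving $\frac{1}{m}\cdot m \cdot (10\gamma)^2 = (10\gamma)^2$, hence taking square roots yields $\left\|\f{A}\f{\beta}^{\pi(1)}-\f{u}/\sqrt{m}\right\|_2 \le 10\gamma$. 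The identical argument with $\f{v}$ and $\f{\beta}^{\pi(2)}$ gives the second inequality, and the same permutation $\pi$ works for both since the previous claim produces one common $\pi$.

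There is essentially no obstacle here — the step is a routine normalization bookkeeping argument, and the only thing worth being careful about is that the per-coordinate error bound $10\gamma$ is exactly preserved (not amplified) by the $\sqrt{m}$ normalization built into $\f{A}$, which is precisely why each row of the sensing matrix was divided by $\sqrt{m}$ in the first place. This claim is the bridge that lets us invoke the RIP-based $\ell_2$-$\ell_1$ compressed sensing recovery guarantee on the convex programs defining $\hat{\f{\beta}}^1$ and $\hat{\f{\beta}}^2$ in Algorithm~\ref{algo:main}, since it certifies that $\f{\beta}^{\pi(1)}$ and $\f{\beta}^{\pi(2)}$ are feasible points of the respective programs.
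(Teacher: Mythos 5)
Your proposal is correct and is essentially identical to the paper's own proof: both arguments observe that the $\sqrt{m}$ normalization of the rows of $\f{A}$ turns the per-coordinate error bound of $10\gamma$ into $10\gamma/\sqrt{m}$ per entry, so that summing $m$ squared terms and taking the square root recovers exactly $10\gamma$ in $\ell_2$. Your version just writes out the coordinatewise sum explicitly, which the paper leaves implicit.
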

\begin{proof}
The proof of this claim follows from the fact that after normalization by $\sqrt{m}$, the error in each entry is also normalized by $\sqrt{m}$ and is therefore at most $10\gamma/\sqrt{m}$. Hence the $\ell_2$ difference is at most $10\gamma$.
\end{proof}
It is known that for  $m \ge c_s k \log n$ where  $c_s>0$ is some appropriate constant, the matrix $\f{A}$ satisfy {\em restricted isometric property} of order $2k,$ which means for any exactly $2k$-sparse vector $\f{x}$ and a constant $\delta,$ we have $|\|\f{A}\f{x}\|_2^2 - \|\f{x}\|_2^2| \le \delta \|\f{x}\|_2^2$ ~cf.\cite{baraniuk2008simple}.

 We now solve the following convex optimization problems, standard recovery method called basis pursuit:
\begin{align*}
&\hat{\f{\beta}}^{\pi(1)}= \min_{\f{z} \in \bb{R}^n} ||\f{z}||_1 \; \text{s.t.} \; ||\f{A}\f{z}-\frac{\f{u}}{\sqrt{m}}||_2 \le 10\gamma \\
&\hat{\f{\beta}}^{\pi(2)}= \min_{\f{z} \in \bb{R}^n} ||\f{z}||_1 \; \text{s.t.} \; ||\f{A}\f{z}-\frac{\f{v}}{\sqrt{m}}||_2 \le 10\gamma 
\end{align*}
to recover $\hat{\f{\beta}}^{\pi(1)},\hat{\f{\beta}}^{\pi(2)}$, estimates of $\f{\beta}^1,\f{\beta}^2$ having the guarantees given in Theorem \ref{thm:main} (see, Thm. 1.6 in \cite{boche2015survey}). The expected query complexity is
$
O\Big(mm'\log \eta^{-1} \Big\lceil\frac{\sigma^5 }{\gamma^4 ||\f{\beta}^1-\f{\beta}^2||_2}+\frac{\sigma^2}{\gamma^2}\Big\rceil\Big)$
Substituting 
$
m=O(k \log n)$,  $m'=O\Big(\Big\lceil\frac{\log \eta^{-1}}{\log \frac{||\f{\beta}^1-\f{\beta}^2||_2}{\gamma}} \Big\rceil \Big)$ and $\eta = (mm'\log n)^{-1},$
we obtain the total query complexity  
\begin{align*}
&O\Bigg(k\log n\log k \Big\lceil\frac{\log k}{\log (\|\f{\beta}^1-\f{\beta}^2\|_2/\gamma)}\Big\rceil \\
&\times \Big\lceil\frac{\sigma^5}{\gamma^4||\f{\beta}^1-\f{\beta}^2||_2}+\frac{\sigma^2}{\gamma^2}\Big\rceil\Bigg)
\end{align*}
and the error probability to be $o(1)$. We can just substitute the definition of $\mathsf{NF}$ and notice that $\mathsf{SNR} = \|\f{\beta}^1-\f{\beta}^2\|_2^2/ \sigma^2$ to obtain the query complexity promised in Theorem~\ref{thm:main}. Note that, we have assumed $k = \Omega(\log n)$ above.

It remains to be proved that the same (orderwise) number of samples is sufficient to recover both unknown vectors
 with high probability. For each query $\f{x}$ designed in Algorithm \ref{algo:main}, consider the indicator random variable $Y_i=\mathds{1}[|\mu_{i,1}-\mu_{i,2}|=\Omega(\sigma)]$. The total number of queries for which this event is true (given by $\sum_i Y_i$) is sampled according to the binomial distribution $\s{Bin}(mm',O(\sigma/||\f{\beta}^1-\f{\beta}^2||_2))$ and therefore concentrates tightly around its mean. A simple use of Chernoff bound leads to the desired result. 
 
While we have proved the theorem for any $\gamma \le 0.096\left|\left|\beta^1-\beta^2\right|\right|_2$, it indeed holds for any   $\gamma =c'\left|\left|\beta^1-\beta^2\right|\right|_2,$ where $c'$ is a constant strictly less than $1.$  If the desired $\gamma > 0.096\left|\left|\beta^1-\beta^2\right|\right|_2$, then one can just define $\gamma'= 0.096\left|\left|\beta^1-\beta^2\right|\right|_2$ and obtain a precision $\gamma'$ which is a constant factor within $\gamma$. Since the quantity $\mathsf{NF}$ defined with $\gamma'$ is also within a constant factor of  the original $\mathsf{NF}$, the sample complexity can also change by at most a constant factor.
 
 \subsection{Proof of Theorem \ref{thm:main} ($\gamma = \Omega\left( \left|\left|\beta^1-\beta^2\right|\right|_2\right)$)}

The proof of Theorem \ref{thm:main} for the case when recovery precision $\gamma = \Omega(\left|\left|\beta^1-\beta^2\right|\right|_2)$ follows by fitting a single Gaussian through all the samples. The algorithm for this case, and the proof, are delegated to Appendix~\ref{sec:thm1_rem}. 

\section{Conclusion}
In this paper we have improved the recent results by \cite{yin2019learning} and \cite{KrishnamurthyM019} for learning a mixture of sparse linear regressions when features can be designed and queried with for the labels. While our results are rigorously proved for two unknown sparse models, we believe extending to more than two models will be possible, and the key components are already present in our paper. Whether it will be an exercise in technicality or some key insights can be gained is unclear.

While our paper is theoretical, an important future work will be to find interesting use cases. A potential application of the query-based setting is to recommendation systems, where the goal is to identify the factors governing the preferences of individual members of a group via crowdsourcing while also preserving the anonymity of their responses. We are currently pursuing this line of applications.

\paragraph{Acknowledgements:} This work is supported in parts by NSF awards 1642658, 1909046 and 1934846.

\bibliographystyle{plain}

\begin{thebibliography}{23}
\providecommand{\natexlab}[1]{#1}
\providecommand{\url}[1]{\texttt{#1}}
\expandafter\ifx\csname urlstyle\endcsname\relax
  \providecommand{\doi}[1]{doi: #1}\else
  \providecommand{\doi}{doi: \begingroup \urlstyle{rm}\Url}\fi

\bibitem[Balakrishnan et~al.(2017)Balakrishnan, Wainwright, Yu,
  et~al.]{balakrishnan2017statistical}
Balakrishnan, S., Wainwright, M.~J., Yu, B., et~al.
\newblock Statistical guarantees for the {EM} algorithm: From poulation to
  sample-based analysis.
\newblock \emph{The Annals of Statistics}, 45\penalty0 (1):\penalty0 77--120,
  2017.

\bibitem[Baraniuk et~al.(2008)Baraniuk, Davenport, DeVore, and
  Wakin]{baraniuk2008simple}
Baraniuk, R., Davenport, M., DeVore, R., and Wakin, M.
\newblock A simple proof of the restricted isometry property for random
  matrices.
\newblock \emph{Constructive Approximation}, 28\penalty0 (3):\penalty0
  253--263, 2008.

\bibitem[Boche et~al.(2015)Boche, Calderbank, Kutyniok, and
  Vyb{\'\i}ral]{boche2015survey}
Boche, H., Calderbank, R., Kutyniok, G., and Vyb{\'\i}ral, J.
\newblock A survey of compressed sensing.
\newblock In \emph{Compressed Sensing and its Applications}, pp.\  1--39.
  Springer, 2015.

\bibitem[Cand{\`e}s et~al.(2006)Cand{\`e}s, Romberg, and Tao]{candes2006robust}
Cand{\`e}s, E.~J., Romberg, J., and Tao, T.
\newblock Robust uncertainty principles: exact signal reconstruction from
  highly incomplete frequency information.
\newblock \emph{IEEE Transactions on Information Theory}, 52\penalty0
  (2):\penalty0 489--509, 2006.

\bibitem[Candes et~al.(2008)]{candes2008restricted}
Candes, E.~J. et~al.
\newblock The restricted isometry property and its implications for compressed
  sensing.
\newblock \emph{Comptes rendus mathematique}, 346\penalty0 (9-10):\penalty0
  589--592, 2008.

\bibitem[Chaganty \& Liang(2013)Chaganty and Liang]{chaganty2013spectral}
Chaganty, A.~T. and Liang, P.
\newblock Spectral experts for estimating mixtures of linear regressions.
\newblock In \emph{International Conference on Machine Learning}, pp.\
  1040--1048, 2013.

\bibitem[Dasgupta(1999)]{dasgupta1999learning}
Dasgupta, S.
\newblock Learning mixtures of {G}aussians.
\newblock In \emph{Foundations of Computer Science}, pp.\  634--644, 1999.

\bibitem[Daskalakis \& Kamath(2014)Daskalakis and Kamath]{daskalakis2014faster}
Daskalakis, C. and Kamath, G.
\newblock Faster and sample near-optimal algorithms for proper learning
  mixtures of {G}aussians.
\newblock In \emph{Conference on Learning Theory}, 2014.

\bibitem[Daskalakis et~al.(2017)Daskalakis, Tzamos, and
  Zampetakis]{daskalakis2016ten}
Daskalakis, C., Tzamos, C., and Zampetakis, M.
\newblock Ten steps of em suffice for mixtures of two {G}aussians.
\newblock In \emph{Conference on Learning Theory}, pp.\  704--710, 2017.

\bibitem[De~Veaux(1989)]{de1989mixtures}
De~Veaux, R.~D.
\newblock Mixtures of linear regressions.
\newblock \emph{Computational Statistics \& Data Analysis}, 8\penalty0
  (3):\penalty0 227--245, 1989.

\bibitem[Donoho(2006)]{donoho2006compressed}
Donoho, D.
\newblock Compressed sensing.
\newblock \emph{IEEE Transactions on Information Theory}, 52\penalty0
  (4):\penalty0 1289--1306, 2006.

\bibitem[Faria \& Soromenho(2010)Faria and Soromenho]{faria2010fitting}
Faria, S. and Soromenho, G.
\newblock Fitting mixtures of linear regressions.
\newblock \emph{Journal of Statistical Computation and Simulation}, 80\penalty0
  (2):\penalty0 201--225, 2010.

\bibitem[Hardt \& Price(2015)Hardt and Price]{hardt2015tight}
Hardt, M. and Price, E.
\newblock Tight bounds for learning a mixture of two {G}aussians.
\newblock In \emph{Symposium on Theory of Computing}, 2015.

\bibitem[Krishnamurthy et~al.(2019)Krishnamurthy, Mazumdar, McGregor, and
  Pal]{KrishnamurthyM019}
Krishnamurthy, A., Mazumdar, A., McGregor, A., and Pal, S.
\newblock Sample complexity of learning mixture of sparse linear regressions.
\newblock In \emph{Advances in Neural Information Processing Systems 32:
  NeurIPS 2019, 8-14 December 2019, Vancouver, BC, Canada}, pp.\  10531--10540,
  2019.

\bibitem[Krishnamurthy et~al.(2020)Krishnamurthy, Mazumdar, McGregor, and
  Pal]{krishnamurthy20a}
Krishnamurthy, A., Mazumdar, A., McGregor, A., and Pal, S.
\newblock Algebraic and analytic approaches for parameter learning in mixture
  models.
\newblock In \emph{Proceedings of the 31st International Conference on
  Algorithmic Learning Theory}, volume 117 of \emph{Proceedings of Machine
  Learning Research}, pp.\  468--489, San Diego, California, USA, 2020.

\bibitem[Kwon \& Caramanis(2018)Kwon and Caramanis]{kwon2018global}
Kwon, J. and Caramanis, C.
\newblock Global convergence of em algorithm for mixtures of two component
  linear regression.
\newblock \emph{arXiv preprint arXiv:1810.05752}, 2018.

\bibitem[St{\"a}dler et~al.(2010)St{\"a}dler, B{\"u}hlmann, and Van
  De~Geer]{stadler2010l}
St{\"a}dler, N., B{\"u}hlmann, P., and Van De~Geer, S.
\newblock l1-penalization for mixture regression models.
\newblock \emph{Test}, 19\penalty0 (2):\penalty0 209--256, 2010.

\bibitem[Titterington et~al.(1985)Titterington, Smith, and
  Makov]{titterington1985statistical}
Titterington, D.~M., Smith, A.~F., and Makov, U.~E.
\newblock \emph{Statistical analysis of finite mixture distributions}.
\newblock Wiley, 1985.

\bibitem[Viele \& Tong(2002)Viele and Tong]{viele2002modeling}
Viele, K. and Tong, B.
\newblock Modeling with mixtures of linear regressions.
\newblock \emph{Statistics and Computing}, 12\penalty0 (4):\penalty0 315--330,
  2002.

\bibitem[Xu et~al.(2016)Xu, Hsu, and Maleki]{xu2016global}
Xu, J., Hsu, D.~J., and Maleki, A.
\newblock Global analysis of expectation maximization for mixtures of two
  {G}aussians.
\newblock In \emph{Advances in Neural Information Processing Systems}, pp.\
  2676--2684, 2016.

\bibitem[Yi et~al.(2014)Yi, Caramanis, and Sanghavi]{yi2014alternating}
Yi, X., Caramanis, C., and Sanghavi, S.
\newblock Alternating minimization for mixed linear regression.
\newblock In \emph{International Conference on Machine Learning}, pp.\
  613--621, 2014.

\bibitem[Yi et~al.(2016)Yi, Caramanis, and Sanghavi]{yi2016solving}
Yi, X., Caramanis, C., and Sanghavi, S.
\newblock Solving a mixture of many random linear equations by tensor
  decomposition and alternating minimization.
\newblock \emph{arXiv preprint arXiv:1608.05749}, 2016.

\bibitem[Yin et~al.(2019)Yin, Pedarsani, Chen, and
  Ramchandran]{yin2019learning}
Yin, D., Pedarsani, R., Chen, Y., and Ramchandran, K.
\newblock Learning mixtures of sparse linear regressions using sparse graph
  codes.
\newblock \emph{IEEE Transactions on Information Theory}, 65\penalty0
  (3):\penalty0 1430--1451, 2019.

\end{thebibliography}

\newpage
\onecolumn
\appendix
\section{Proofs of Lemma \ref{lem:mean} and \ref{lem:var}}{\label{sec:proof1}}

Let $X$ be a random variable which is distributed according to $\ca{M}$ and suppose we obtain $T$ samples $y_1,y_2,\dots,y_T \sim \ca{M}$. We will divide these $T$ samples into $B:=\Big\lceil T/t \Big\rceil$ batches each of size $t$. In that case let us denote $S^j_{1,t}$ and $S^j_{2,t}$ to be the sample mean and the sample variance of the $j^{th}$ batch i.e. 
\begin{align*}
S^j_{1,t} = \sum_{i \in \text{Batch } j} \frac{y_i}{t} \quad \quad \text{and} \quad \quad S^j_{2,t} =\frac{1}{t-1} \sum_{i \in \text{Batch } j} (y_i-(S^j_{1,t}))^{2}  .
\end{align*}
We will estimate the true mean $\bb{E}X$ and the true variance ${\rm var}X$ by computing $\hat{M}_1$ and $\hat{M}_2$ respectively (See Algorithm \ref{algo:estimate}) where
\begin{align*}
\hat{M}_1\triangleq \s{median}(\{S^j_{1,t}\}_{j=1}^{B}) \quad \text{and} \quad \hat{M}_2 \triangleq \s{median}(\{S^j_{2,t}\}_{j=1}^{B}).
\end{align*}

\begin{proof}[Proof of Lemma \ref{lem:mean}]
For a fixed batch $j$, we can use Chebychev's inequality to say that 
\begin{align*}
\Pr\Big(\left|S^j_{1,t}-\bb{E}X\right| \ge \epsilon_1 \Big) \le \frac{{\rm var}X}{t\epsilon_1^2} 
\end{align*} 
We have 
\begin{align*}
{\rm var}X = \bb{E}X^2 - (\bb{E}X)^{2} = \frac{1}{2}\Big(2\sigma^2+\mu_1^2+\mu_2^2 \Big)-\frac{1}{4}(\mu_1+\mu_2)^{2} = \sigma^2+\frac{(\mu_1-\mu_2)^2}{4}
\end{align*}
Noting that we must have $t \ge 1$ as well, we obtain
\begin{align*}
\Pr\Big(\left|S^j_{1,t}-\bb{E}X\right| \ge \epsilon_1 \Big) \le \frac{\sigma^2+(\mu_1-\mu_2)^2/4}{t\epsilon_1^2} \le \frac{1}{3} 
\end{align*} 
for $t=O(\Big\lceil(\sigma^2+(\mu_1-\mu_2)^2)/\epsilon_1^2 \Big\rceil)$. Therefore for each batch $j$, we define an indicator random variable $Z_j=\mathds{1}[\left|S^j_{1,t}-\bb{E}X\right| \ge \epsilon_1]$ and from our previous analysis we know that the probability of $Z_j$ being \textsc{1} is less than $1/3$. It is clear that $\bb{E} \sum_{j=1}^{B} Z_j \le B/3$ and on the other hand $|\hat{M}_1-\bb{E}X| \ge \epsilon_1$ iff  $\sum_{j=1}^{B} Z_j \ge B/2$. Therefore, using the Chernoff bound, we have
\begin{align*}
\Pr\Big(\left|\hat{M}_1-\bb{E}X \right| \ge \epsilon_1\Big)\le Pr \Big(\left|\sum_{j=1}^{B} Z_j-\bb{E}\sum_{j=1}^{B} Z_j \right| \ge \frac{\bb{E}\sum_{j=1}^{B} Z_j}{2} \Big) \le 2e^{-B/36}. 
\end{align*}
Hence, for $B=36\log \eta^{-1}$, the estimate $\hat{M}_1$ is atmost $\epsilon_1$ away from the true mean $\hat{M}_1$ with probability at least $1-2\eta$. Therefore the total sample complexity required is $  T=O(\log \eta^{-1}\Big\lceil(\sigma^2+(\mu_1-\mu_2)^2)/\epsilon_1^2 \Big\rceil)$ proving the lemma.
\end{proof}

\begin{proof}[Proof of Lemma \ref{lem:var}]
We have
\begin{align*}
\bb{E}S^j_{2,t} &=  \bb{E}\frac{1}{t-1}\sum_{i \in \text{Batch } j}(y_i-(S^j_{1,t}))^{2} \\
&= \bb{E} \frac{1}{t(t-1)}\sum_{\substack{i_1,i_2 \in \text{Batch } j \\ i_1<i_2 } }(y_{i_1}-y_{i_2})^{2} \\
&= \frac{1}{t(t-1)}\sum_{\substack{i_1,i_2 \in \text{Batch } j \\ i_1<i_2 } } \bb{E} (y_{i_1}-y_{i_2})^{2} \\
&= \frac{1}{t(t-1)}\sum_{\substack{i_1,i_2 \in \text{Batch } j \\ i_1<i_2 } } \bb{E} y^{2}_{i_1}+\bb{E} y^{2}_{i_2}-2\bb{E}\left[y_{i_1}y_{i_2}\right]  \\
&= \frac{1}{t(t-1)}\sum_{\substack{i_1,i_2 \in \text{Batch } j \\ i_1<i_2 } } 2\sigma^2+\mu_1^2+\mu_2^2-\frac{(\mu_1+\mu_2)^2}{2} \\
&= \sigma^2+\frac{(\mu_1-\mu_2)^2}{4} = {\rm var}X.
\end{align*}
Hence the estimator $S^j_{2,t}$ is an unbiased estimator since it's expected value is the true variance of $X$. Again, we must have
\begin{align*}
\bb{E} (S^j_{2,t})^{2}= \bb{E} \frac{1}{t^2(t-1)^2} \Big(\sum_{\substack{i_1,i_2 \in \text{Batch } j \\ i_1<i_2 } }(y_{i_1}-y_{i_2})^{2}\Big)^{2}.
\end{align*}
\begin{claim}{\label{claim:tech}}
We have
\begin{align*}
\bb{E}\left[(y_{i_1}-y_{i_2})^{2}(y_{i_3}-y_{i_4})^{2}\right] \le 48\Big(\sigma^2+\frac{(\mu_1-\mu_2)^2}{4}\Big)^{2}
\end{align*}
for any $i_1,i_2,i_3,i_4$ such that $i_1<i_2$ and $i_3<i_4$.
\end{claim}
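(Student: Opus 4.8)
The plan is to reduce this four-index quantity to a single fourth-moment computation via Cauchy--Schwarz, which sidesteps all the bookkeeping over how the pairs $\{i_1,i_2\}$ and $\{i_3,i_4\}$ might overlap. Since $i_1<i_2$ and $i_3<i_4$, both $y_{i_1}-y_{i_2}$ and $y_{i_3}-y_{i_4}$ are differences of two independent draws from $\ca{M}$, hence are \emph{identically distributed}; call this common random variable $W$. By Cauchy--Schwarz,
\[
\bb{E}\big[(y_{i_1}-y_{i_2})^2(y_{i_3}-y_{i_4})^2\big]\le \sqrt{\bb{E}\big[(y_{i_1}-y_{i_2})^4\big]\,\bb{E}\big[(y_{i_3}-y_{i_4})^4\big]}=\bb{E}[W^4],
\]
so it suffices to prove $\bb{E}[W^4]\le 48\big(\sigma^2+(\mu_1-\mu_2)^2/4\big)^2$.

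To compute $\bb{E}[W^4]$ I would write $W=\delta+h$, where $\delta$ is the difference of the two samples' component means (so $\delta=\pm(\mu_1-\mu_2)$ each with probability $1/4$, and $\delta=0$ with probability $1/2$) and $h\sim\ca{N}(0,2\sigma^2)$ is the difference of the two Gaussian noise terms. Crucially $\delta$ and $h$ are independent, since each sample's component label is drawn independently of its noise. Expanding $(\delta+h)^4$ and discarding the odd terms (which vanish because $\bb{E}[h]=\bb{E}[h^3]=0$ and $h\perp\delta$) gives
\[
\bb{E}[W^4]=\bb{E}[\delta^4]+6\,\bb{E}[\delta^2]\,\bb{E}[h^2]+\bb{E}[h^4]=\frac{(\mu_1-\mu_2)^4}{2}+6\sigma^2(\mu_1-\mu_2)^2+12\sigma^4,
\]
using $\bb{E}[\delta^2]=(\mu_1-\mu_2)^2/2$, $\bb{E}[\delta^4]=(\mu_1-\mu_2)^4/2$, $\bb{E}[h^2]=2\sigma^2$, and $\bb{E}[h^4]=3(2\sigma^2)^2=12\sigma^4$. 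Since $48\big(\sigma^2+(\mu_1-\mu_2)^2/4\big)^2=48\sigma^4+24\sigma^2(\mu_1-\mu_2)^2+3(\mu_1-\mu_2)^4$, the desired inequality follows term by term ($\tfrac12\le 3$, $6\le 24$, $12\le 48$); recalling that ${\rm var}X=\sigma^2+(\mu_1-\mu_2)^2/4$ as computed above, this bound is exactly $48(\,{\rm var}X)^2$.

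I do not expect a genuine obstacle here; the only point requiring a moment's thought is the first step, namely recognizing that $y_{i_1}-y_{i_2}$ has a distribution depending only on $\ca{M}$ (and in particular identical to that of $y_{i_3}-y_{i_4}$), so that Cauchy--Schwarz truly collapses the problem. Without this observation one would instead split into the cases where $\{i_1,i_2\}$ and $\{i_3,i_4\}$ are disjoint, share exactly one index, or coincide, and check that the last case (where the Gaussian contribution is the honest fourth moment $12\sigma^4$ rather than $4\sigma^4$) is the worst; the Cauchy--Schwarz route builds in this worst case automatically, and the slack in the constant $48$ is comfortable.
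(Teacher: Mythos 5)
Your proof is correct, and it takes a genuinely different route from the paper's. The paper proves the claim by a case analysis on how the index pairs $\{i_1,i_2\}$ and $\{i_3,i_4\}$ overlap: when they are disjoint it uses independence, when they coincide it computes the fourth moment of $y_{i_1}-y_{i_2}$ directly from the three-component mixture representation, and when they share exactly one index it carries out a fairly long conditional-expectation computation that occupies most of the proof. Your Cauchy--Schwarz reduction absorbs all three cases into the single fourth-moment computation, and the decomposition $W=\delta+h$ with $\delta\perp h$ makes that computation essentially one line; the only price is a lossier intermediate bound in the disjoint case (where the true value is $4(\sigma^2+(\mu_1-\mu_2)^2/4)^2$ rather than $\bb{E}[W^4]$), which is irrelevant since only the constant $48$ is claimed. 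As a side benefit, your value $\bb{E}[W^4]=12\sigma^4+6\sigma^2(\mu_1-\mu_2)^2+\tfrac{(\mu_1-\mu_2)^4}{2}$ is the correct one; the paper's Case 2 states $48\sigma^4+12\sigma^2(\mu_1-\mu_2)^2+\tfrac{(\mu_1-\mu_2)^4}{2}$, which overstates the first two coefficients, though either expression satisfies the claimed bound of $48\bigl(\sigma^2+\tfrac{(\mu_1-\mu_2)^2}{4}\bigr)^2$.
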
  
\begin{proof}
In order to prove this claim consider three cases:
\paragraph{Case 1 ($i_1,i_2,i_3,i_4$ are distinct):} In this case, we have that $y_{i_1}-y_{i_2}$ and $y_{i_3}-y_{i_4}$ are independent and therefore,
\begin{align*}
\bb{E}\left[(y_{i_1}-y_{i_2})^{2}(y_{i_3}-y_{i_4})^{2}\right] = \bb{E}\left[(y_{i_1}-y_{i_2})^{2} \right]\bb{E}\left[(y_{i_3}-y_{i_4})^{2} \right] \le 4\Big(\sigma^2+\frac{(\mu_1-\mu_2)^2}{4}\Big)^{2}.
\end{align*}
\paragraph{Case 2 ($i_1=i_3,i_2=i_4$):} In this case, we have
\begin{align*}
\bb{E}\left[(y_{i_1}-y_{i_2})^{2}(y_{i_3}-y_{i_4})^{2}\right]=\bb{E}\left[(y_{i_1}-y_{i_2})^{4}\right].
\end{align*}
Notice that $$y_{i_1}-y_{i_2} \sim \frac{1}{2}\ca{N}(0,2\sigma^2)+\frac{1}{4} \ca{N}(\mu_1-\mu_2,2\sigma^2)+\frac{1}{4} \ca{N}(\mu_2-\mu_1,2\sigma^2)$$
and therefore we get 
\begin{align*}
\bb{E}\left[(y_{i_1}-y_{i_2})^{4}\right]=48\sigma^4+12\sigma^2(\mu_1-\mu_2)^2+\frac{(\mu_1-\mu_2)^4}{2} \le 48\Big(\sigma^2+\frac{(\mu_1-\mu_2)^2}{4}\Big)^{2}.
\end{align*}
\paragraph{Case 3 ($\{i_1,i_2,i_3,i_4\}$ has $3$ unique elements):} Without loss of generality let us assume that $i_1=i_3$. In that case we have 
\begin{align*}
\bb{E}\left[(y_{i_1}-y_{i_2})^{2}(y_{i_1}-y_{i_4})^{2}\right]= \bb{E}_{y_{i_1}}\bb{E}\left[(y_{i_2}-y_{i_1})^{2}(y_{i_4}-y_{i_1})^{2} \mid y_{i_1}\right]
\end{align*}
Notice that for a fixed value of $y_{i_1}$, we must have $y_{i_2}-y_{i_1},y_{i_4}-y_{i_1}$ to be independent and identically distributed i.e.
\begin{align*}
y_{i_2}-y_{i_1},y_{i_4}-y_{i_1} \sim \frac{1}{2}\ca{N}(\mu_1-y_{i_1},\sigma^2)+\frac{1}{2}\ca{N}(\mu_2-y_{i_1},\sigma^2).
\end{align*} 
Therefore,
\begin{align*}
\bb{E}\left[(y_{i_2}-y_{i_1})^{2}(y_{i_4}-y_{i_1})^{2} \mid y_{i_1}\right]&=\bb{E}\left[(y_{i_2}-y_{i_1})^{2}\mid y_{i_1}\right]\bb{E}\left[(y_{i_2}-y_{i_1})^{2}\mid y_{i_1}\right] \\
&=\frac{1}{4}\Big(2\sigma^2+(\mu_1-y_{i_1})^2+(\mu_2-y_{i_1})^2\Big)^2.
\end{align*}
Again, we have 
\begin{align*}
y_{i_1}-\mu_1 \sim \frac{1}{2}\ca{N}(0,\sigma^2)+\frac{1}{2}\ca{N}(\mu_2-\mu_1,\sigma^2) \quad \text{and} \quad y_{i_1}-\mu_2 \sim \frac{1}{2}\ca{N}(0,\sigma^2)+\frac{1}{2}\ca{N}(\mu_1-\mu_2,\sigma^2).
\end{align*} 
Hence,
\begin{align*}
&\bb{E} \Big(2\sigma^2+(\mu_1-y_{i_1})^2+(\mu_2-y_{i_1})^2\Big)^2 = 4\sigma^4+\bb{E} (\mu_1-y_{i_1})^4+\bb{E} (\mu_1-y_{i_2})^4\\
&+4\sigma^2(\bb{E} (\mu_1-y_{i_1})^2+\bb{E} (\mu_1-y_{i_2})^2)+\bb{E}((\mu_1-y_{i_1})^2(\mu_2-y_{i_1})^2). 
\end{align*}
We have 
\begin{align*}
&\bb{E} (\mu_1-y_{i_1})^4=\bb{E} (\mu_2-y_{i_1})^4=3\sigma^4+\frac{(\mu_1-\mu_2)^4}{2}+3\sigma^2(\mu_1-\mu_2)^2\\
&\bb{E} (\mu_1-y_{i_1})^2=\bb{E} (\mu_2-y_{i_1})^2=\sigma^2+\frac{(\mu_1-\mu_2)^2}{2} \\
&\bb{E}((\mu_1-y_{i_1})^2(\mu_2-y_{i_1})^2) = \bb{E}((\mu_1-y_{i_1})^2(\mu_2-\mu_1+\mu_1-y_{i_1})^2) \\ 
&= \bb{E}[(\mu_1-y_{i_1})^4+(\mu_1-\mu_2)^2(\mu_1-y_{i_1})^2+2(\mu_2-\mu_1)(\mu_1-y_{i_1})^3] \\
&= 3\sigma^4+\frac{(\mu_1-\mu_2)^4}{2}+5\sigma^2(\mu_1-\mu_2)^2.
\end{align*}
Plugging in, we get
\begin{align*}
\bb{E} \Big(2\sigma^2+(\mu_1-y_{i_1})^2+(\mu_2-y_{i_1})^2\Big)^2=17\sigma^4+\frac{3(\mu_1-\mu_2)^4}{2}+13\sigma^2(\mu_1-\mu_2)^2.
\end{align*}
Hence, we obtain
\begin{align*}
\bb{E}\left[(y_{i_1}-y_{i_2})^{2}(y_{i_1}-y_{i_4})^{2}\right] \le 7\Big(\sigma^2+\frac{(\mu_1-\mu_2)^2}{2}\Big)^{2}.
\end{align*}
which proves the claim.
\end{proof}

From Claim \ref{claim:tech}, we can conclude that 
\begin{align*}
\bb{E}(S^j_{2,t})^{2} \le 12\Big(\sigma^2+\frac{(\mu_1-\mu_2)^2}{4}\Big)^{2}.
\end{align*}
From this point onwards, the analysis in this lemma is very similar to Lemma \ref{lem:mean}. We can use Chebychev's inequality to say that 
\begin{align*}
\Pr\Big(\left|S^j_{2,t}-{\rm var}X\right| \ge \epsilon_2 \Big) \le \frac{{\rm var}S^j_{2,t}}{t\epsilon_2^2} \le \frac{\bb{E}(S^j_{2,t})^{2}}{t\epsilon_2^2}. 
\end{align*} 
Therefore, we obtain by noting that $t \ge 1$ as well, 
\begin{align*}
\Pr\Big(\left|S^j_{2,t}-{\rm var}X\right| \ge \epsilon_2 \Big) \le \frac{12(\sigma^2+(\mu_1-\mu_2)^2/4)^{2}}{t\epsilon_2^2} \le \frac{1}{3} 
\end{align*} 
for $t=O(\Big\lceil(\sigma^2+(\mu_1-\mu_2)^2)^{2}/\epsilon_2^2 \Big\rceil)$. At this point, doing the same analysis as in Lemma \ref{lem:mean} shows that $B=36\log \eta^{-1}$ batches of batchsize $t$ is sufficient to estimate the variance within an additive error of $\epsilon_2$ with probability at least $1-2\eta$. Therefore the total sample complexity required is $  T=O(\log \eta^{-1}\Big\lceil(\sigma^2+(\mu_1-\mu_2)^2)^{2}/\epsilon_2^2 \Big\rceil)$ thus proving the lemma.

\end{proof}

\section{Proof of Lemma \ref{lem:test}}{\label{sec:test_proof}}
 Suppose, we use $O\Big(\Big\lceil\frac{1}{\epsilon^2} \Big\rceil\log \eta^{-1}\Big)$ samples to recover $\hat{\mu_1}$ and $\hat{\mu_2}$ using the method of moments. According to the guarantee provided in Theorem \ref{thm:mom}, we must have with probability at least $1-1/\eta$, 
\begin{align*}
\left|\hat{\mu}_i-\mu \right| \le 2(\epsilon+\sqrt{\epsilon})\sqrt{\sigma^2+(\mu_1-\mu_2)^2} \quad \text{for } i=1,2.
\end{align*}
Therefore, we have
\begin{align*}
& \left|\mu_1-\mu_2\right|-\left|\mu_1-\hat{\mu}_1 \right|-\left|\mu_2-\hat{\mu}_2 \right| \le \left| \hat{\mu}_1-\hat{\mu_2} \right| \le \left|\mu_1-\mu_2\right|+\left|\mu_1-\hat{\mu}_1 \right|+\left|\mu_2-\hat{\mu}_2 \right|  \\
 & \left| \left| \hat{\mu}_1-\hat{\mu_2}\right|-\left| \mu_1-\mu_2\right| \right| \le 4(\epsilon+\sqrt{\epsilon})\sqrt{\sigma^2+(\mu_1-\mu_2)^2}.    
\end{align*} 

We will substitute $\epsilon=1/256$. In that case we have 
\begin{align*}
\left| \left| \hat{\mu}_1-\hat{\mu_2}\right|-\left| \mu_1-\mu_2\right| \right| \le \frac{17}{64}\sqrt{\sigma^2+(\mu_1-\mu_2)^2} \le \frac{17\sigma}{64} +\frac{17\left|\mu_1-\mu_2\right|}{64}.
\end{align*}
and therefore
\begin{align*}
-\frac{17\sigma}{64} +\frac{47\left|\mu_1-\mu_2\right|}{64}  \le \left| \hat{\mu}_1-\hat{\mu_2}\right| \le  \frac{17\sigma}{64} +\frac{81\left|\mu_1-\mu_2\right|}{64}.
\end{align*}
Hence, we have
\begin{align*}
-\frac{17\sigma}{81} +\frac{64\left|\hat{\mu}_1-\hat{\mu}_2\right|}{81}  \le \left| \mu_1-\mu_2\right| \le  \frac{17\sigma}{47} +\frac{64\left|\hat{\mu}_1-\hat{\mu}_2\right|}{47}.
\end{align*}
\paragraph{Case 1 ($\sigma \ge \gamma$):}
This implies that if $|\hat{\mu}_1-\hat{\mu}_2| \le 15\sigma/32$, then $|\mu_1-\mu_2| \le \sigma$ and we will use the Method of Moments (Algorithm \ref{algo:mom}). On the other hand, if $|\hat{\mu}_1-\hat{\mu}_2| \ge 15\sigma/32$, then $|\mu_1-\mu_2| \ge 13\sigma/81$ and we will use EM algorithm (Algorithm \ref{algo:em}). The sample complexity required is $O(\Big\lceil\log \eta^{-1}\Big\rceil)$ samples.
\paragraph{Case 2 ($\sigma \le \gamma$):}
This implies that if $|\hat{\mu}_1-\hat{\mu}_2| \le 15\gamma/32$, then $|\mu_1-\mu_2| \le \gamma$ and we will fit a single gaussian (Algorithm \ref{algo:single}) to recover the means. On the other hand, if $|\hat{\mu}_1-\hat{\mu}_2| \ge 15\gamma/32$, then $|\mu_1-\mu_2| \ge 13\gamma/81$ and we will use EM algorithm (Algorithm \ref{algo:em}). The sample complexity required is $O(\Big\lceil\log \eta^{-1}\Big\rceil)$ samples.

\section{Proof of Lemma \ref{lem:align}}{\label{sec:align_proof}}

We have 
\begin{align*}
&\mu_{1,1} = \langle \f{x}^1,\beta_1 \rangle \quad \mu_{1,2} = \langle \f{x}^1,\beta_2 \rangle \quad \mu_{2,1} = \langle \f{x}^2,\beta_1 \rangle \quad \mu_{2,2} = \langle \f{x}^2,\beta_2 \rangle  \\
&\mu_{\s{sum},1} = \langle \f{x}^1+\f{x}^2,\beta_1 \rangle \quad \mu_{\s{sum},2} = \langle \f{x}^1+\f{x}^2,\beta_2 \rangle \quad \mu_{\s{diff},1} = \langle \f{x}^1-\f{x}^2,\beta_1 \rangle \quad \mu_{\s{diff},2} = \langle \f{x}^1-\f{x}^2,\beta_2 \rangle.
\end{align*}  
For a particular unknown mean $\mu_{\cdot,\cdot}$, we will denote the corresponding recovered estimate by $\hat{\mu}_{\cdot,\cdot}$ and moreover, let us assume without loss of generality that $\pi_1,\pi_2,\pi_{\s{sum}},\pi_{\s{diff}}$ are all same and the identity permutation itself (but note that this fact is unknown). If all the unknown parameters are recovered upto an additive error of $\gamma$, then we must have 
\begin{align*}
\left| \hat{\mu}_{\s{sum},1}-\hat{\mu}_{1,1}-\hat{\mu}_{2,1} \right| \le \left| \hat{\mu}_{\s{sum},1}- \mu_{\s{sum},1}\right|+\left| \mu_{1,1}-\hat{\mu}_{1,1} \right|+\left| \mu_{2,1}-\hat{\mu}_{2,1} \right| \le 3\gamma.
\end{align*}
\begin{align*}
\left| \hat{\mu}_{\s{diff},1}-\hat{\mu}_{1,1}+\hat{\mu}_{2,1} \right| \le \left| \hat{\mu}_{\s{diff},1}- \mu_{\s{diff},1}\right|+\left| \mu_{1,1}-\hat{\mu}_{1,1} \right|+\left| \mu_{2,1}-\hat{\mu}_{2,1} \right| \le 3\gamma.
\end{align*}
On the other hand, we must have 
\begin{align*}
\left| \hat{\mu}_{\s{sum},1}-\hat{\mu}_{1,1}-\hat{\mu}_{2,2} \right| & \ge \left| \hat{\mu}_{\s{sum},1}-\mu_{\s{sum},1}+\mu_{1,1}-\hat{\mu}_{1,1}+\mu_{2,1}-\mu_{2,2}+\mu_{2,2}- \hat{\mu}_{2,2} \right|  \\
& \ge \left|\mu_{2,1}-\mu_{2,2}\right|-\left| \hat{\mu}_{\s{sum},1}- \mu_{\s{sum},1}\right|-\left| \mu_{1,1}-\hat{\mu}_{1,1} \right|-\left| \mu_{2,1}-\hat{\mu}_{2,1} \right|   \\
& \ge \left|\mu_{2,1}-\mu_{2,2}\right|-3\gamma.
\end{align*}
\begin{align*}
\left| \hat{\mu}_{\s{diff},1}-\hat{\mu}_{1,1}+\hat{\mu}_{2,2} \right| & \ge \left| \hat{\mu}_{\s{diff},1}-\mu_{\s{diff},1}+\mu_{1,1}-\hat{\mu}_{1,1}-\mu_{2,1}+\mu_{2,2}-\mu_{2,2}+ \hat{\mu}_{2,2} \right|  \\
& \ge \left|\mu_{2,1}-\mu_{2,2}\right|-\left| \hat{\mu}_{\s{diff},1}- \mu_{\s{diff},1}\right|-\left| \mu_{1,1}-\hat{\mu}_{1,1} \right|-\left| \mu_{2,1}-\hat{\mu}_{2,1} \right|   \\
& \ge \left|\mu_{2,1}-\mu_{2,2}\right|-3\gamma.
\end{align*}
\begin{align*}
\left| \hat{\mu}_{\s{sum},1}-\hat{\mu}_{1,2}-\hat{\mu}_{2,1} \right| & \ge \left| \hat{\mu}_{\s{sum},1}-\mu_{\s{sum},1}+\mu_{2,1}-\hat{\mu}_{2,1}+\mu_{1,1}-\mu_{1,2}+\mu_{1,2}- \hat{\mu}_{1,2} \right|  \\
& \ge \left|\mu_{1,1}-\mu_{1,2}\right|-\left| \hat{\mu}_{\s{sum},1}- \mu_{\s{sum},1}\right|-\left| \mu_{2,1}-\hat{\mu}_{2,1} \right|-\left| \mu_{1,2}-\hat{\mu}_{1,2} \right|   \\
& \ge \left|\mu_{1,1}-\mu_{1,2}\right|-3\gamma
\end{align*}
\begin{align*}
\left| \hat{\mu}_{\s{diff},1}-\hat{\mu}_{1,2}+\hat{\mu}_{2,1} \right| & \ge \left| \hat{\mu}_{\s{diff},1}-\mu_{\s{diff},1}-\mu_{2,1}+\hat{\mu}_{2,1}+\mu_{1,1}-\mu_{1,2}+\mu_{1,2}- \hat{\mu}_{1,2} \right|  \\
& \ge \left|\mu_{1,1}-\mu_{1,2}\right|-\left| \hat{\mu}_{\s{diff},1}- \mu_{\s{diff},1}\right|-\left| \mu_{2,1}-\hat{\mu}_{2,1} \right|-\left| \mu_{1,2}-\hat{\mu}_{1,2} \right|   \\
& \ge \left|\mu_{1,1}-\mu_{1,2}\right|-3\gamma
\end{align*}
\begin{align*}
\left| \hat{\mu}_{\s{sum},1}-\hat{\mu}_{1,2}-\hat{\mu}_{2,2} \right| & \ge \left| \hat{\mu}_{\s{sum},1}-\mu_{\s{sum},1}+\mu_{1,1}+\mu_{2,1}-\mu_{1,2}-\mu_{2,2}+\mu_{1,2}- \hat{\mu}_{1,2}+\mu_{2,2}- \hat{\mu}_{2,2} \right|  \\
& \ge \left|\mu_{1,1}+\mu_{2,1}-\mu_{1,2}-\mu_{2,2}\right|-\left| \hat{\mu}_{\s{sum},1}- \mu_{\s{sum},1}\right|-\left| \mu_{1,2}-\hat{\mu}_{1,2} \right|-\left| \mu_{2,2}-\hat{\mu}_{2,2} \right|   \\
& \ge \left|\mu_{1,1}+\mu_{2,1}-\mu_{1,2}-\mu_{2,2}\right|-3\gamma.
\end{align*}
\begin{align*}
\left| \hat{\mu}_{\s{diff},1}-\hat{\mu}_{1,2}+\hat{\mu}_{2,2} \right| & \ge \left| \hat{\mu}_{\s{diff},1}-\mu_{\s{diff},1}+\mu_{1,1}-\mu_{2,1}-\mu_{1,2}+\mu_{2,2}+\mu_{1,2}- \hat{\mu}_{1,2}-\mu_{2,2}+ \hat{\mu}_{2,2} \right|  \\
& \ge \left|\mu_{1,1}-\mu_{2,1}-\mu_{1,2}+\mu_{2,2}\right|-\left| \hat{\mu}_{\s{sum},1}- \mu_{\s{sum},1}\right|-\left| \mu_{1,2}-\hat{\mu}_{1,2} \right|-\left| \mu_{2,2}-\hat{\mu}_{2,2} \right|   \\
& \ge \left|\mu_{1,1}-\mu_{2,1}-\mu_{1,2}+\mu_{2,2}\right|-3\gamma.
\end{align*}
Similar guarantees also exist for $\mu_{\s{sum},2}$ and $\mu_{\s{diff},2}$ and therefore we must have 
\begin{align*}
&\left| \hat{\mu}_{\s{sum},2}-\hat{\mu}_{1,2}-\hat{\mu}_{2,2} \right| \le 3\gamma  \\
&\left| \hat{\mu}_{\s{diff},2}-\hat{\mu}_{1,2}+\hat{\mu}_{2,2} \right| \le 3\gamma   \\
&\left| \hat{\mu}_{\s{sum},2}-\hat{\mu}_{1,2}-\hat{\mu}_{2,1} \right| \ge |\mu_{2,1}-\mu_{2,2}|-3\gamma  \\
&\left| \hat{\mu}_{\s{diff},2}-\hat{\mu}_{1,2}+\hat{\mu}_{2,1} \right| \ge |\mu_{2,1}-\mu_{2,2}|-3\gamma   \\ 
&\left| \hat{\mu}_{\s{sum},2}-\hat{\mu}_{1,1}-\hat{\mu}_{2,2} \right| \ge |\mu_{1,1}-\mu_{1,2}|-3\gamma  \\ 
&\left| \hat{\mu}_{\s{diff},2}-\hat{\mu}_{1,1}+\hat{\mu}_{2,2} \right| \ge |\mu_{1,1}-\mu_{1,2}|-3\gamma   \\
&\left| \hat{\mu}_{\s{sum},2}-\hat{\mu}_{1,1}-\hat{\mu}_{2,1} \right| \ge \left|\mu_{1,1}+\mu_{2,1}-\mu_{1,2}-\mu_{2,2}\right|-3\gamma \\
&\left| \hat{\mu}_{\s{diff},2}-\hat{\mu}_{1,1}-\hat{\mu}_{2,1} \right| \ge  \left|\mu_{1,1}-\mu_{2,1}-\mu_{1,2}+\mu_{2,2}\right|-3\gamma  
\end{align*}
Let us consider the case when $|\mu_{1,1}-\mu_{1,2}| \ge 9\gamma$ and $|\mu_{2,1}-\mu_{2,2}| \ge 9\gamma$.  We will have 
\begin{align*}
&\left| \hat{\mu}_{\s{sum},1}-\hat{\mu}_{1,1}-\hat{\mu}_{2,1} \right| \le 3\gamma \quad \left| \hat{\mu}_{\s{sum},1}-\hat{\mu}_{1,1}-\hat{\mu}_{2,2} \right| \ge 6\gamma \quad \left| \hat{\mu}_{\s{sum},1}-\hat{\mu}_{1,2}-\hat{\mu}_{2,1} \right| \ge 6\gamma \\
&\left| \hat{\mu}_{\s{diff},1}-\hat{\mu}_{1,1}+\hat{\mu}_{2,1} \right| \le 3\gamma \quad \left| \hat{\mu}_{\s{diff},1}-\hat{\mu}_{1,1}+\hat{\mu}_{2,2} \right| \ge 6\gamma \quad \left| \hat{\mu}_{\s{diff},1}-\hat{\mu}_{1,2}+\hat{\mu}_{2,1} \right| \ge 6\gamma \\
&\left| \hat{\mu}_{\s{sum},2}-\hat{\mu}_{1,2}-\hat{\mu}_{2,2} \right| \le 3\gamma \quad \left| \hat{\mu}_{\s{sum},2}-\hat{\mu}_{1,1}-\hat{\mu}_{2,2} \right| \ge 6\gamma \quad \left| \hat{\mu}_{\s{sum},1}-\hat{\mu}_{1,2}-\hat{\mu}_{2,1} \right| \ge 6\gamma \\
&\left| \hat{\mu}_{\s{diff},2}-\hat{\mu}_{1,2}+\hat{\mu}_{2,2} \right| \le 3\gamma \quad \left| \hat{\mu}_{\s{diff},1}-\hat{\mu}_{1,1}+\hat{\mu}_{2,2} \right| \ge 6\gamma \quad \left| \hat{\mu}_{\s{diff},1}-\hat{\mu}_{1,2}+\hat{\mu}_{2,1} \right| \ge 6\gamma \\
\end{align*}
Moreover, at least one of the following two must be true:
\begin{align*}
&\left| \hat{\mu}_{\s{sum},1}-\hat{\mu}_{1,2}-\hat{\mu}_{2,2} \right| \ge  15\gamma \quad \text{and} \quad \left| \hat{\mu}_{\s{sum},2}-\hat{\mu}_{1,1}-\hat{\mu}_{2,1} \right| \ge 15\gamma \\
& \text{or} \quad \left| \hat{\mu}_{\s{diff},1}-\hat{\mu}_{1,2}+\hat{\mu}_{2,2} \right| \ge 15\gamma \quad \text{and} \quad \left| \hat{\mu}_{\s{diff},2}-\hat{\mu}_{1,1}+\hat{\mu}_{2,1} \right| \ge 15\gamma
\end{align*}
depending on whether $\mu_{1,1}-\mu_{1,2}$ and $\mu_{2,1}-\mu_{2,2}$ have the same sign or not. This shows that either for the sum query or for the difference query, only the correct set of means is closest to their corresponding value (sum or difference) and any wrong choice of means is away from that particular value (sum or difference). Hence the lemma is proved.

\section{Proof of Lemma \ref{lem:exist} and \ref{lem:prob}}{\label{sec:exist}}
\begin{proof}[Proof of Lemma \ref{lem:exist}]
Notice that for a particular query $\f{x}^i, i\in [m]$, the difference of the means $\mu_{i,1}-\mu_{i,2}$ is distributed according to
\begin{align*}
\mu_{i,1}-\mu_{i,2} \sim \ca{N}(0,||\f{\beta}^1-\f{\beta}^2||_2^2).
\end{align*}
Therefore, we have 
\begin{align}
\Pr(|\mu_{i,1}-\mu_{i,2}| \le 13\gamma) = \int_{-13\gamma}^{13\gamma}\frac{e^{-x^2/2||\f{\beta}^1-\f{\beta}^2||_2^2}}{\sqrt{2\pi ||\f{\beta}^1-\f{\beta}^2||_2^2}}dx  \le \frac{13\sqrt{2}\gamma}{\sqrt{\pi}||\f{\beta}^1-\f{\beta}^2||_2}.
\end{align}
where the upper bound is obtained by using $e^{-x^2/2||\f{\beta}^1-\f{\beta}^2||_2^2} \le 1$. Therefore the probability that for all the $m'$ queries $\{\f{x}^i\}_{i=1}^{m}$, the difference between the means is less than $13\gamma$ must be
\begin{align*}
\Pr(\bigcup_{i=1}^{m'} \mu_{i,1}-\mu_{i,2} \le 13\gamma)=\prod_{i=1}^{m'}\Pr(\mu_{i,1}-\mu_{i,2} \le 13\gamma) \le \Big(\frac{13\sqrt{2}\gamma}{\sqrt{\pi}||\f{\beta}^1-\f{\beta}^2||_2}\Big)^{m'} = e^{-m'\log \frac{\sqrt{\pi}||\f{\beta}^1-\f{\beta}^2||_2}{13\sqrt{2}\gamma}}
\end{align*}
Therefore for $m'=\Big\lceil\log \eta^{-1}/\log \frac{\sqrt{\pi}||\f{\beta}^1-\f{\beta}^2||_2}{13\sqrt{2}\gamma}\Big\rceil$, we have that 
$
\Pr(\bigcup_{i=1}^{m'} \mu_{i,1}-\mu_{i,2} \le 13\gamma) \le \eta.
$
\end{proof}

\begin{proof}[Proof of Lemma \ref{lem:prob}]
The proof of Lemma \ref{lem:prob} is very similar to the proof of Lemma \ref{lem:exist}. Again for a particular query $\f{x}^i, i\in [m]$, the difference of the means $\mu_{i,1}-\mu_{i,2}$ is distributed according to
\begin{align*}
\mu_{i,1}-\mu_{i,2} \sim \ca{N}(0,||\f{\beta}^1-\f{\beta}^2||_2^2).
\end{align*}
Therefore, we have for any constant $c_1>0$,
\begin{align*}
\Pr(|\mu_{i,1}-\mu_{i,2}| \le c_1 \sigma) = \int_{-c_1 \sigma}^{c_1 \sigma}\frac{e^{-x^2/2||\f{\beta}^1-\f{\beta}^2||_2^2}}{\sqrt{2\pi ||\f{\beta}^1-\f{\beta}^2||_2^2}}dx  \le \frac{\sqrt{2}c_1 \sigma}{\sqrt{\pi}||\f{\beta}^1-\f{\beta}^2||_2}.
\end{align*}
where the upper bound is obtained by using $e^{-x^2/2||\f{\beta}^1-\f{\beta}^2||_2^2} \le 1$. 
Hence the lemma is proved by substituting $c_2=\sqrt{2}c_1/\sqrt{\pi}$.
\end{proof}

\section{Proof of Theorem \ref{thm:main} ($\gamma =\Omega (\left|\left|\beta^1-\beta^2\right|\right|_2)$)}\label{sec:thm1_rem}.

\begin{algorithm}[htbp]
\caption{ \textsc{Recover unknown vectors $2$($\sigma,\gamma$)} Recover the unknown vectors  $\f{\beta}^1$ and $\f{\beta}^2$ \label{algo:main2}}
\begin{algorithmic}[1]
\STATE Set $m=c_s k \log n$ and $T=O\Big(\Big\lceil\sigma^2\log k/(\gamma-0.8\left|\left|\beta^1-\beta^2\right|\right|_2)^2 \Big\rceil \Big)$.
\STATE Sample $\f{x}^1,\f{x}^2,\dots,\f{x}^m \sim \ca{N}(\f{0},\f{I}_n)$ independently.
\FOR{$i=1,2,\dots,m$}
         \STATE Compute $\hat{\mu}_{i}$ by running Algorithm \textsc{Fit a single gaussian }($\f{x}^i,T$).
\ENDFOR
\STATE Set $\f{u}$ to be the $m$-dimensional vector whose $i^{\s{th}}$ element is $\hat{\mu}_i$.
\STATE Set $\f{A}$ to be the $m \times n$ matrix such that its $i^{\s{th}}$ row is $\f{x}^i$, with each entry normalized by $\sqrt{m}$.
\STATE Set $\hat{\f{\beta}}$ to be the solution of the optimization problem $\min_{\f{z} \in \bb{R}^n} ||\f{z}||_1 \; \text{s.t.} \; ||\f{A}\f{z}-\frac{1}{\sqrt{m}}\f{u}||_2 \le \gamma$
\STATE Return $\hat{\f{\beta}}$.
\end{algorithmic}
\end{algorithm}

We will first assume $\gamma >0.8\left|\left| \beta^1-\beta^2 \right|\right|_2$ to prove the claim, and later extend this to any $\gamma =\Omega (\left|\left|\beta^1-\beta^2\right|\right|_2).$
The recovery procedure in the setting when $\gamma >0.8\left|\left| \beta^1-\beta^2 \right|\right|_2$ is described in Algorithm \ref{algo:main2}. We will start by proving the following claim
\begin{claim}
Algorithm \ref{algo:main2} returns a vector $\f{u}$ of length $m$ using $O\Big(m\Big\lceil\sigma^2\log \eta^{-1}/\epsilon^2 \Big\rceil\Big)$ queries such that 
\begin{align*}
    &\left|\f{u}[i]-\langle \f{x}^i,\beta^{1} \rangle \right| \le \epsilon+\frac{\left|\langle \f{x}^i,\beta^1-\beta^2\rangle\right|}{2} \\  
    &\left|\f{u}[i]-\langle \f{x}^i,\beta^{2} \rangle \right| \le
    \epsilon+\frac{\left|\langle \f{x}^i,\beta^1-\beta^2\rangle\right|}{2}
\end{align*}
 for all $i \in [m]$ with probability at least $1-m\eta$. 
\end{claim}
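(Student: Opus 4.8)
The plan is to reduce to a single query at a time and invoke Lemma~\ref{thm:mean2} (the single-Gaussian fitting guarantee adapted from \cite{daskalakis2016ten}). Fix $i \in [m]$ and abbreviate $\mu_1 := \langle \f{x}^i, \f{\beta}^1 \rangle$ and $\mu_2 := \langle \f{x}^i, \f{\beta}^2 \rangle$. The responses collected by repeatedly querying $\f{x}^i$ are i.i.d.\ draws from the equally weighted mixture $\ca{M} = \tfrac12 \ca{N}(\mu_1,\sigma^2) + \tfrac12 \ca{N}(\mu_2,\sigma^2)$, whose component means average to $(\mu_1+\mu_2)/2 = \bb{E}_{X \sim \ca{M}} X$. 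Running \textsc{Fit a single gaussian}($\f{x}^i,T$) with $T = O(\lceil \sigma^2 \log \eta^{-1}/\epsilon^2 \rceil)$ therefore returns, by Lemma~\ref{thm:mean2}, an estimate $\hat{\mu}_i = \f{u}[i]$ satisfying $|\hat{\mu}_i - (\mu_1+\mu_2)/2| \le \epsilon$ with probability at least $1-\eta$.

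On this event the triangle inequality gives
\begin{align*}
|\f{u}[i] - \langle \f{x}^i, \f{\beta}^1 \rangle| &\le \Big| \hat{\mu}_i - \tfrac{\mu_1+\mu_2}{2} \Big| + \Big| \tfrac{\mu_1+\mu_2}{2} - \mu_1 \Big| \le \epsilon + \frac{|\mu_1 - \mu_2|}{2} = \epsilon + \frac{|\langle \f{x}^i, \f{\beta}^1 - \f{\beta}^2 \rangle|}{2},
\end{align*}
and symmetrically for $\f{\beta}^2$, since $|(\mu_1+\mu_2)/2 - \mu_2|$ equals $|\mu_1-\mu_2|/2$ as well. Taking a union bound over $i = 1,\dots,m$, all $2m$ inequalities hold simultaneously with probability at least $1-m\eta$. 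The total number of oracle queries is $mT = O\big(m \lceil \sigma^2 \log \eta^{-1}/\epsilon^2 \rceil \big)$, as claimed; the instantiation used in Algorithm~\ref{algo:main2} corresponds to the choice $\epsilon = \gamma - 0.8\|\f{\beta}^1-\f{\beta}^2\|_2 > 0$ together with $\eta = \Theta(1/k)$.

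There is essentially no obstacle in this claim itself: the only idea is that a single-Gaussian fit recovers the midpoint of the two means, so the estimation error against either individual mean is just the half-gap $|\langle \f{x}^i, \f{\beta}^1-\f{\beta}^2\rangle|/2$ plus the controllable term $\epsilon$. The actual difficulty in this regime of Theorem~\ref{thm:main} is deferred to the downstream argument: one feeds $\f{u}$ into basis pursuit against the (with high probability) RIP matrix $\f{A}$ and must argue that $\|\f{A}\f{\beta}^i - \f{u}/\sqrt{m}\|_2$ is small. Here the extra $|\langle \f{x}^j, \f{\beta}^1-\f{\beta}^2\rangle|/2$ terms aggregate over the $m$ Gaussian rows (each $\langle \f{x}^j, \f{\beta}^1-\f{\beta}^2\rangle \sim \ca{N}(0,\|\f{\beta}^1-\f{\beta}^2\|_2^2)$, so the sum of squares concentrates around $m\|\f{\beta}^1-\f{\beta}^2\|_2^2$), contributing a term of order $\|\f{\beta}^1-\f{\beta}^2\|_2$ to the $\ell_2$ residual; this is absorbed into $O(\gamma)$ precisely because $\gamma = \Omega(\|\f{\beta}^1-\f{\beta}^2\|_2)$ in this case, and the $0.8$ slack in the definition of $\epsilon$ is what makes $\f{\beta}^i$ feasible for the program with constraint level $\gamma$.
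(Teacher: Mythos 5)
Your proof is correct and is essentially identical to the paper's: invoke Lemma~\ref{thm:mean2} to get $|\hat{\mu}_i-(\mu_1+\mu_2)/2|\le\epsilon$ per query with a batch of size $O(\lceil\sigma^2\log\eta^{-1}/\epsilon^2\rceil)$, apply the triangle inequality to bound the distance to each individual mean by $\epsilon$ plus the half-gap, and union bound over the $m$ queries. The additional remarks about the downstream basis-pursuit step are accurate but belong to the subsequent claims, not this one.
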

\begin{proof}
In Algorithm \ref{algo:main2}, for each query $\f{x}^i \sim \ca{N}(\f{0},\f{I}_n)$, we can use a batchsize of $O\Big(\Big\lceil\sigma^2\log \eta^{-1}/\epsilon^2 \Big\rceil\Big)$ to recover $\hat{\mu}_i$ such that 
\begin{align*}
    \left|\hat{\mu}_i-\frac{\langle \f{x}^i,\beta^1 \rangle+\langle \f{x}^i,\beta^2 \rangle}{2}\right|\le \epsilon
\end{align*}
with probability at least $1-\eta$ according to the guarantees of Lemma \ref{thm:mean2}. We therefore have
\begin{align*}
    &\left|\hat{\mu}_i-\langle \f{x}^i,\beta^1 \rangle\right| \le \left|\hat{\mu}^i-\frac{\langle \f{x}^i,\beta^1 \rangle 
    +\langle \f{x}^i,\beta^2 \rangle}{2}\right|\\
    &+\frac{\left|\langle \f{x}^i,\beta^1-\beta^2\rangle\right|}{2}\le \epsilon+\frac{\left|\langle \f{x}^i,\beta^1-\beta^2\rangle\right|}{2}.
\end{align*}
where the last inequality follows by using the guarantees on $\hat{\mu}_i$. We can show a similar chain of inequalities for $\left|\hat{\mu}_i-\langle \f{x}^i,\beta^2 \rangle\right|$ and finally take a union bound over all $i \in [m]$ to conclude the proof of the claim.
\end{proof}
Next, let us define the random variable $\omega_i \triangleq \left|\langle \f{x}^i,\beta^1-\beta^2 \rangle\right|$ where the randomness is over $\f{x}^i$. Subsequently let us define the $m$-dimensional vector $\f{b}$ whose $i^{\s{th}}$ element is $\epsilon+\omega_i/2$. Again, for $m \ge c_s k\log n$, let $\f{A}$ denote the matrix whose $i^{\s{th}}$ row is $\f{x}^i$ normalized by $\sqrt{m}$.
\begin{claim}
We must have
\begin{align*}
\left|\left|\f{A}\f{\beta}^{1}-\frac{\f{u}}{\sqrt{m}}\right|\right|_2 \le \frac{\left|\left|\f{b}\right|\right|_2}{\sqrt{m}} \; \& \; 
\left|\left|\f{A}\f{\beta}^{2}-\frac{\f{u}}{\sqrt{m}}\right|\right|_2 \le \frac{\left|\left|\f{b}\right|\right|_2}{\sqrt{m}}.
\end{align*}
\end{claim}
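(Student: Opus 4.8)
The plan is to prove the bound coordinate-by-coordinate and then aggregate into the $\ell_2$ norm. First I would unwind the definitions: since the $i^{\s{th}}$ row of $\f{A}$ is $\f{x}^i$ normalized by $\sqrt{m}$, the $i^{\s{th}}$ coordinate of $\f{A}\f{\beta}^{1}$ equals $\langle \f{x}^i,\f{\beta}^{1}\rangle/\sqrt{m}$, while the $i^{\s{th}}$ coordinate of $\f{u}/\sqrt{m}$ is $\hat{\mu}_i/\sqrt{m}$. Hence the $i^{\s{th}}$ coordinate of the residual $\f{A}\f{\beta}^{1}-\f{u}/\sqrt{m}$ is exactly $(\langle \f{x}^i,\f{\beta}^{1}\rangle-\hat{\mu}_i)/\sqrt{m}$, and the problem reduces to controlling the numerators.

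Next I would invoke the preceding claim, which (on the high-probability event of that claim, of probability at least $1-m\eta$) gives $|\hat{\mu}_i-\langle \f{x}^i,\f{\beta}^{1}\rangle|\le \epsilon+\omega_i/2 = b_i$ for every $i\in[m]$. Squaring this entrywise bound, summing over $i$, and dividing by $m$ yields
\begin{align*}
\left\|\f{A}\f{\beta}^{1}-\frac{\f{u}}{\sqrt{m}}\right\|_2^2 &= \frac{1}{m}\sum_{i=1}^{m}\left(\langle \f{x}^i,\f{\beta}^{1}\rangle-\hat{\mu}_i\right)^2 \\
&\le \frac{1}{m}\sum_{i=1}^{m} b_i^2 = \frac{\left|\left|\f{b}\right|\right|_2^2}{m},
\end{align*}
and taking square roots gives the first inequality. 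The second inequality, for $\f{\beta}^{2}$, follows verbatim because the preceding claim simultaneously bounds $|\hat{\mu}_i-\langle \f{x}^i,\f{\beta}^{2}\rangle|$ by the same quantity $b_i$; no separate argument is needed.

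I do not expect any real obstacle here: the statement is simply the $\ell_2$-aggregation of the entrywise error bound from the previous claim, with the common $\sqrt{m}$-normalization of $\f{A}$, $\f{u}$, and $\f{b}$ cancelling cleanly. The only points to be careful about are (i) keeping the conclusion on the same high-probability event as the prior claim, and (ii) remembering that $\f{b}$ is itself a random vector (through the $\omega_i = |\langle \f{x}^i,\f{\beta}^{1}-\f{\beta}^{2}\rangle|$), so this is a pointwise inequality on that event; the genuinely substantive work — a Gaussian concentration bound on $\left|\left|\f{b}\right|\right|_2$ in terms of $\epsilon$ and $\left|\left|\f{\beta}^{1}-\f{\beta}^{2}\right|\right|_2$, to be combined with the RIP property of $\f{A}$ for the basis-pursuit recovery guarantee — is deferred to the subsequent steps and does not enter this claim.
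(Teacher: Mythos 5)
Your proof is correct and matches the paper's (the paper simply declares the claim ``immediate from the definition of $\f{A}$ and $\f{b}$'', which is exactly the coordinatewise unwinding and $\ell_2$-aggregation you spell out). Your added care about staying on the high-probability event of the preceding claim and about $\f{b}$ being random is consistent with how the paper uses the bound afterward.
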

\begin{proof}
The proof of the claim is immediate from definition of $\f{A}$ and $\f{b}$.
\end{proof}
Next, we show high probability bounds on $\ell_2$-norm of the vector $\f{b}$ in the following claim.
\begin{claim}{\label{claim:conc}}
We must have $\frac{\left|\left|\f{b}\right|\right|_2^2}{m} \le 2\epsilon^2+0.64\left|\left|\beta^1-\beta^2\right|\right|_2^2$ with probability at least $1-O(e^{-m})$. 
\end{claim}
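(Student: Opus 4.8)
The plan is to expand $\|\f{b}\|_2^2$ coordinatewise and reduce the claim to a single $\chi^2$ concentration bound. Write $\Delta := \f{\beta}^1-\f{\beta}^2$, and recall that the $i^{\s{th}}$ coordinate of $\f{b}$ is $\epsilon+\omega_i/2$ with $\omega_i=|\langle \f{x}^i,\Delta\rangle|$, and that $\epsilon$ is a fixed positive number while the $\f{x}^i$ are the source of randomness. First I would apply the elementary inequality $(u+v)^2\le 2u^2+2v^2$ in each coordinate, with $u=\epsilon$ and $v=\omega_i/2$, and sum over $i\in[m]$:
\begin{align*}
\|\f{b}\|_2^2 \;\le\; 2m\epsilon^2 + \tfrac12\sum_{i=1}^m \omega_i^2 .
\end{align*}
(If one ever needed a constant sharper than $0.64$, the split $(u+v)^2\le(1+a)u^2+(1+a^{-1})v^2$ with $a$ optimized would be used instead; $a=1$ already suffices here.)

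Second, I would identify the distribution of $\sum_i\omega_i^2$. Since the $\f{x}^i$ are independent $\ca{N}(\f{0},\f{I}_n)$ vectors, $\langle \f{x}^i,\Delta\rangle\sim \ca{N}(0,\|\Delta\|_2^2)$, so $\omega_i^2=\|\Delta\|_2^2\,Z_i^2$ with $Z_i\overset{\text{iid}}{\sim}\ca{N}(0,1)$, and therefore $\sum_{i=1}^m\omega_i^2=\|\Delta\|_2^2\,W$ where $W\sim\chi^2_m$. Third, I would invoke a standard $\chi^2$ upper tail bound (Laurent--Massart): for all $x>0$, $\Pr\!\big[W\ge m+2\sqrt{mx}+2x\big]\le e^{-x}$. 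Taking $x=cm$ for a small enough absolute constant $c>0$ — any $c$ with $2\sqrt{c}+2c<0.28$ works, e.g.\ $c=1/100$ — gives $W\le 1.28\,m$ with probability at least $1-e^{-cm}$. On that event,
\begin{align*}
\frac{\|\f{b}\|_2^2}{m} \;\le\; 2\epsilon^2 + \frac{\|\Delta\|_2^2\,W}{2m} \;\le\; 2\epsilon^2 + 0.64\,\|\f{\beta}^1-\f{\beta}^2\|_2^2 ,
\end{align*}
which is the claimed bound; the failure probability $e^{-cm}$ is of the form $e^{-\Omega(m)}$, i.e.\ the $O(e^{-m})$ asserted in the statement.

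There is essentially no obstacle here: the statement is a one‑parameter concentration inequality, and after the coordinatewise split everything rests on the textbook $\chi^2_m$ tail. The only point needing a moment's care is the numerical bookkeeping — checking that the factor $2$ in front of $\epsilon^2$ and the constant $0.64$ in front of $\|\f{\beta}^1-\f{\beta}^2\|_2^2$ leave enough slack to absorb the fluctuation of $W$ while keeping the deviation event at probability $e^{-\Omega(m)}$. Since $\expect W/m=1$ and $2\times 0.64=1.28>1$ with comfortable margin, the constants fit with room to spare.
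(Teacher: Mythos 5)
Your proposal is correct and follows essentially the same route as the paper: the coordinatewise split $(\epsilon+\omega_i/2)^2\le 2\epsilon^2+\omega_i^2/2$, the observation that $\sum_i\omega_i^2$ is $\|\f{\beta}^1-\f{\beta}^2\|_2^2$ times a $\chi^2_m$ variable, and a standard $\chi^2$ upper-tail bound yielding $\sum_i\omega_i^2\le 1.28\,m\|\f{\beta}^1-\f{\beta}^2\|_2^2$ with probability $1-e^{-\Omega(m)}$ (the paper cites Lemma 1.2 of the Boche et al.\ survey with $\rho=0.28$ where you use Laurent--Massart, but these are interchangeable). The only cosmetic difference is that your stated failure probability $e^{-cm}$ is $e^{-\Omega(m)}$ rather than literally $O(e^{-m})$, but the paper's own bound has the same feature, so this is consistent with the claim as intended.
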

\begin{proof}
Notice that
\begin{align*}
    \frac{\left|\left|\f{b}\right|\right|_2^2}{m} \le \frac{1}{m}\sum_{i=1}^{m}\Big(\epsilon+\frac{\omega_i}{2}\Big)^2\le \frac{2}{m}\sum_{i=1}^{m}\Big(\epsilon^2+\frac{\omega_i^2}{4}\Big).
\end{align*}
Using the fact that $\f{x}^i \sim \ca{N}(\f{0},\f{I}_n)$ and by definition, we must have that $\omega_i$ is a random variable distributed according to $\ca{N}(0,\left|\left|\beta^1-\beta^2\right|\right|_2)$. Therefore, we have (see Lemma 1.2 \cite{boche2015survey})
\begin{align*}
    &\Pr\Bigg(\left|\sum_{i=1}^{m}\omega_i^2 - m\left|\left|\beta^1-\beta^2\right|\right|_2^2\right| \ge m\rho\left|\left|\beta^1-\beta^2\right|\right|_2^2\Bigg) \\
    &\le 2\exp\Big(-\frac{m}{2}\Big(\frac{\rho^2}{2}-\frac{\rho^3}{3}\Big)\Big)
\end{align*}
for $0<\rho<1$. Therefore, by substituting $\rho=0.28$, we get that 
\begin{align*}
    \frac{2}{m}\sum_{i=1}^{m}\Big(\epsilon^2+\frac{\omega_i^2}{4}\Big) \le 2\epsilon^2+0.64\left|\left|\beta^1-\beta^2\right|\right|_2^2
\end{align*}
with probability at least $1-O(e^{-m})$.
\end{proof}
From Claim \ref{claim:conc}, we get 
\begin{align*}
 \frac{\left|\left|\f{b}\right|\right|_2}{\sqrt{m}}  \le \sqrt{2}\epsilon+0.8\left|\left|\beta^1-\beta^2\right|\right|_2.   
\end{align*}
where we use the inequality $\sqrt{a+b}\le \sqrt{a}+\sqrt{b}$ for $a,b>0$. Subsequently we solve the following convex optimization problem
\begin{align*}
\min_{\f{z} \in \bb{R}^n} ||\f{z}||_1 \; \text{s.t.} \; ||\f{A}\f{z}-\frac{\f{u}}{\sqrt{m}}||_2 \le  \gamma
\end{align*}
where $\gamma=\sqrt{2}\epsilon+0.8\left|\left|\beta^1-\beta^2\right|\right|_2$
in order to recover $\hat{\beta}$ and return it as estimate of both $\beta^1,\beta^2$. For $m=O(k\log n), \eta = (m\log n)^{-1}$ and $\sqrt{2}\epsilon=\gamma-0.8\left|\left|\beta^1-\beta^2\right|\right|_2$, the number of queries required is $O\Big(k\log n\Big\lceil \sigma^2\log k/(\gamma-0.8\left|\left|\beta^1-\beta^2\right|\right|_2)^2\Big\rceil \Big)$. Further, by using the theoretical guarantees provided in Theorem 1.6 in \cite{boche2015survey}, we obtain the guarantees of the main theorem with error probability atmost $o(1)$. Again, by substituting the definition of the Noise Factor $\s{NF}=\gamma/\sigma$ and the Signal to Noise ratio
$\s{SNR}=O(\left|\left|\beta^1-\beta^2\right|\right|_2^2/\sigma^2)$, we obtain the query complexity to be
$$O\Big(k\log n\Big\lceil \frac{\log k}{(\s{NF}-0.8\sqrt{\s{SNR}})^2}\Big\rceil\Big).$$
Now let us assume any $\gamma = \Omega(\left|\left|\beta^1-\beta^2\right|\right|_2).$ If the desired $\gamma < \left|\left|\beta^1-\beta^2\right|\right|_2$, then one can just define $\gamma'= \left|\left|\beta^1-\beta^2\right|\right|_2$ and obtain a precision $\gamma'$ which is a constant factor within $\gamma$. Further, the query complexity also becomes independent of the noise factor since $\s{NF}=\sqrt{\s{SNR}}$ for this choice of $\gamma'$ and thus we obtain the promised query complexity in Theorem \ref{thm:main}.

\section{Discussion on Noiseless Setting $\sigma=0$}\label{sec:discuss}
\paragraph{Step 1:} In the noiseless setting, we obtain $m=O(k\log n)$ query vectors $\f{x}^1,\f{x}^2,\dots,\f{x}^m$ sampled i.i.d according to $\ca{N}(\f{0},\f{I}_n)$ and repeat each of them for $2\log m$ times. For a particular query $\f{x}_i$, the probability that we do not obtain any samples from $\f{\beta}^1$ or $\f{\beta}^2$ is at most $(1/2)^{2\log m}$. We can take a union bound  to conclude that for all queries, we obtain samples from both $\f{\beta}^1$ and $\f{\beta}^2$ with probability at least $1-O(m^{-1})$. Further note that for each query $\f{x}^i$, $\langle \f{x}^i,\beta^1-\beta^2 \rangle$ is distributed according to $\ca{N}(0,\left|\left|\beta^1-\beta^2\right|\right|_2^2)$ and therefore, it must happen with probability $1$ that $\langle \f{x}^i,\beta^1 \rangle \neq \langle \f{x}^i,\beta^1 \rangle$. Thus for each query $\f{x}_i$, we can recover the tuple $(\langle \f{x}^i,\beta^1 \rangle,\langle \f{x}^i,\beta^2 \rangle)$ but we cannot recover the ordering i.e. we do not know which element of the tuple corresponds to $\beta^1$ and which one to $\beta^2$.

\paragraph{Step 2:} Note that we are still left with the alignment step where we need to cluster the $2m$ recovered parameters $\{(\langle \f{x}^i,\beta^1 \rangle,\langle \f{x}^i,\beta^2 \rangle)\}_{i=1}^{m}$ into two clusters of size $m$ each so that there exists exactly one element from each tuple in each of the two clusters and all the elements in the same cluster correspond to the same unknown vector. In order to complete this step, we use ideas from \cite{KrishnamurthyM019}. We query $\f{x}_1+\f{x}_i$  and $\f{x}_1-\f{x}_i$ for all $i \neq 1$ each for $2\log m$ times to the oracle and recover the tuples $(\langle \f{x}^1+\f{x}^i,\beta^1 \rangle,\langle \f{x}^1+\f{x}^i,\beta^2 \rangle)$ and $(\langle \f{x}^1-\f{x}^i,\beta^1 \rangle,\langle \f{x}^1-\f{x}^i,\beta^2 \rangle)$ for all $i \neq 1$. For a particular $i \in [m]\setminus\{1\}$, we will choose two elements (say $a$ and $b$) from the pairs $(\langle \mathbf{x}_1,\beta^1 \rangle,\langle \mathbf{x}_1, \beta^2 \rangle)$ and $(\langle \mathbf{x}_i,\beta^1 \rangle,\langle \mathbf{x}_i, \beta^2 \rangle)$ (one element from each pair) such that their sum belongs to the pair $\langle \mathbf{x}_1+\mathbf{x}_i,\beta^1 \rangle,\langle \mathbf{x}_1+\mathbf{x}_i,\beta^2 \rangle$ and their difference belongs to the pair $\langle \mathbf{x}_1-\mathbf{x}_i,\beta^1 \rangle,\langle \mathbf{x}_1-\mathbf{x}_i,\beta^2 \rangle$. In our algorithm, we will put $a,b$ into the same cluster and the other two elements into the other cluster.
From construction, we must put $(\langle \mathbf{x}_1,\beta^1 \rangle, \langle \mathbf{x}_i,\beta^1 \rangle)$ in one cluster and $(\langle \mathbf{x}_1,\beta^2 \rangle, \langle \mathbf{x}_i,\beta^2 \rangle)$
in other. Note that a failure in this step is not possible because the $2m$ recovered  parameters are different from each other with probability $1$.

\paragraph{Step 3:} Once we have clustered the samples, we have reduced our problem to the usual compressed sensing setting (with only $1$ unknown vector) and therefore we can run the well known convex optimization routine in order to recover the unknown vectors $\beta^1$ and $\beta^2$. The total query complexity is $O(k\log n\log k)$.

\section{`Proof of Concept' Simulations}
The methods of parameter recovery in Gaussian mixtures are compared in Fig~\ref{fig:corrinfvaryq}. As claimed in Sec.~\ref{sec:GMM}, the EM starts performing better than the method of moments when the gap between the parameters is large.

We have also run Algorithm \ref{algo:main} for different set of pairs of sparse vectors and example recovery results for visualization are shown in Figures~\ref{fig:varyn} and \ref{fig:varyk}. Note that, while quite accurate reconstruction is possible the vectors are not reconstructed in order, as to be expected.
\begin{figure*}[htbp]
  \begin{subfigure}[htbp]{\textwidth}
    \centering 
    \includegraphics[scale = 0.6]{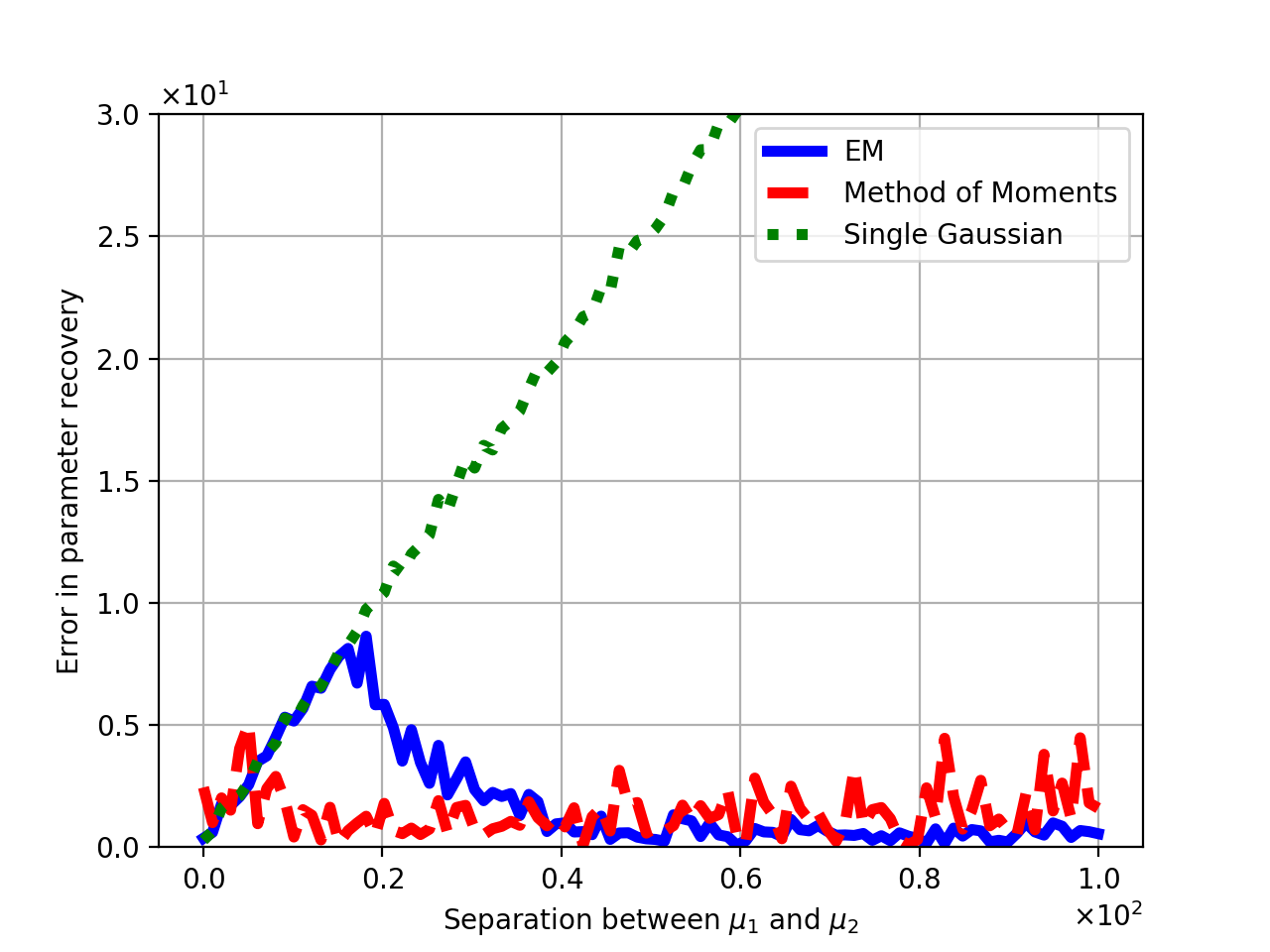}
    \caption{Comparison of the three techniques for recovery of parameters of a Gaussian mixture with $1000$ samples (see Algorithms \ref{algo:em},\ref{algo:mom} and \ref{algo:single}). The error in parameter recovery is plotted with separation between $\mu_1$ and $\mu_2$ (by keeping $\mu_1$ fixed at $0$ and varying $\mu_2$).}
          ~\label{fig:corrinfvaryq}
  \end{subfigure}
  \hfill
  \begin{subfigure}[htbp]{0.48\textwidth}
     \includegraphics[scale =  0.5]{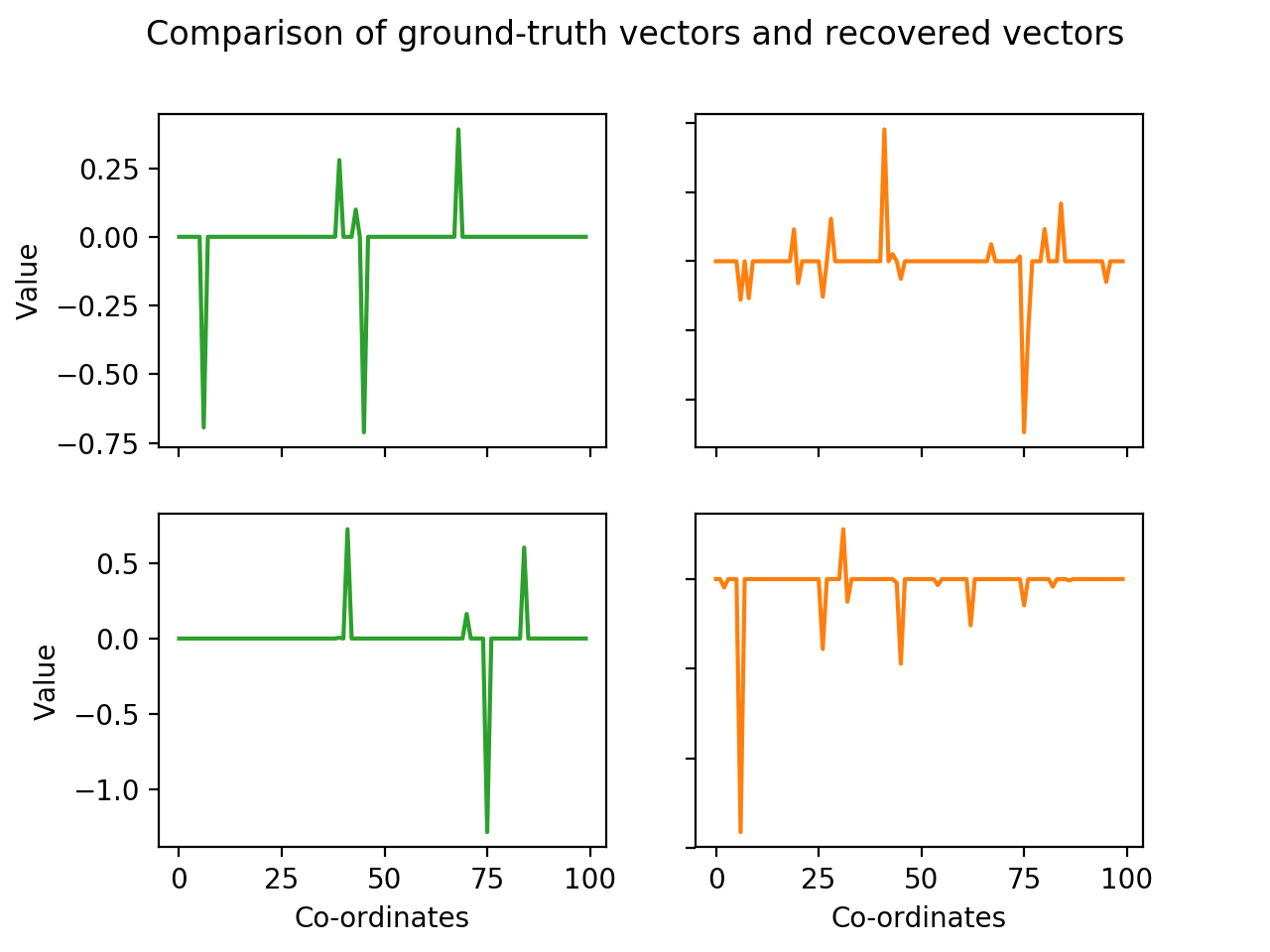}
     \caption{\small The $100$-dimensional ground truth vectors $\beta^1$ and $\beta^2$ with sparsity $k=5$ plotted in green (left) and the recovered vectors (using Algorithm \ref{algo:main}) $\hat{\beta}^1$ and $\hat{\beta}^2$ plotted in orange (right) using a batch-size $\sim 100$ for each of $150$ random gaussian queries. The order of the recovered vectors and the ground truth vectors is reversed.}
          ~\label{fig:varyn}
  \end{subfigure}\hfill 
\begin{subfigure}[htbp]{0.48 \textwidth}
   \includegraphics[scale = 0.5]{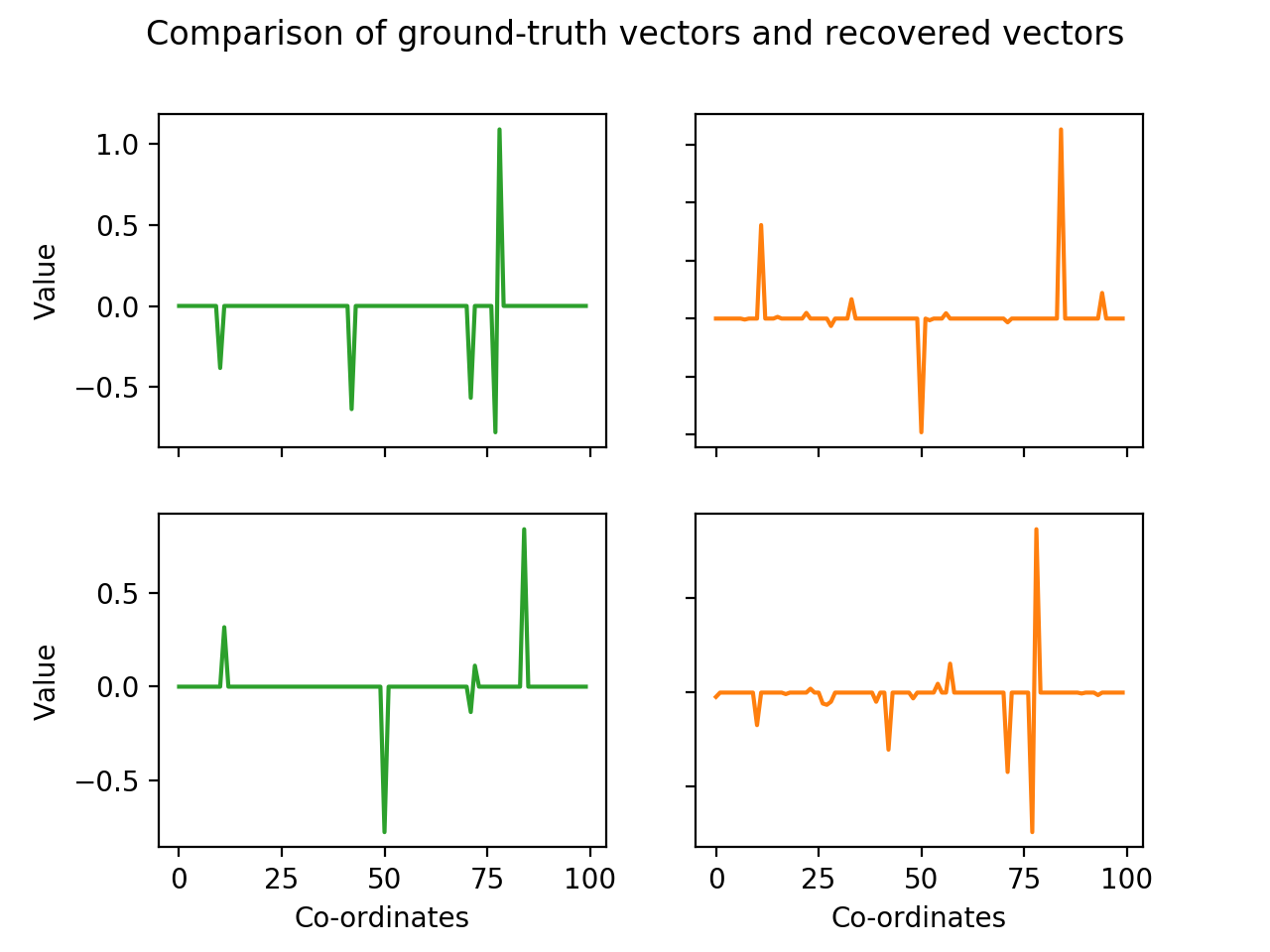}
   \caption{\small The $100$-dimensional ground truth vectors $\beta^1$ and $\beta^2$ with sparsity $k=5$ plotted in green (left) and the recovered vectors (using Algorithm \ref{algo:main}) $\hat{\beta}^1$ and $\hat{\beta}^2$ plotted in orange (right) using a batch-size $\sim 600$ for each of $150$ random gaussian queries. The order of the recovered vectors and the ground truth vectors is reversed.}
       ~\label{fig:varyk}
 \end{subfigure}%

\hfill

 \caption{\small Simulation results of our techniques.}
\end{figure*}

\end{document}